\documentclass{article}

\usepackage{microtype}
\usepackage{graphicx}
\usepackage{subcaption}
\usepackage{booktabs} 

\usepackage{hyperref}

\usepackage[preprint]{icml2026}

\usepackage{hyperref}
\usepackage{url}
\usepackage{amsmath,amssymb,amsthm}
\usepackage{bm}
\usepackage{mathtools}
\usepackage{booktabs}
\usepackage{multirow}
\usepackage{makecell}
\usepackage{graphicx}

\usepackage[capitalize,noabbrev]{cleveref}

\theoremstyle{plain}
\newtheorem{theorem}{Theorem}[section]
\newtheorem{proposition}[theorem]{Proposition}
\newtheorem{lemma}[theorem]{Lemma}

\theoremstyle{definition}
\newtheorem{definition}[theorem]{Definition}

\theoremstyle{remark}

\usepackage[textsize=tiny]{todonotes}

\begin{document}

\twocolumn[
  \icmltitle{Is Softmax Loss all you need? A Principled Analysis of Softmax-family Loss}



  \icmlsetsymbol{equal}{*}

  \begin{icmlauthorlist}
    \icmlauthor{Yuanhao Pu}{aids,lab}
    \icmlauthor{Defu Lian}{cs,lab}
    \icmlauthor{Enhong Chen}{cs,lab}
  \end{icmlauthorlist}

  \icmlaffiliation{aids}{School of Artificial Intelligence \& Data Science, University of Science \& Technology of China, Hefei, China}
  \icmlaffiliation{cs}{School of Computer Science \& Technology, University of Science \& Technology of China, Hefei, China}
  \icmlaffiliation{lab}{State Key Laboratory of Cognitive Intelligence, China}

  \icmlcorrespondingauthor{Defu Lian}{liandefu@ustc.edu.cn}

  \icmlkeywords{Machine Learning, ICML}

  \vskip 0.3in
]



\printAffiliationsAndNotice{}  
\begin{abstract}
\textbf{The Softmax Loss} is one of the most widely employed surrogate objectives for classification and ranking tasks. To elucidate its theoretical properties, the \textbf{Fenchel–Young} framework situates it as a canonical instance within a broad family of surrogates. Concurrently, another line of research has addressed scalability when the number of classes is exceedingly large, in which numerous approximations have been proposed to retain the benefits of the exact objective while improving efficiency. Building on these two perspectives, we present a principled investigation of the \textbf{Softmax-family} losses. We examine whether different surrogates achieve consistency with classification and ranking metrics, and analyze their gradient dynamics to reveal distinct convergence behaviors. We also introduce a systematic bias–variance decomposition for approximate methods that provides convergence guarantees, and further derive a per-epoch complexity analysis, showing explicit trade-offs between effectiveness and efficiency. Extensive experiments on a representative task demonstrate a strong alignment between consistency, convergence, and empirical performance. Together, these results establish a principled foundation and offer practical guidance for loss selections in large-class machine learning applications.

\end{abstract}


\section{Introduction}
\label{sec:intro}

The \textbf{Softmax cross-entropy loss} has become one of the most widely adopted objectives in modern machine learning, underpinning state-of-the-art results in domains such as language modeling, machine translation, computer vision, and recommender systems \citep{mikolov2013word2vec,sutskever2014sequence,he2016resnet}. Its widespread adoption can be attributed to several appealing properties: a smooth and differentiable formulation well-suited for gradient-based optimization, a probabilistic interpretation with geometric intuition, and strong alignment with classification and ranking objectives. These features have established it as a standard across diverse architectures and optimizers. Representative examples include Transformer \citep{vaswani2017attention} and GPT models \citep{brown2020language}, whose final Softmax layer, combined with cross-entropy loss, forms a fundamental component of the overall objective.

From a theoretical standpoint, these favorable properties can be unified under the \textbf{Fenchel–Young} (F-Y) framework \citep{blondel2019fenchel}, which recovers Softmax cross-entropy as a particular instance associated with negative Shannon entropy. F-Y not only explains the effectiveness of Softmax but also situates it within a broader family of surrogates, including Sparsemax \citep{martins2016sparsemax} and $\alpha$-Entmax \citep{peters2019entmax}, each encoding different inductive biases. 

While this theoretical perspective provides a unifying foundation, another major challenge of Softmax arises from \textbf{computational efficiency}. When the number of classes $C$ is extremely large, computing the log-partition term $\log Z = \log \sum_{j=1}^C e^{s_j}$ becomes a prohibitive bottleneck. This phenomenon has motivated a wide range of approximate methods, such as Noise Contrastive Estimation (NCE) \citep{gutmann2010nce} and Sampled Softmax \citep{jean2015sampledsoftmax}, which trade exactness for tractability.

\begin{table*}[t]
\caption{Exact Softmax-family properties.}
\centering
\resizebox{0.8\linewidth}{!}{
    \begin{tabular}{lcccc}
    \toprule
    \textbf{Property} & \textbf{Softmax} & \textbf{Sparsemax} & \textbf{$\alpha$-Entmax} & \textbf{Rankmax} \\
    \midrule
    Consistency? & \checkmark & \checkmark & \checkmark & \checkmark \\
    Ordering? & SOP & WOP & WOP & WOP \\
    Smoothness ($\|J\|_2$) & $1/2$ & 1 & 1 & $m/S$ \\
    Per-epoch Complexity & $\Theta(NC)$ & $\Theta(NC\log C)$ & $\Theta(NC\log C + N|P|)$ & $\Theta(NC)$ \\
    \bottomrule
    \end{tabular}
}
\label{tab:softmax_exact_transposed}
\end{table*}
\begin{table*}[t]
\centering
\caption{Approximate method\protect\footnotemark properties.}
\resizebox{0.8\linewidth}{!}{
    \begin{tabular}{lcccc}
    \toprule
    \textbf{Property} & \textbf{SSM} & \textbf{NCE} & \textbf{HSM} & \textbf{RG} \\
    \midrule
    Consistency? & Asymptotic & Asymptotic & $\times$ & \checkmark \\
    Smoothness ($\|J\|_2$) & $1/2$ & $1/4$ & $1/2$ & -- \\
    Per-epoch Complexity & $\Theta(Nk)$ & $\Theta(Nk)$ & $\Theta(N\log C)$ & -- \\
    Bias (asymptotic) & 0 & $(1+k)\,\mathrm{JS}_\tau(P_s\|Q)$ & $\mathrm{KL}(P_s\|P_{\text{HSM}})$ & $O(\|s\|^3)$ \\
    Bias (curvature) & $-\tfrac{1}{2k}\chi^2(P_s\| Q)$ & 0 & 0 & 0 \\
    Variance & $\tfrac{1}{k}\chi^2(P_s\|Q)$ & $\tfrac{k}{(1+k)^2}\chi^2(P_s\|Q)$ & 0 & 0 \\
    \bottomrule
    \end{tabular}
}
\label{tab:softmax_approx_transposed}
\end{table*}
\footnotetext{{Definition of following methods are in Appendix~\ref{appdix:approximations}.}}
Although methods from both directions (referred to as \textbf{`Softmax-family'}) have been extensively studied, they have largely been progressed in isolation, which leaves open critical questions: to what extent do these surrogates preserve the theoretical guarantees of Softmax Loss? What computational trade-offs do they entail, and how do they affect optimization dynamics and empirical performance? Addressing these questions requires a common theoretical foundation that enables comprehensive comparisons across the entire Softmax-family, not only in terms of gradients or empirical efficiency, but also through the geometry of the surrogate loss landscapes. Without such a foundation, the choice of surrogate losses under real-world scenarios remains heuristic-driven.

To this end, this work provides a principled study of Softmax-family losses from both statistical and computational perspectives. Our contributions are as follows:
\begin{itemize}
\item We establish and conclude the \textbf{consistency properties} of Softmax-family, determining whether they fundamentally align with classification and ranking metrics.
\item We analyze the \textbf{gradient dynamics} of Softmax-family losses, characterizing their convergence behaviors and revealing differences in optimization efficiency.
\item For approximate methods, we propose systemetic \textbf{bias–variance decomposition} results that yields convergence guarantees across training scenarios.
\item We provide an analysis of the \textbf{per-epoch computational complexity} for the entire family, clarifying trade-offs between statistical accuracy and efficiency.
\item Finally, we validate our theoretical results with \textbf{extensive experiments} on a representative recommendation task, demonstrating that consistency, convergence, and computational analysis translate directly into empirical performance.
\end{itemize}

Together, all the theoretical results are concluded in Table~\ref{tab:softmax_exact_transposed} and Table~\ref{tab:softmax_approx_transposed}, which deliver a unified theoretical foundation and offer practical guidelines for selecting surrogate losses in large-class learning scenarios, bridging the gap between theoretical analysis and real-world application.

\section{Foundations of Learning Frameworks}
\label{sec:foundations}
\subsection{Preliminaries}

We establish a general supervised learning framework that unifies multi-class/multi-label classification and ranking. Let $\mathcal{X}$ be the input feature space, for any input $\bm{x} \in \mathcal{X}$, our goal is to predict a relevance distribution over a fixed set of $C$ candidates. The ground-truth label space is defined as $\mathcal{Y} = \{0, 1\}^C$. Label $\bm{y} \in \mathcal{Y}$ is a binary vector where $y_i=1$ indicates that $i$ is relevant to input $\bm{x}$, while $y_i=0$ indicates not. This representation captures different tasks within:
\begin{itemize}
    \item \textbf{Multi-class Classification:} The label $\bm{y}$ is a one-hot vector, i.e., $\sum_{i=1}^C y_i = 1$.
    \item \textbf{Multi-label Classification \& Ranking}\footnote{Our discussion primarily centers on ranking scenarios with implicit feedback (i.e. labels are binarized), a setting commonly encountered in search and recommendation tasks.}: The label $\bm{y}$ is a multi-hot vector, where one or more elements can be non-zero. The set of relevant items is given by the support of $\bm{y}$, denoted 
    \begin{equation*}
    \text{supp}(\bm{y}) = \{i \in \{1,\dots,C\} : y_i = 1\}.
    \end{equation*}
\end{itemize}
We assume training data $(\bm{x, y})$ is drawn i.i.d. from an unknown distribution over $\mathcal{X} \times \mathcal{Y}$. The model is a parameterized function $f: \mathcal{X} \to \mathbb{R}^C$ with parameters $\theta$, which produces a vector of scores (logits) $\bm{s} = f(\bm{x}; \theta) \in \mathbb{R}^C$ for any given input. To facilitate training within a probabilistic framework, the score vector $\bm{s}$ is subsequently normalized into a probability distribution $\hat{\bm{p}} \in \Delta^C$, where $\Delta^C = \{ \bm{p} \in \mathbb{R}^C_+ : \sum_{i=1}^C p_i = 1 \}$ is the probability simplex. 

\subsection{Evaluation Metrics}

The ultimate objective in supervised learning is to optimize performance with respect to task-specific evaluation metrics. For classification tasks, common choices include Top-$k$ Accuracy and Precision@$k$/Recall@$k$, while ranking tasks are often evaluated by position-sensitive measures such as NDCG. These metrics, however, are generally discrete and non-differentiable, thus cannot be directly optimized using gradient-based methods. Practical training process relies on smooth, differentiable surrogate loss functions. To mitigate the gap, a theoretical challenge is to guarantee that minimizing a surrogate loss consistently improves the true metric of interest. Formal definitions, notations, and further discussions of this relation are provided in Appendix~\ref{appdix:metrics}.

\subsection{Softmax Loss}
The softmax mapping is defined as
\begin{equation}\label{eq:softmaxfunction}
\hat{p}_\text{SM}(\bm{s})_i = \frac{\exp(s_i)}{\sum_{j=1}^C \exp(s_j)}\in\Delta^C.
\end{equation}
Correspondingly, the softmax cross-entropy (negative log-likelihood) surrogate loss\footnote{Here we merge the softmax loss form in terms of multi-class and multi-label/ranking, whose consistency properties (Top-$k$ calibration, DCG-consistency) are satisfied respectively.} is:
\begin{equation}\label{eq:softmaxloss}
\begin{aligned}
\mathcal{L}_{\text{SM}}(\bm{y,s}) &= -\sum_{i:y_i=1}\log \hat p_{\text{SM}}(\bm{s})_i\\
&= -\sum_{i:y_i=1}\log \frac{\exp(s_{i})}{\sum_{j=1}^C \exp(s_j)} ,
\end{aligned}
\end{equation}

This loss corresponds to the maximum likelihood estimator (MLE) under a categorical distribution\footnote{For multi-label cases, Softmax can be interpreted as minimizing KL-divergence between predicted scores and normalised ground-truth distribution $\bm{y}/\|\bm{y}\|_1$.} , making it a statistically principled and widely adopted choice. Besides, the loss also enjoys classification-calibration\citep{zhang2004statistical}, Top-$k$ calibration\citep{lapin2016loss}, and DCG-consistency\footnote{Under binarized assumption.}\citep{ravikumar2011ndcg}. These theoretical guarantees ensure Bayes-optimal prediction under risk minimization, thereby explaining the empirical robustness of the softmax loss across both classification and ranking tasks.

\subsection{Fenchel-Young Framework and Bregman Divergence}

The F-Y loss framework \citep{blondel2019fenchel} provides a principled and unified approach to constructing convex, classification-calibrated loss functions from a chosen convex regularizer $\Omega$. This framework generalizes familiar objectives: for instance, selecting the negative Shannon entropy as $\Omega$ recovers Softmax Loss in Eq.~(\ref{eq:softmaxloss}). By varying the regularizer, one obtains a rich family of surrogates with distinct inductive biases. Notable examples include Sparsemax \citep{martins2016sparsemax} and the $\alpha$-Entmax family \citep{peters2019entmax}. The formal definitions are provided in Appendix~\ref{appdix:FYLoss}.

\textbf{Bregman divergence} \citep{bregman1967relaxation} plays a central role in characterizing the statistical consistency of learning algorithms. A key result is that if a surrogate loss can be expressed as a Bregman divergence, then its Bayes-optimal predictor coincides with the true posterior distribution \citep{reid2011information,blondel2019fenchel}. This property establishes a theoretical foundation for proving that minimizing the surrogate leads to convergence toward the data distribution. Formal definitions and further details on Bregman divergences are provided in Appendix~\ref{appdix:bregman}.

\textbf{Mirror Descent} It's noteworthy that F-Y framework naturally aligns with the geometry of Mirror Descent (MD)\cite{nemirovskij1983problem}, which can be understood as supervised analogues of mirror-space optimality. A more detailed discussion is provided in Appendix~\ref{appendix:md}.
\subsection{Computational Bottleneck and Approximation}\label{chpt:bottleneck}
The computational challenge of Softmax Loss lies in the partition function where
\begin{equation}
    \log Z(\bm{s}) = \log \sum_{j=1}^C \exp(s_j),
\end{equation}
which requires summation over all $C$ classes. When $C$ is on the order of millions, evaluating $\log Z(\bm{s})$ and its gradient becomes the principal bottleneck. To handle this, a variety of approximation strategies have been proposed. Sampling-based approaches include NCE\citep{gutmann2010nce} and Sampled Softmax \citep{jean2015sampledsoftmax}. While structural methods such as Hierarchical Softmax (HSM) \citep{morin2005hierarchical}, and analytic approximations such as Taylor expansions (RG)\citep{pu2025rg2} approximate the log-partition directly. Detailed discussions of these methods are provided in Appendix~\ref{appdix:approximations}.

\section{Main Results}

\subsection{Consistency Analysis of Fenchel-Young Losses}

\subsubsection{Relation Between F-Y Losses and Bregman Divergences}

For F-Y losses, the representation of the loss as a Bregman divergence is not guaranteed for arbitrary regularizers. A direct equivalence is obtained when $\Omega$ is a \textbf{Legendre-type} function, i.e., strictly convex with a gradient mapping $\nabla\Omega$ that is a bijection from the interior of its domain to $\mathbb{R}^C$. Under this condition, \citet{blondel2019fenchel} established the following identity:
\begin{proposition}[\cite{blondel2019fenchel}]
If the regularizer $\Omega$ is Legendre-type, the Fenchel–Young loss $L_\Omega(\bm{s}, \bm{y})$ is equivalent to the Bregman divergence $D_\Omega$:
\begin{equation}
    L_\Omega(\bm{y}, \bm{s}) = D_\Omega(\bm{y}, \bm{\hat{p}}(\bm{s})).
\end{equation}
\end{proposition}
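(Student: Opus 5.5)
The plan is to unfold both sides from their definitions and match them using the Fenchel–Young equality case. The F-Y loss is
\[
L_\Omega(\bm{y},\bm{s}) = \Omega(\bm{y}) + \Omega^*(\bm{s}) - \langle \bm{s},\bm{y}\rangle,
\]
and the Bregman divergence is
\[
D_\Omega(\bm{y},\bm{p}) = \Omega(\bm{y}) - \Omega(\bm{p}) - \langle \nabla\Omega(\bm{p}), \bm{y}-\bm{p}\rangle.
\]
The prediction map is the regularized argmax $\hat{\bm{p}}(\bm{s}) = \nabla\Omega^*(\bm{s}) = \arg\max_{\bm{p}\in\mathrm{dom}\,\Omega}\langle \bm{s},\bm{p}\rangle - \Omega(\bm{p})$. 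Throughout, $\Omega$ is Legendre-type and, as in the paper's setting, the label $\bm{y}$ is interpreted as (or normalized to) a point of $\mathrm{dom}\,\Omega$. This keeps $\Omega(\bm{y})$ finite, so both sides are well defined.

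\textbf{Step 1: the Fenchel–Young equality.} Since $\Omega$ is convex, closed, and Legendre, $\Omega^*$ is differentiable with $\nabla\Omega^* = (\nabla\Omega)^{-1}$, a bijection between $\mathbb{R}^C$ and $\mathrm{int}\,\mathrm{dom}\,\Omega$. Set $\hat{\bm{p}} = \nabla\Omega^*(\bm{s})$. Then $\hat{\bm{p}}$ is interior and $\nabla\Omega(\hat{\bm{p}}) = \bm{s}$. For this pair the Fenchel–Young inequality is tight:
\[
\Omega^*(\bm{s}) = \langle \bm{s},\hat{\bm{p}}\rangle - \Omega(\hat{\bm{p}}).
\]

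\textbf{Step 2: substitute and simplify.} Substituting into $L_\Omega$ gives
\[
L_\Omega = \Omega(\bm{y}) - \Omega(\hat{\bm{p}}) - \langle \bm{s}, \bm{y}-\hat{\bm{p}}\rangle.
\]
Replacing $\bm{s}$ by $\nabla\Omega(\hat{\bm{p}})$ yields exactly $D_\Omega(\bm{y},\hat{\bm{p}}(\bm{s}))$. A sanity check: with $\Omega$ the negative Shannon entropy, the result is $\mathrm{KL}(\bm{y}\,\|\,\mathrm{softmax}(\bm{s}))$, which for one-hot $\bm{y}$ matches Eq.~(\ref{eq:softmaxloss}).

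\textbf{Main obstacle: the simplex constraint.} The delicate step is justifying $\nabla\Omega(\hat{\bm{p}}) = \bm{s}$ when $\Omega$ is restricted to $\Delta^C$, as it is for Shannon entropy. Then $\nabla\Omega$ can only be surjective onto $\mathbb{R}^C$ modulo the constant direction $\bm{1}$, so literally $\nabla\Omega(\hat{\bm{p}}) = \bm{s} + c\bm{1}$ for some $c$. I would handle this by working on the affine hull of the simplex. Since $\langle \bm{1}, \bm{y}-\hat{\bm{p}}\rangle = 0$ for $\bm{y},\hat{\bm{p}}\in\Delta^C$, the shift $c$ drops out of the inner product, and the identity survives. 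Alternatively, one can read ``Legendre-type'' relative to that affine hull, which is what Blondel et al.\ do. I would state this invariance explicitly as the key point of the argument.
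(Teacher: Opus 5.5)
Your proposal is correct and is the standard argument. The paper gives no proof of its own and simply cites Blondel et al., whose derivation is the same as yours: apply the Fenchel--Young equality at $\hat{\bm p}=\nabla\Omega^*(\bm s)$, then substitute $\bm s=\nabla\Omega(\hat{\bm p})$. Your handling of the $c\bm{1}$ shift for regularizers restricted to $\Delta^C$ is a worthwhile addition, because the paper's own motivating example (Shannon negentropy on the simplex) is not literally Legendre-type on $\mathbb{R}^C$.
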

The Softmax loss is the canonical example of this principle, since its regularizer, the negative Shannon entropy, is Legendre-type. Consequently, the Bayes-optimal prediction $\bm{p}^*$ is guaranteed to coincide with the true posterior distribution $\bm{\eta}$.  

To translate posterior matching into metric consistency, the inverse mapping from $\bm{p}^*$ back to the optimal scores $\bm{s}^*(\bm{x})$ must preserve task-relevant orderings. These requirements are formalized by the \emph{inverse top-$k$ preserving} property for Top-$k$ calibration \citep{lapin2016loss,yang2020consistency} and the \emph{inverse order-preserving} property for DCG consistency \citep{ravikumar2011ndcg}. The strictly monotonic mapping induced by the Softmax function satisfies both conditions, thereby ensuring broad consistency.  
\begin{proposition}
The Softmax loss is classification-calibrated, Top-$k$ calibrated for all $k$, and DCG-consistent.
\end{proposition}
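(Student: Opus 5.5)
The plan is to reduce all three properties to one fact: the conditional risk of $\mathcal{L}_{\text{SM}}$ is minimized exactly when the softmax output matches the (normalized) conditional label distribution. Then strict monotonicity of the softmax map turns posterior matching into order preservation.

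\textbf{Step 1: the conditional risk.} Fix $\bm{x}$ and let $\bm{\eta}=\mathbb{E}[\bm{y}\mid\bm{x}]\in[0,1]^C$, the marginal relevance probabilities; in the multi-class case $\bm{\eta}\in\Delta^C$. By linearity of Eq.~(\ref{eq:softmaxloss}) in $\bm{y}$, the conditional risk is
\begin{equation*}
R(\bm{s})=-\sum_i \eta_i \log \hat p_{\text{SM}}(\bm{s})_i .
\end{equation*}
Writing $\bar{\bm{\eta}}=\bm{\eta}/\|\bm{\eta}\|_1$, we get $R(\bm{s})=\|\bm{\eta}\|_1\bigl(H(\bar{\bm{\eta}})+\mathrm{KL}(\bar{\bm{\eta}}\,\|\,\hat{\bm p}_{\text{SM}}(\bm s))\bigr)$. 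Equivalently, via the preceding Proposition, this is a Bregman divergence $D_\Omega$ with $\Omega$ the Legendre-type negative Shannon entropy, up to a term that does not depend on $\bm s$. Hence every minimizing sequence satisfies $\hat{\bm p}_{\text{SM}}(\bm s)\to\bar{\bm\eta}$. When some $\eta_i=0$, no exact minimizer exists, so the argument must work with minimizing sequences and infimum gaps rather than argmins.

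\textbf{Step 2: inverse order preservation.} Since $\hat p_{\text{SM}}(\bm s)_i>\hat p_{\text{SM}}(\bm s)_j \iff s_i>s_j$, any $\bm s$ whose softmax is close enough to $\bar{\bm\eta}$ has $s_i>s_j$ whenever $\eta_i>\eta_j$. This is exactly the inverse order-preserving property of \citet{ravikumar2011ndcg}, and it implies the inverse top-$k$ preserving property of \citet{lapin2016loss} for every $k$.

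\textbf{Step 3: the three properties.} Classification calibration is the $k=1$ case. Top-$k$ calibration follows from the criterion of \citet{lapin2016loss,yang2020consistency}: a loss is top-$k$ calibrated if near-optimal conditional risk forces the top-$k$ set of $\bm s$ to be a top-$k$ set of $\bm\eta$. DCG-consistency under binarized labels follows from \citet{ravikumar2011ndcg}. Their Bayes-optimal DCG ranking sorts items by $\mathbb{E}[2^{y_i}-1\mid\bm x]=\eta_i$, and their theorem says a Bregman-type surrogate whose minimizer is order-preserving with respect to $\bm\eta$ is DCG-consistent. Step 2 supplies exactly this condition.

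\textbf{Main obstacle.} The hard part is the quantitative strict gap. Calibration needs
\begin{equation*}
\inf\{R(\bm s): \bm s \text{ violates the order}\}>\inf_{\bm s} R(\bm s),
\end{equation*}
not just correctness of the minimizer, and the infimum may be approached only as scores diverge. I would prove the gap by Pinsker's inequality. Any $\bm s$ with $s_i\le s_j$ while $\eta_i>\eta_j$ has
\begin{equation*}
\|\hat{\bm p}_{\text{SM}}(\bm s)-\bar{\bm\eta}\|_1\ge(\bar\eta_i-\bar\eta_j)/2,
\end{equation*}
so its excess risk is at least $\|\bm\eta\|_1(\bar\eta_i-\bar\eta_j)^2/8>0$. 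Ties in $\bm\eta$ are harmless because they do not affect the metrics.
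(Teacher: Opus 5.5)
Your proposal is correct and follows essentially the same route as the paper. The negative-entropy Fenchel--Young loss is a Bregman (KL) divergence, so near-optimal conditional risk forces the softmax output onto the normalized posterior, and strict monotonicity of softmax then supplies the inverse order-preserving and inverse top-$k$ preserving conditions that the cited consistency criteria require; your Pinsker lower bound on the excess risk of order-violating scores, together with working with minimizing sequences when some $\eta_i=0$, makes explicit the strict infimum gap that the paper only asserts, and it is sound (one even gets $\|\hat{p}-\bar\eta\|_1\ge\bar\eta_i-\bar\eta_j$ without the factor $1/2$).
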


Notably, sampling-based approximate variants of Softmax inherit these properties only asymptotically. SSM and NCE yield asymptotically unbiased gradients and are Softmax–MLE consistent \citep{gutmann2010nce} as the number of negative samples $k$ increases. For non-sampling approximations, the consistency of HSM and RG has been systematically analyzed in \citep{wydmuch2018no,pu2025rg2}, showing that HSM does not preserve consistency whereas RG does.

\subsubsection{Consistency for Sparse F-Y Losses}

In contrast to Softmax, sparse F-Y losses such as Sparsemax and $\alpha$-Entmax arise from non-Legendre regularizers, namely the squared $\ell_2$-norm and the Tsallis entropy. Consequently, these losses do not coincide with their associated Bregman divergence but only upper bound it \citep{blondel2019fenchel}. This lack of equivalence implies that the consistency guarantees of the Softmax loss cannot be directly transferred to these sparse alternatives.  

More fundamentally, sparse F-Y losses induce sparsity by assigning zero probability to low-scoring classes for many inputs. This inductive bias de-emphasizes the reproduction of small non-zero probability masses. Nevertheless, what matters for calibration in classification and ranking is the preservation of orderings: the Bayes-optimal solutions of Sparsemax and $\alpha$-Entmax exactly preserve the relevant order structure. As a result, they retain Top-$k$ calibration and DCG consistency.  

\begin{proposition}\label{prop:sparse-calibrate}
$\forall k > 1$, Sparsemax and $\alpha$-Entmax are Top-$k$ calibrated and DCG-consistent.
\end{proposition}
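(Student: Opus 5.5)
The plan is to work directly with the conditional (pointwise) surrogate risk rather than through the Bregman identity, since that identity fails for non-Legendre regularizers. Fix $\bm{x}$ and let $\bm{\eta} = \mathbb{E}[\bm{y}\mid\bm{x}]$ (normalized to $\bar{\bm\eta}\in\Delta^C$ in the multi-label case). Since $L_\Omega(\bm{y},\bm{s}) = \Omega^*(\bm{s}) + \Omega(\bm{y}) - \langle \bm{s},\bm{y}\rangle$ is linear in $\bm{y}$ apart from the constant $\Omega(\bm y)$, the conditional risk is
\[
R(\bm{s}) = \Omega^*(\bm{s}) - \langle \bm{s}, \bar{\bm\eta}\rangle + \text{const}.
\]
Hence every minimizer $\bm{s}^*$ satisfies $\bar{\bm\eta} \in \partial\Omega^*(\bm{s}^*)$, i.e.\ $\hat{\bm p}(\bm{s}^*) = \nabla\Omega^*(\bm{s}^*) = \bar{\bm\eta}$. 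Here $\Omega^*$ is differentiable because $\Omega$ (squared $\ell_2$ or Tsallis restricted to the simplex) is strictly convex. The conclusion is that Bayes-optimal scores still reproduce the posterior through the sparse map, even though the loss is not a Bregman divergence. The minimizer set is nonempty: for any $\bar{\bm\eta}\in\Delta^C$ one can take $s_i = (\Omega'(\bar\eta_i))$ on the support plus a shift, and on the zero coordinates any sufficiently small scores.

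Next I would establish order preservation. For $\alpha$-Entmax (with Sparsemax as $\alpha=2$), the map has the closed form
\[
\hat p(\bm s)_i = \bigl[(\alpha-1)s_i - \tau\bigr]_+^{1/(\alpha-1)},
\]
which is nondecreasing in $s_i$ and strictly increasing on the support. Consequently, for the optimal scores, $\bar\eta_i > \bar\eta_j$ implies $s^*_i > s^*_j$. The reason is that $\bar\eta_i>0$ puts $i$ in the support, and strict monotonicity there, together with the fact that every off-support score lies below the threshold $\tau/(\alpha-1)$, gives the strict inequality. Ties or zero-probability coordinates can receive arbitrary relative order among themselves, but only below the support. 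This is the weak order-preserving (WOP) property of Table~\ref{tab:softmax_exact_transposed}, in contrast to the strict property of Softmax.

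Then I would transfer this to the metrics. For Top-$k$ calibration in the sense of \citet{lapin2016loss,yang2020consistency}, it suffices that $\operatorname{top}_k(\bm s^*)$ is a Top-$k$ set of $\bm\eta$. Each strict inequality among $\bm\eta$ that separates the top-$k$ is preserved, and ambiguities arise only among equal $\eta$ values or among zero-probability classes, where any choice is Bayes optimal. For DCG consistency under binarized relevance, I would invoke the characterization of \citet{ravikumar2011ndcg}: a surrogate is consistent if sorting the Bayes scores sorts $\mathbb{E}[y_i\mid\bm x]$ (up to the gain normalization). This is exactly the preservation established above; ties and zero-relevance items do not change expected DCG. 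Finally, I would upgrade pointwise optimality to calibration, namely that near-minimizers have the correct ordering. Here I would use continuity and convexity of $R$ together with the argmin characterization. Suppose a sequence $\bm s_n$ with $R(\bm s_n)\to\inf R$ had a wrong ordering. Then $\hat{\bm p}(\bm s_n)$ would stay bounded away from $\bar{\bm\eta}$, contradicting the strong convexity of $\Omega$ on the simplex. Indeed, $R(\bm s)-\inf R \ge D_\Omega(\bar{\bm\eta},\hat{\bm p}(\bm s)) \ge \tfrac{\mu}{2}\|\bar{\bm\eta}-\hat{\bm p}(\bm s)\|^2$, using the upper-bound relation from \citet{blondel2019fenchel} cited above.

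The main obstacle is this last step combined with the sparse boundary. Near-optimal $\bm s$ may place a class with small positive $\bar\eta_j$ off the support, so that a class in the true top-$k$ ties at zero with irrelevant classes. I would handle it with a margin argument. If $\bar\eta_{[k]} > \bar\eta_{[k+1]}$, the quadratic lower bound forces $\hat p_{[k]} > \hat p_{[k+1]}$ once $\|\bar{\bm\eta}-\hat{\bm p}\|$ is smaller than half the gap, and then monotonicity of the map fixes the score order. The restriction $k>1$ in Proposition~\ref{prop:sparse-calibrate} would be checked against the cited definition of Top-$k$ calibration. I expect it only matters for degenerate tie-breaking in that definition.
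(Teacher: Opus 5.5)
Your proof is correct, and its backbone is the paper's. The first-order condition on the conditional risk gives $\hat{\bm p}(\bm s^*)=\bar{\bm\eta}$. The thresholded form of the map then gives support separation and order within the support, and Top-$k$ and DCG optimality of $\bm s^*$ follow; your step~3 is the paper's case split $k\le|\mathcal P|$ versus $k>|\mathcal P|$.

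You differ from the paper in two ways.
\begin{itemize}
\item \emph{Entmax case.} The paper computes $s_i^*=p_i+\tau^*$ explicitly for Sparsemax and only asserts that Entmax is analogous. You use only that $t\mapsto[(\alpha-1)t-\tau]_+^{1/(\alpha-1)}$ is nondecreasing, which covers all $\alpha\in(1,2]$ at once.
\item \emph{Near-minimizers.} This is the more important difference. The paper stops at exact minimizers, but the definitions it cites (Lapin et al., Yang--Koyejo, Ravikumar et al.) require control of near-minimizers. For Top-$k$ this is a strict gap between $\inf R$ and the infimum of $R$ over non-top-$k$-preserving scores. Your bound $R(\bm s)-\inf R=\mathcal{L}_\Omega(\bar{\bm\eta},\bm s)\ge\tfrac12\|\bar{\bm\eta}-\hat{\bm p}(\bm s)\|_2^2$ supplies exactly this gap. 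It holds because both regularizers are $1$-strongly convex on the simplex; the Tsallis Hessian is $\mathrm{diag}(p_i^{\alpha-2})\succeq I$ for $\alpha\le 2$.
\end{itemize}

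To make the margin argument cover ties, take the margin to be half the smallest gap between distinct values of $\bar{\bm\eta}$, including the gap from the smallest positive value to $0$. Then $\bar\eta_i>\bar\eta_j$ forces $\hat p_i>\hat p_j$, hence $s_i>s_j$. This yields the strict inequalities on both sides of a tied block at position $k$, and it removes your worry about a small positive $\bar\eta_j$ leaving the support. Neither your argument nor the paper's uses $k>1$.

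One caution on the multi-label case. $\bar{\bm\eta}=\mathbb{E}[\bm y]/\|\mathbb{E}[\bm y]\|_1$ is the right Bayes vector for the sum-over-positives loss $\sum_{i:y_i=1}\mathcal{L}_\Omega(\bm e_i,\bm s)$. Under the appendix's substitution $\bm y\mapsto\bm y/\|\bm y\|_1$, the Bayes vector becomes $\mathbb{E}[\bm y/\|\bm y\|_1]$, whose order can differ from that of $\mathbb{E}[y_i]$. For example, take $\bm y=\bm e_1$ w.p.\ $0.4$ and $\bm y=\bm e_2+\bm e_3$ w.p.\ $0.6$. So fix the loss form before claiming multi-label DCG consistency; the paper avoids the issue by treating only one-hot labels.
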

\begin{proof}
See Appendix~\ref{appdix:pf-prop3.3}.
\end{proof}

\subsubsection{Hidden Dangers for Sparse Alignments: Insufficient Order Preservation}

While both dense (Softmax) and sparse F-Y losses are Top-$k$ calibrated, their optimization behavior diverges due to fundamental geometric properties of the \textbf{prediction mapping}. For Softmax, the inverse is available in closed form ($s_i = \log \hat{p}_i + c$). By contrast, sparse mappings lack a simple inverse and are not bijective: they are not injective, meaning distinct score vectors $\bm{s}$ can be mapped to the same probability vector $\hat{\bm{p}}$. This induces a \textbf{lossy compression} of the input scores, discarding information about the ordering of lower-ranked logits.  

This compressive property is the root of the differing training dynamics. To analyze it rigorously, we find that the concept of \textbf{order preservation} is not monolithic, but rather splits into two fundamentally different regimes:

\begin{definition}[Order Preservation]
Let $\hat p: \mathbb{R}^C \to \Delta_C$ be a prediction mapping.
\begin{itemize}
    \item \textbf{Strictly order preserving (SOP).} $\hat p$ is SOP if for any $i\neq j$, $s_i > s_j \iff \hat p_i(\bm{s}) > \hat p_j(\bm{s})$ holds for all $\bm{s} \in \mathbb{R}^C$.
    \item \textbf{Weakly order preserving (WOP).} $\hat p$ is WOP if for any $i\neq j$, $s_i > s_j \Rightarrow \hat p_i(\bm{s}) \ge \hat p_j(\bm{s})$ holds for all $\bm{s}$, and there exist $i\neq j$ with $s_i > s_j$ yet $\hat p_i(\bm{s}) = \hat p_j(\bm{s})$.
\end{itemize}
\end{definition}

As for Softmax, $\hat p_i(\bm{s})=\exp(s_i)/\sum_j\exp(s_j)$ is strictly increasing in each coordinate and even strictly Schur-isotone, hence $s_i>s_j \iff \hat p_i(\bm{s})>\hat p_j(\bm{s})$ for all $i\neq j$. Therefore, Softmax is SOP and admits no ties except on the logit-equality hyperplanes $s_i=s_j$.

For Sparsemax, the prediction takes the ``shifted-thresholded'' form
\begin{equation}
\begin{aligned}
\hat p_i(\bm{s}) \;&=\; \max\{\, s_i - \tau(\bm{s}),\, 0 \,\},\\
\text{where}\quad \tau(\bm{s})\quad &\text{chosen s.t. }\quad \sum_i \hat p_i(\bm{s})=1.
\end{aligned}
\end{equation}
Let $\mathcal{P}(\bm{s}):=\{i:\hat p_i(\bm{s})>0\}$ denote the support set. Then for $i\in \mathcal{P}(\bm{s})$ and $j\notin \mathcal{P}(\bm{s})$ we have $s_i-\tau(\bm{s})>0 \ge s_j-\tau(\bm{s})$, hence $\hat p_i(\bm{s}) > \hat p_j(\bm{s})=0$. However, whenever two logits straddle the threshold with $\tau(\bm{s})\geq s_i>s_j$, we obtain $\hat p_i(\bm{s})=\hat p_j(\bm{s})=0$, yielding a tie. Analogous piecewise-thresholding holds for $\alpha$-Entmax\ (with a different nonlinearity but the same support-inducing mechanism). Thus sparse methods are order preserving in the weak sense, yet admit nontrivial plateaus with $\hat p_i=\hat p_j$ for $s_i>s_j$ whenever both lie at or below the active threshold.

\begin{proposition}[Sparse methods are WOP, not SOP]
Sparsemax and $\alpha$-Entmax\ are weakly order preserving but not strictly order preserving.
\end{proposition}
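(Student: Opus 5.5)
The plan is to handle both maps at once through the general thresholded form of $\alpha$-Entmax. For $\alpha>1$ the prediction can be written as
\begin{equation*}
\hat p_i(\bm{s}) = \bigl[(\alpha-1)s_i - \tau(\bm{s})\bigr]_+^{1/(\alpha-1)},
\end{equation*}
where $\tau(\bm{s})$ is the unique scalar making $\sum_i \hat p_i(\bm{s})=1$. Sparsemax is the special case $\alpha=2$, which gives the shifted-thresholded form displayed above. Uniqueness of $\tau$ follows because $t\mapsto\sum_i[(\alpha-1)s_i-t]_+^{1/(\alpha-1)}$ is continuous and nonincreasing. It is strictly decreasing wherever it is positive, and it ranges from $+\infty$ down to $0$. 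I take this closed form from the standard characterization of the Tsallis-regularized argmax (Appendix~\ref{appdix:FYLoss}) and do not re-derive it.

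\textbf{Weak order preservation.} Fix $\bm{s}$ and suppose $s_i>s_j$. Since $\alpha>1$, we get $(\alpha-1)s_i-\tau(\bm{s})>(\alpha-1)s_j-\tau(\bm{s})$, with the same $\tau(\bm{s})$ on both sides. The map $u\mapsto [u]_+^{1/(\alpha-1)}$ is nondecreasing on $\mathbb{R}$, so $\hat p_i(\bm{s})\ge\hat p_j(\bm{s})$. The inequality is in fact strict whenever $i$ lies in the support $\mathcal{P}(\bm{s})$, because the map is strictly increasing on $(0,\infty)$. This is exactly the first clause of WOP.

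\textbf{A tie, and failure of SOP.} To satisfy the existential clause of WOP and refute SOP simultaneously, I construct one explicit counterexample that works for every $\alpha>1$. Take $C=3$ and $\bm{s}=(M,1,0)$ with $M$ large. I claim the support is $\{1\}$, i.e.\ $\hat p(\bm{s})=(1,0,0)$. Checking this means choosing $\tau=(\alpha-1)M-1$, so that the first coordinate gives $[1]_+^{1/(\alpha-1)}=1$. The remaining coordinates then satisfy $(\alpha-1)s_j-\tau\le(\alpha-1)-(\alpha-1)M+1<0$ as soon as $M>1+1/(\alpha-1)$. By uniqueness of $\tau$, this is the correct threshold. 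Hence $s_2>s_3$ but $\hat p_2(\bm{s})=\hat p_3(\bm{s})=0$. This violates the ``$\Leftarrow$'' direction of SOP and witnesses the tie required by WOP. For Sparsemax, the concrete instance $\bm{s}=(3,1,0)$ already works.

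\textbf{Main obstacle.} The only delicate step is justifying the thresholded form for general $\alpha$, together with the uniqueness and monotonicity of $\tau$. If I have to derive it, I would use the KKT conditions of $\max_{\bm{p}\in\Delta^C}\langle\bm{s},\bm{p}\rangle-\Omega^{\mathrm{T}}_\alpha(\bm{p})$. Stationarity gives $p_i^{\alpha-1}=(\alpha-1)s_i-\tau$ on the support, and complementary slackness gives $p_i=0$ off the support. Everything after that is elementary. The limiting case $\alpha\to1$, which recovers Softmax, is excluded, consistent with the proposition's $\alpha>1$ scope.
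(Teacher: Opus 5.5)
Your proof is correct and takes the same route as the paper's sketch: a common threshold $\tau(\bm{s})$ composed with the nondecreasing map $u\mapsto[u]_+^{1/(\alpha-1)}$ gives the weak ordering, and two logits lying below the threshold give the tie; your explicit instance $\bm{s}=(M,1,0)$, with its verified self-consistent $\tau$, makes concrete the existence of such a tie, which the paper leaves implicit. One small slip: a tie with $s_2>s_3$ violates the ``$\Rightarrow$'' direction of SOP ($s_i>s_j\Rightarrow\hat p_i>\hat p_j$), not the ``$\Leftarrow$'' direction, which in fact holds for these maps by the same monotonicity argument.
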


\begin{proof}[Proof sketch]
Monotonicity without inversions follows from the thresholding structure above. Non-strictness is witnessed by any $s$ with $\tau(\bm{s})\geq s_i>s_j$, which yields $\hat p_i(\bm{s})=\hat p_j(\bm{s})=0$; hence ties occur while logits are strictly ordered.
\end{proof}

\begin{theorem}[WOP is sufficient for calibration]
Let $\mathcal{L}_{\Omega}$ be a F-Y loss whose prediction mapping $\bm{\hat{p}}(\bm{s})=\nabla\Omega^*(\bm{s})$ is WOP. Then, $\mathcal{L}_{\Omega}$ is Top-$k$ calibrated and DCG-consistent.
\end{theorem}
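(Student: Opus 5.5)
We reduce the claim to the standard characterization of calibration through the Bayes-optimal predictor. For a fixed input $\bm{x}$ let $\bm{\eta}=\mathbb{E}[\bm{y}\mid\bm{x}]$ (after the normalization $\bm{y}/\|\bm{y}\|_1$ in the multi-label case, so that $\bm{\eta}\in\Delta^C$). The conditional risk is
\[
R(\bm{s})=\mathbb{E}[\mathcal{L}_\Omega(\bm{y},\bm{s})\mid\bm{x}]=\Omega^*(\bm{s})+\mathbb{E}[\Omega(\bm{y})]-\langle\bm{\eta},\bm{s}\rangle.
\]
This uses only linearity of the F-Y loss in $\bm{y}$ and does not need the Legendre property. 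Because $R$ is convex in $\bm{s}$, its first-order condition gives $\nabla\Omega^*(\bm{s}^*)=\bm{\eta}$, i.e.\ $\hat{\bm{p}}(\bm{s}^*)=\bm{\eta}$, whenever $\bm{\eta}$ lies in the range of $\hat{\bm{p}}$. For Sparsemax and Entmax the range is all of $\Delta^C$, so a minimizer exists for every $\bm{\eta}$. The first step is to record this, together with the statement that every minimizer satisfies $\hat{\bm{p}}(\bm{s}^*)=\bm{\eta}$. If $\bm{\eta}$ were not attained, we would pass to minimizing sequences and replace equality by limits; this is the same device used in Proposition~\ref{prop:sparse-calibrate}.

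The second step transfers order information from $\hat{\bm{p}}$ back to $\bm{s}^*$. We use the contrapositive of WOP: if $\hat p_i(\bm{s})>\hat p_j(\bm{s})$, then $s_i>s_j$. Indeed, $s_j>s_i$ would force $\hat p_j\ge\hat p_i$ by WOP, and $s_i=s_j$ gives $\hat p_i=\hat p_j$ by permutation equivariance of $\nabla\Omega^*$. We therefore assume $\Omega$ is permutation invariant, as it is for all regularizers considered here. Hence $\eta_i>\eta_j$ implies $s^*_i>s^*_j$. This is exactly the \emph{inverse order-preserving} property of \citet{ravikumar2011ndcg} and the \emph{inverse top-$k$ preserving} property of \citet{lapin2016loss,yang2020consistency}, restricted to strictly ordered posterior pairs.

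The third step handles ties. These are the main obstacle, because WOP allows $\hat p_i=\hat p_j$ with $s_i\neq s_j$. We argue that such ties are harmless:
\begin{itemize}
\item If $\eta_i=\eta_j$, the Bayes-optimal top-$k$ set and the DCG-optimal ranking are indifferent between $i$ and $j$, so any order of $s^*_i,s^*_j$ is optimal.
\item If $\eta_i>\eta_j$, then $\hat p_i(\bm{s}^*)=\eta_i>\eta_j=\hat p_j(\bm{s}^*)$, so by the second step no tie occurs in $\hat{\bm{p}}$ and the scores are strictly ordered.
\end{itemize}
In particular, the plateau regions (classes with $\eta_i=0$) may be mutually ordered arbitrarily, but they always rank below every class with positive posterior mass. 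Consequently, the top-$k$ set of $\bm{s}^*$ is a Bayes top-$k$ set for every $k$, and sorting by $\bm{s}^*$ maximizes expected DCG, since DCG under binarized labels is maximized by sorting by $\bm{\eta}$.

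Finally, we upgrade pointwise optimality to calibration in the usual sense: any sequence $\bm{s}_n$ with $R(\bm{s}_n)\to\inf R$ eventually has the correct strict orderings. By continuity of $\nabla\Omega^*$ and strict convexity of $\Omega$ on the simplex, we get $\hat{\bm{p}}(\bm{s}_n)\to\bm{\eta}$. Hence for $n$ large, $\hat p_i(\bm{s}_n)>\hat p_j(\bm{s}_n)$ whenever $\eta_i>\eta_j$, and WOP gives $s_{n,i}>s_{n,j}$. This yields Top-$k$ calibration and DCG consistency via the standard excess-risk arguments of \citet{lapin2016loss,ravikumar2011ndcg}. The delicate point is justifying $\hat{\bm{p}}(\bm{s}_n)\to\bm{\eta}$ without the Bregman equivalence. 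We intend to use the inequality $\mathcal{L}_\Omega\ge D_\Omega$ noted earlier, together with strong convexity of $\Omega$ on $\Delta^C$: the excess conditional risk then controls $D_\Omega(\bm{\eta},\hat{\bm{p}}(\bm{s}_n))\ge\frac{\mu}{2}\|\bm{\eta}-\hat{\bm{p}}(\bm{s}_n)\|^2$, which gives the required convergence.
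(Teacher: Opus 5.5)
For categorical labels your argument is correct and follows the paper's skeleton. The paper proves this theorem only by pointing to the proof of Proposition~\ref{prop:sparse-calibrate}: the F-Y gradient identity gives $\hat{\bm p}(\bm s^*)=\bm\eta$, and the order of $\bm s^*$ is then read off from that of $\bm\eta$. You differ in the transfer step. The paper derives ``support separation'' and ``order within the support'' from the explicit Sparsemax form $s_i^*=\eta_i+\tau^*$ on the support and $s_j^*\le\tau^*$ off it, and only asserts the analogue for Entmax. Your contrapositive $\hat p_i(\bm s)>\hat p_j(\bm s)\Rightarrow s_i>s_j$ gives both properties at once from the WOP hypothesis alone, which is what the general theorem needs. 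Your last step, that every minimizing sequence is eventually top-$k$ and order preserving, has no counterpart in the paper, which treats only exact stationary points. So it is not ``the same device'' used in Proposition~\ref{prop:sparse-calibrate}. It is, however, what yields calibration in the usual strict-gap sense rather than just Bayes-optimality of exact minimizers.

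One claim is false as written, and two of your added hypotheses can be dropped.

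(i) In the multi-label extension, $\bm\eta=\mathbb{E}[\bm y/\|\bm y\|_1]$ does not order classes the way expected DCG requires; expected DCG is maximized by sorting the marginals $\mathbb{E}[y_i]$. Take $\operatorname{supp}(\bm y)=\{B\}$ with probability $1/4$ and $\operatorname{supp}(\bm y)=\{A,C,D,E\}$ with probability $3/4$. Then $\eta_B=1/4>3/16=\eta_A$, yet $\mathbb{E}[y_A]=3/4>1/4=\mathbb{E}[y_B]$, so the $\bm\eta$-ranking, which puts $B$ first, is strictly DCG-suboptimal. Either restrict to one-hot labels, as the paper's proof does, or use the sum-over-positives form $\sum_{i:y_i=1}\mathcal{L}_\Omega(\bm e_i,\bm s)$, as in Eq.~(\ref{eq:softmaxloss}). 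Its Bayes condition is $\hat{\bm p}(\bm s^*)=\mathbb{E}[\bm y]/\mathbb{E}\|\bm y\|_1$, which orders classes by their marginals.

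(ii) Permutation invariance is unnecessary. $\Omega^*$ is finite and differentiable on $\mathbb{R}^C$, so $\nabla\Omega^*$ is continuous. Applying WOP at $\bm s+\epsilon\bm e_i$ and at $\bm s+\epsilon\bm e_j$ and letting $\epsilon\downarrow 0$ gives $\hat p_i(\bm s)=\hat p_j(\bm s)$ whenever $s_i=s_j$.

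(iii) Strong convexity holds for Sparsemax and for Tsallis with $\alpha\in(1,2]$, whose Hessian is $\mathrm{diag}(p_i^{\alpha-2})\succeq I$. It is not a hypothesis of the theorem, though, and strict convexity suffices. The excess conditional risk equals $\mathcal{L}_\Omega(\bm\eta,\bm s)$. Testing the maximality of $\hat{\bm p}=\hat{\bm p}(\bm s)$ in the definition of $\Omega^*(\bm s)$ at the midpoint $(\bm\eta+\hat{\bm p})/2$ gives $\mathcal{L}_\Omega(\bm\eta,\bm s)\ge\Omega(\bm\eta)+\Omega(\hat{\bm p})-2\Omega\bigl(\tfrac{\bm\eta+\hat{\bm p}}{2}\bigr)$. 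By continuity of $\Omega$ on the polytope $\Delta^C$ and compactness, this lower bound is bounded away from $0$ on $\{\|\hat{\bm p}-\bm\eta\|\ge\varepsilon\}$. That is exactly the convergence $\hat{\bm p}(\bm s_n)\to\bm\eta$ you need.
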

\begin{proof}[Proof sketch]
Similar to the proof of Prop.\ref{prop:sparse-calibrate} in Appendix.\ref{appdix:pf-prop3.3}.
\end{proof}
The optimization dynamics of F-Y losses are determined by the structure of its Hessian, $H(\bm{s}) = \nabla_{\bm{s}}^2 \ell(\bm{s})$, which is equivalent to the Jacobian of the prediction map, $J(\bm{s}) = \nabla_{\bm{s}} \hat{\bm{p}}(\bm{s})$. We analyze this structure for both SOP and WOP mappings.

\textbf{Dense/SOP (Softmax).} The Jacobian of the Softmax function is given by $J_{\mathrm{sm}}(\bm{s}) = \mathrm{diag}(\hat{\bm{p}}(\bm{s})) - \hat{\bm{p}}(\bm{s})\hat{\bm{p}}(\bm{s})^\top$. This matrix is symmetric, positive semidefinite, and has a rank of exactly $C-1$. Its null space is one-dimensional, spanned by the all-ones vector $\bm{1}$, i.e., $\mathrm{ker}(J_{\mathrm{sm}}) = \mathrm{span}(\bm{1})$, which corresponds to the shift-invariance of the Softmax function. Consequently, the Hessian provides well-conditioned curvature on the tangent space of the simplex, leading to a smooth, convex optimization landscape amenable to gradient-based methods.

\textbf{Sparse/WOP (Sparsemax/Entmax).} Conversely, the Jacobian of a WOP mapping is characterized by rank deficiency .
The Jacobian $J_{\mathrm{sp}}(\bm{s})$ is piecewise constant and block-structured. For a fixed support $\mathcal{P}(\bm{s})$ with size $m:=|\mathcal{P}(\bm{s})|$, the Sparsemax block\footnote{For $\alpha$-Entmax, an analogous block-sparse structure holds with rank at most $m-1$, while the within-support coefficients depend on $\alpha$. Detailed in Appendix.\ref{appdix:lipschitz-convergence}.} reads 
\begin{equation}
\begin{aligned}
    &J_{\mathrm{sp}}(\bm{s})\big|_{\mathcal{P}\times \mathcal{P}} \;=\; I_m - \tfrac{1}{m}\bm{1}\bm{1}^\top,\\
    &J_{\mathrm{sp}}(\bm{s})\big|_{\mathcal{P}^c \times \mathbb{R}^C} \;=\; \bm{0},
\end{aligned}
\end{equation}
so that $\mathrm{rank}(J_{\mathrm{sp}})\le m-1$.  Consequently, the Hessian $H_{\mathrm{sp}}=J_{\mathrm{sp}}$ admits large null spaces, yielding extended flat subspaces (zero curvature). Moreover, while $v^\top H_{\mathrm{sp}} v = 0$ for any $v\in\ker(H_{\mathrm{sp}})$, the first-order directional derivative $\langle \nabla_{\bm{s}}\ell, v\rangle$ vanish for directions supported entirely on off-support coordinates whose gradient components are zero\footnote{Visualizations can be found in Appendix.\ref{appdix:exp4}.}, which explains the lack of learning signal for many negatives.

In conclusion, sparse methods preserve the ordering requirements for Top-$k$ calibration and DCG consistency at the decision-level, but their WOP property induces (i) vanishing gradients off-support, (ii) large flat subspaces from Jacobian rank deficiency, all of which hinder optimization in practice. In contrast, SOP losses supply ubiquitous competitive gradients on the simplex tangent space, leading to more stable and efficient training. We provide formal theorems and detailed discussions of the drawbacks of WOP property in Appendix.\ref{appdix:WOP}.

\subsubsection{WOP Sparse Alternative: Rankmax}

A further member of the sparse F-Y family is \textbf{Rankmax} \citep{kong2020rankmax}. 
Rankmax explicitly conditions the support on the score of the ground-truth class, and is also weakly order preserving: strict logit inequalities may collapse into ties but are never inverted. In contrast to Sparsemax and $\alpha$-Entmax, which zero out off-support classes and may suffer from widespread vanishing gradients of hard negatives, Rankmax employs a ground-truth–centered threshold. This design achieves a targeted trade-off: it produces distributions that are sparser and more focused than dense Softmax, while mitigating the shortcomings of purely threshold-based sparse mappings that may discard informative hard negatives. A detailed theoretical analysis is deferred to Appendix~\ref{appdix:rankmax}.

\subsection{Convergence Rate of Softmax-family}

Building on the analysis of Jacobian dynamics, we further analyze how the spectral norms of their Jacobians characterize the smoothness of the objective. 

It is crucial to clarify that while deep neural networks are inherently \emph{non-convex} w.r.t. the full parameter set $\theta$, $\mathcal{L}(\bm{y}, \bm{s})$ are convex functions w.r.t. the logits $\bm{s}$ by construction. Consequently, the projection layer forms a strongly convex subproblem with $l_2$ regularization, whose smoothness plays a significant role in the optimization of the entire network, as it directly modulates the gradients back-propagated to the feature extractor $s_\theta(\cdot)$. Formally, using the chain rule $\nabla_{\theta}\mathcal{L}=J_{\theta|\bm{s}}^\top \nabla_{\bm{s}} \mathcal{L}_{\text{H}}$, the gradient magnitude w.r.t. feature parameters is upper-bounded by:
\begin{equation}
    \|\nabla_{\theta}\mathcal{L}\| \leq \|J_{\theta|\bm{s}}^\top\| \cdot \|\nabla_{\bm{s}}\mathcal{L}_{\text{H}}\| \leq \|J_{\theta|\bm{s}}^\top\| \cdot L_{\text{H}} \cdot \| \bm{s} - \bm{s}^*\|
\end{equation}
where $L_{\text{H}}$ denotes the smoothness constant of the prediction head. This implies that lower $L_{\text{H}}$ better scales and shapes the gradient signal received by all earlier layers. The detailed derivations are deferred to Appendix~\ref{appdix:lipschitz-convergence}, which yields the following ordering of linear convergence factors:
\begin{equation}
\begin{aligned}
 &\rho_{\rm NCE}\ <\ \rho_{\rm SSM} = \rho_{\rm SM} = \rho_{\rm HSM} \\
 &<\ \rho_{\alpha-\rm{Entmax}} \leq \rho_{\rm Sparsemax} .
\end{aligned}
\end{equation}
It is worth noting, however, that although sampling-based objectives admit equal or even smaller Jacobian spectral norms than full-score losses, their practical convergence rates are further affected by the bias introduced by sampling (Appendix~\ref{appdix:delta}). Consequently, they do not guarantee universally equal or faster convergence, and empirical tuning remains necessary in practice.

\subsection{Bias--Variance Decomposition for Softmax Approximations}

The computational bottleneck of Softmax Loss lies in evaluating the partition function 
$\log Z(\bm{s})=\log \sum_{j=1}^C \exp(s_j)$ (Section~\ref{chpt:bottleneck}). 
A rich line of work has proposed approximation strategies (Appendix~\ref{appdix:approximations}), 
typically studied in isolation. We show that these diverse methods can be subsumed under a unified F-Y risk framework. Given a convex potential $\Omega$, the surrogate loss is
\begin{equation}
    \ell_\Omega(\bm{y},\bm{s};\xi) \;=\; \Omega^*(\bm{s};\xi) + \Omega(\bm{y}) - \langle \bm{s},\bm{y}\rangle,
\end{equation}
with expected risk
\begin{equation}
    R_\Omega(\theta;\xi) \;=\; \mathbb{E}_{(x,y)\sim\mathcal{D}}\!\big[\ell_\Omega(\bm{y},\bm{s}_\theta(x);\xi)\big].
\end{equation}
Here $\xi$ denotes the approximation mechanism, encapsulating all scheme-specific variables (e.g., proposal distribution $Q$, sampling size $k$).  
Deterministic surrogates (e.g., Softmax, Sparsemax, $alpha$-Entmax) correspond to degenerate $\xi$ without randomness, 
whereas sampling-based approximations correspond to non-degenerate $\xi$.  
We keep the definition of $\ell_\Omega$ unchanged and encode all differences through~$\xi$.

Taking the exact Softmax risk as reference:
\begin{equation}
    R_\star(\theta)\;:=\;R_{\Omega_{\mathrm{SM}}}(\theta),\quad 
    \Omega_{\star}^*(\bm{s})=\log\sum_{j=1}^C e^{s_j},
\end{equation}
the deviation is
\begin{equation}
    \Delta R(\theta;\xi)\;=\;R_\Omega(\theta;\xi)-R_\star(\theta),
\end{equation}
which admits the decomposition
\begin{equation}
\begin{aligned}
    \Delta R(\theta;\xi) \;=\;&
    \underbrace{\mathbb{E}_\xi\big[R_\Omega(\theta;\xi)\big]-R_\star(\theta)}_{\text{Bias}}\\
    &+
    \underbrace{\big(R_\Omega(\theta;\xi)-\mathbb{E}_\xi[R_\Omega(\theta;\xi)]\big)}_{\text{Stochastic Noise}},\\
    \mathbb{E}_\xi\!\big[\Delta R(\theta;\xi)^2\big] \;&=\; \text{Bias}^2 \;+\; \underbrace{\mathbb{E}_\xi\!\big[\text{(Noise)}^2\big]}_{\text{Variance}}.
\end{aligned}
\end{equation}

This decomposition highlights two distinct effects of approximation: 
\begin{itemize} 
\item \textbf{Bias.} Quantifies the systematic deviation between the approximate and exact Softmax risk. It determines (i) whether approximation is effective and preserves consistency guarantees, and (ii) how bias magnitude influences optimization at the loss level. 
\item \textbf{Variance.} Captures stochastic fluctuations induced by sampling. It vanishes for deterministic approximations, but may significantly perturb training dynamics for sampling methods.
\end{itemize} 

By disentangling these two factors, the decomposition provides a principled lens for rigorously comparing existing approximations and understanding their optimization behavior.

\subsubsection{Delta-method Approximation}

To analyze the impact of approximations, we employ $\Delta$-method, a classical tool in asymptotic statistics. The detailed setup and analysis can be found in Appendix.\ref{appdix:delta}. We summarize the results for the approximation schemes of Appendix~\ref{appdix:approximations} in Tab.~\ref{tab:bias_variance}.

\begin{table*}[htbp]
\centering
\caption{Bias--Variance characterization of different surrogates relative to softmax cross-entropy. 
Here $k$ denotes the number of negative samples used in sampling-based methods.}
\begin{tabular}{lccc}
\toprule
\textbf{Surrogate} & $\operatorname{Bias}^{\text{asym}}$ & $\operatorname{Bias}^{\text{curv}}$ & $\operatorname{Var}$\\
\midrule
Softmax (ref) & $0$ & $0$ & $0$ \\ 
SSM-Simple & $\log\frac{e^{s_y}+k\mathbb{E}_Q[e^{s_{y'}}]}{\sum_{i}e^{s_{y_i}}}$ & $-\frac{k}{2({\hat{\Omega}^*})^2}\operatorname{Var_Q}(e^{s_{y'}})$ & $\frac{k}{({\hat{\Omega}^*})^2}\operatorname{Var_Q}(e^{s_{y'}})$ \\
SSM & 0 & $-\frac{1}{2k}\chi^2(P_{\bm{s}}\|Q)$ & $\frac{1}{k}\chi^2(P_{\bm{s}}\|Q)$  \\
NCE & $(1+k)\operatorname{JS}_{\tau}(P_{\bm{s}} \| Q)$ & 0 & $\frac{k}{(1+k)^2}\chi^2(P_{\bm{s}}\|Q)$ \\
\midrule
HSM & $\operatorname{KL}(P_{\bm{s}}\|P_{\text{HSM}})$ & 0 & 0 \\
RG & $O(\|\bm{s}\|^3)$ & 0 & 0 \\
\bottomrule
\end{tabular}
\label{tab:bias_variance}
\end{table*}

\subsection{Training Complexity Analysis}
\label{sec:train-cost-rg2-style}

We report \emph{per-epoch} asymptotic costs for the classification head of all Softmax-family losses(excluding the backbone). 
Let $N$ be the number of training examples per epoch, $C$ the number of classes, $k$ the number of sampled negatives, and $\mathcal{P}=|\operatorname{supp}(\hat{\bm p})|$ the active support for sparse heads, as summarized in Table~\ref{tab:train-cost}.

\begin{table*}[ht]
\caption{Asymptotic per-epoch training cost for representative heads.}
\centering
\begin{tabular}{lcl}
\toprule
\textbf{Loss} & \textbf{Per-epoch cost} & \textbf{Notes} \\
\midrule
Softmax & $\Theta(NC)$ 
& Dense \textsc{exp}/\textsc{log}/\textsc{div} \\
Sparsemax & $\Theta(N\,C\log C)$ 
& Threshold via sorting\\
$\alpha$-Entmax & $\Theta(N\,C\log C + N|\mathcal{P}|)$ 
& Threshold sort $+$ support-wise backprop \\
Rankmax & $\Theta(NC)$ 
& When setting to standard simplex \\
Sampled Softmax & $\Theta(Nk)$ 
& $(k{+}1)$ classes/update;\ sampling $O(1)$ \\
NCE & $\Theta(Nk)$ 
& One pos $+$ $k$ neg logistic terms; sampling $O(1)$\\
HSM & $\Theta(N\log C)$ 
& Depth $\simeq \lceil \log C\rceil$ with tree construction $O(|V|)$ \\
RG-ALS & --- 
& Dominated by solving closed-form solutions \\
\bottomrule
\end{tabular}
\label{tab:train-cost}
\end{table*}

\section{Experiments}

To complement our theoretical analysis, we identify the following key questions that require systematic experimental validation, which guides the design of our empirical study.

\textbf{Q1.} How do sparse alternatives perform relative to Softmax (SOP) on real-world classification and ranking metrics, given their shared consistency guarantees but differing convergence dynamics?

\textbf{Q2.} How well do the theoretical bias–variance decompositions align with empirical training behavior of sampling-based methods?

\textbf{Q3.} Do the computational complexity analysis translate into observable differences in training time across varying model sizes and architectures?

While Softmax variants have been extensively studied in NLP tasks \citep{niculae2018sparsemap, peters2019entmax, correia2019adaptively}, we follow the setups of \citet{kong2020rankmax} and \citet{pu2025rg2} to focus on the domain of recommender systems. This domain is particularly suitable as it emphasizes both classification metrics (P@$k$, R@$k$) and ranking metrics (N@$k$), involves large-scale sparse data where gradient dynamics are more salient, and supports diverse backbones spanning matrix factorization, sequential, and graph-based models. Moreover, recommendation systems are of high practical relevance due to wide industrial adoption and tangible user impact.

We evaluate three public benchmarks (\textbf{ML-1M, Electronics, Gowalla}) with three representative backbones: \textbf{MF}, \textbf{SASRec}, and \textbf{LightGCN}. Unless otherwise stated, we apply identical training protocols across methods, including batch size, optimizer, and regularization. We compare both exact surrogates (\textbf{Softmax, Sparsemax, $\alpha$-Entmax, Rankmax}) and approximate methods (\textbf{SSM, NCE, HSM, RG}). Implementation details are provided in Appendix~\ref{appdix:implement}.

\textbf{Q1: Accuracy under aligned training protocols.} For each (dataset, backbone), we evaluate all surrogates under identical hyperparameters: learning rates $\{10^{-3}, 5\times10^{-4}, 10^{-4}\}$ with model selection based on validation N@20, P@20, and R@20. Results are reported in Appendix~\ref{appdix:exp1}.

\textbf{Q2: Bias–variance decomposition vs.\ empirical behavior.} For sampling-based approximations, we sweep $k\in\{5,10,50,100\}$ and proposal distributions $Q\in\{\text{Uniform}, \text{Dynamic Negative Sampling (DNS)}\}$, and compare against validation metrics in Appendix~\ref{appdix:exp2}.

\textbf{Q3: Training-time efficiency.} We measure both per-epoch and cumulative wall-clock time, and report \emph{metric vs.\ epoch} and \emph{metric vs.\ time} curves in Appendix~\ref{appdix:exp3}.

\textbf{Gradient visualization.} In addition, we visualize gradient matrices with heatmaps, which highlight dense competitive gradients for SOP and extended flat regions for sparse WOP losses in Appendix~\ref{appdix:exp4}.

\subsection{Experimental Observations and Analysis}
\label{sec:findings}

Based on the comprehensive empirical evaluations, we conclude our observations into the following key findings, which is further discussed in Appendix.~\ref{appdix:exp}:

\noindent \textbf{Optimization Stability and Learning Dynamics.} 
\textbf{Softmax} possesses superior optimization stability compared to its sparse counterparts. Specifically, Softmax achieves peak performance at relatively higher learning rates, whereas sparse variants need more conservative step sizes to avoid divergence. This empirically validates our theoretical analysis regarding the Jacobian spectral norm: the smoother landscapes induced by Softmax facilitate more aggressive optimization. Furthermore, within sampling-based frameworks, \textbf{SSM} consistently outperforms \textbf{NCE} in stability, since it benefits from a bias that diminishes with increased sample size $k$, whereas the latter suffers from an irreducible bias that hinders convergence.

\noindent \textbf{Architectural Sensitivity.} 
A critical trade-off is observed between model complexity and sparsity. While sparse methods converge rapidly and perform competitively under simple architectures (e.g., \textbf{MF}), they exhibit significant performance degradation in expressive models like \textbf{SASRec} and \textbf{LightGCN}. This phenomenon suggests that the \textbf{WOP} property of sparse operators restricts the model's ability to learn from complex negative signals. Consequently, although sparse methods reduce the number of epochs in simple settings, they lack the robustness needed for high-capacity, sensitive architectures.

\noindent \textbf{Significance of Hard Negatives and Distribution Alignment.} 
The superiority of \textbf{Rankmax} underscores the importance of explicitly emphasizing hard negative samples, which provide the most informative gradients for discriminative representation learning. Similarly, the performance of sampling methods is highly sensitive to the proposal distribution $q$. Transitioning from \textbf{uniform} sampling to \textbf{DNS} yields monotonic improvements, confirming that aligning the sampling distribution with the target Softmax-MLE is essential for reducing optimization bias. 

\noindent \textbf{Computational Trade-offs in Practical Training.} 
Efficiency analysis reveals that although sparse methods require fewer epochs to reach their peak NDCG, they incur substantial computational overhead per iteration, especially on large-scale datasets. In contrast, sampling-based approaches, despite having lower theoretical performance ceilings, often achieve faster wall-clock training speeds and competitive results due to their lower iteration latency. Among non-sampling approximations, \textbf{RG} method demonstrates clear advantages over \textbf{HSM} by maintaining objective consistency and minimizing modeling mismatch.
\section{Related Works}

A detailed discussion of existing approaches has been provided in Section~\ref{sec:foundations}, we recall the most relevant directions that directly connect to our study.

\textbf{Statistical foundations.} 
The Softmax loss has been the centerpiece of theoretical investigation for decades, with its statistical consistency in both classification and ranking rigorously established through a sequence of works~\citep{zhang2004statistical2,bartlett2006convexity,cossock2008statistical,calauzenes2012non,ravikumar2011ndcg,lapin2016loss,yang2020consistency}. 
Building on these results, the Fenchel--Young framework~\citep{blondel2019fenchel} provided a principled unification that not only recovers Softmax as a special case but also motivates alternative constructions such as Sparsemax~\citep{martins2016sparsemax}, $\alpha$-Entmax~\citep{peters2019entmax}, and Rankmax~\citep{kong2020rankmax}. 
These sparse variants have been explored in applications ranging from structured prediction to neural sequence modeling, offering advantages in controllable sparsity, interpretability, and alignment with human-centric evaluation~\citep{niculae2018sparsemap,correia2019adaptively}.

\textbf{Computational scalability.} 
In parallel, the prohibitive cost of computing the log-partition in large output spaces has driven an extensive line of research on approximations. 
Sampling-based strategies includeNCE~\citep{gutmann2010nce} and SSM~\citep{jean2015sampledsoftmax}, which trade variance for efficiency. Deterministic approaches, by contrast, exploit structural decompositions such as HSM~\citep{morin2005hierarchical} or analytic surrogates based on Taylor expansions~\citep{banerjee2020exploring}. 
More recent developments investigate kernel-based or adaptive feature maps to scale further in extreme classification regimes~\citep{blanc2018adaptive}.


\section{Conclusion}

This work develops a unified theoretical framework of the Softmax-family losses. Through the analysis of Fenchel–Young framework, we establish their consistency properties both in classification and ranking scenarios. Besides, we analyze their convergence behaviors through Jacobian matrix, which clarifies why Softmax enjoys smooth optimization while sparse variants often struggle from defective gradient dynamics. For sampling-based approximations, we introduce a bias–variance decomposition that explicitly show trade-offs between computational costs and statistical fidelity. Training complexity analysis further highlights how to balance accuracy and efficiency in large-class learning.  

Overall, our results bridge statistical guarantees with optimization dynamics and computational constraints, offering both theoretical clarity and practical guidelines for selecting surrogate losses in extreme classification and ranking tasks. 
Together, these findings yield the following practical takeaways:
\begin{itemize}
\item When the number of classes is moderate: Softmax provides \textbf{SOP} and a smaller Jacobian spectral norm, which corresponds to better-conditioned optimization and faster convergence compared with its sparse variants, and should be tested as the default choice.
\item When $C$ is extremely large: The choice among approximate softmax surrogates can be guided by the bias–variance decomposition, enabling the selection of surrogate with the smallest total deviation and indicating how bias can be further reduced, e.g., by increasing sample size $k$ or adapting the proposal distribution $Q$.
\end{itemize}

\section*{Reproducibility}
All datasets used in this work (ML-1M, Amazon-Electronics, Gowalla) are publicly available.
Detailed hyperparameters (e.g., learning rate, batch size, optimizer, negative sample size $k$, and proposal distribution $Q$) and training settings are reported in the appendix. We report averaged results over multiple random seeds. Standard deviations are omitted since the differences are already highly significant.

\section*{Impact Statement}
This work is theoretical and does not involve human subjects or sensitive data. However, improvements to loss functions may be applied in downstream systems such as language models and recommendation, which could amplify biases or fairness concerns present in training data. We also note that large-scale training with softmax approximations has environmental impact, and encourage responsible and sustainable use of the proposed methods.

\bibliographystyle{ACM-Reference-Format}
\bibliography{example_paper}

\newpage
\appendix
\onecolumn
\section{Foundations}
\subsection{Evaluation Metrics}
\label{appdix:metrics}

The ultimate objective of ML model is to excel at task-specific evaluation metrics. These metrics, usually non-convex and discontinuous, evaluates a model's performance and are the actual quantities we wish to optimize. They operate directly on the model's output scores $s$ to measure performance against the ground-truth $y$, yet their direct optimization is computationally intractable. This challenge motivates using a smooth surrogate losses for training.

As for classification task, where an input belongs to exactly one class, let $y^* = \underset{i\in\{1,\cdots,C\}}{\operatorname{argmax}} y_i$, the natural and fundamental 0-1 error is given as:
    \begin{equation}
    \ell_{\operatorname{Top}-1}(\bm{y,s}) = \mathbf{1}\left\{\exists j \neq y^* \text{ s.t. } s_j \ge s_{y^*}\right\}.
     \end{equation}
A vast body of literature has been dedicated to analyzing whether minimizing a given surrogate is \textbf{consistent} with minimizing the 0-1 loss. \citet{zhang2004statistical2} and \citet{bartlett2006convexity} established the property of \textbf{classification-calibration}, which demonstrates whether a classifier learned via the surrogate will converge to the Bayes-optimal classifier with respect to the 0-1 loss. Beyond this, practical applications with a large number of classes often employ more lenient criteria, motivating the shift to the Top-$k$ error, a more forgiving metric that equals 1 if the true class is not among the $k$ highest-scoring classes:
\begin{equation}
    \ell_{\operatorname{Top}-k}(\bm{y,s}) = \mathbf{1}\left\{y^* \notin \operatorname{Top}_k(\bm{s})\right\},
\end{equation}
where $\operatorname{Top}_k(\bm{s})$ is the set of indices of the top $k$ scores. Discussions of \textbf{Top-$k$ calibration} can be found in \citet{lapin2016loss, yang2020consistency}. While the Top-$k$ error is intuitive for multi-class problems, recent theoretical advances \citep{menon2019multilabel} have revealed that surrogate losses optimizing Top-$k$ error serve as principled multi-label \textbf{reductions}, which can be transformed into a multi-label surrogate loss consistent with metrics like Precision@$k$ (\textbf{P@$k$}) or Recall@$k$(\textbf{R@$k$}). This insight provides a powerful bridge, reframing a classic classification metric as a key technique for tackling more complex multi-label scenarios. Let $\pi(\bm{s})$ be the permutation of indices $\{1, \dots, C\}$ that sorts $\bm{s}$ in descending order, $|\bm{y}| = \sum_i y_i$ be the total number of relevant items, 
\begin{equation}
\begin{aligned}
\text{P}@k(\bm{y,s}) &= \frac{1}{k} \sum_{i=1}^k y_{\pi(i)},\\
\text{R}@k(\bm{y,s}) &= \frac{1}{|\bm{y}|} \sum_{i=1}^k y_{\pi(i)}.
\end{aligned}
\end{equation}

As for the domain of ranking, which is critical for industrial applications like search and recommendation where the precise order of predictions is paramount. A standard metric for this setting is \textbf{DCG} (or its normalised version \textbf{NDCG}), a position-sensitive metric that rewards placing more relevant items at higher ranks. Its truncated version with cutoff at $k$ is given as:
\begin{equation}
\begin{aligned}
    \text{N}@k(\bm{y,s}) &= \frac{\operatorname{DCG}@k(\bm{y,s})}{\operatorname{IDCG}@k(\bm{y})}, \\
    \operatorname{where} \quad \operatorname{DCG}@k &= \sum_{i=1}^k \frac{y_{\pi(i)}}{\log_2(i+1)}. 
\end{aligned}
\end{equation}

The corresponding loss, $\ell_{\operatorname{NDCG}} = 1 - \operatorname{NDCG}$, is a cornerstone of modern ranking systems. Choosing \textbf{DCG} as a main focus is a good choice from theoretical perspectives. Positive results have shown that for specific ranking surrogates, consistency with \textbf{DCG} is indeed achievable \citep{ravikumar2011ndcg, cossock2008statistical}. On the other hand, \citet{calauzenes2012non} proves that \textbf{NO} convex surrogate loss is calibrated for other familiar ranking metrics (including Average Precision and Mean Reciprocal Rank). Thus, \textbf{DCG} becomes the canonical and most representative metric for the rigorous analysis of ranking surrogates.

Since these evaluation metrics are discrete, they cannot be optimized directly with gradient-based methods. Instead, ML training process relies on minimizing a smooth, convex surrogate loss, in which the softmax cross-entropy is a good choice.
\subsection{Fenchel--Young loss framework}
\label{appdix:FYLoss}
The Fenchel--Young (F--Y) framework\citep{blondel2019fenchel} provides a unifying recipe for constructing convex and classification-calibrated losses from a convex regularizer. 
Let $\Omega : \Delta^C \to \mathbb{R}$ be a convex potential function defined over the probability simplex $\Delta^C$. 
Its Fenchel conjugate is
\begin{equation}
    \Omega^{*}(\bm{s}) = \sup_{\bm{p} \in \Delta^C} \{\langle \bm{s, p} \rangle - \Omega(\bm{p})\}, \quad \bm{s} \in \mathbb{R}^C
\end{equation}

\begin{definition}[Fenchel-Young Loss]
Given a label\footnote{As for multi-label scenario where $\bm{y}\notin\Delta^C$, we naturally replace $\bm{y}$ with $\bm{y}/\|\bm{y}\|_1\in\Delta^C$.} $\bm{y} \in \{0,1\}^C$, a Fenchel-Young loss is defined as
\begin{equation}
    \mathcal{L}_\Omega(\bm{y}, \bm{s}) = \Omega^{*}(\bm{s}) + \Omega(\bm{y}) - \langle \bm{s,y} \rangle.
\end{equation}
The predicted probabilities are obtained from the gradient of the conjugate,
\begin{equation}
    \bm{\hat p(s)} = \nabla \Omega^{*}(\bm{s}),
\end{equation}
and the gradient of the loss has the simple form
\begin{equation}
    \nabla_{\bm{s}} \ell_\Omega(\bm{y,s}) = \bm{\hat p(s)} - \bm{y}.
\end{equation}
\end{definition}
The F-Y construction guarantees convexity, smooth optimization, and classification-calibration (Fisher-consistency)\citep{williamson2016composite,blondel2019fenchel}. If $\Omega(\bm{y}) = \sum_i y_i \log y_i$ (negative Shannon entropy), then its conjugate is $\Omega^{*}(\bm{s}) = \log \sum_{i=1}^C \exp (s_i)$, and the predicted probability is the softmax function in Eq.(\ref{eq:softmaxfunction}). The F--Y loss recovers exactly Softmax loss.

The F-Y framework provides a principled and constructive alternative on analyzing and designing surrogate losses. Before its introduction, the design of loss
functions in machine learning was largely heuristic, with activation functions and training losses often employed separately. F-Y losses, however, start from a single convex regularizer $\Omega$, which jointly derives both a
prediction map $\hat{\bm{y}} = \nabla\Omega^*(\bm{s})$ and a convex, differentiable surrogate loss $\mathcal{L}_\Omega$. Once $\Omega$ is specified, the resulting loss
follows automatically, and its gradient is guaranteed to take the residual form $\hat y - y$. This unifies the design of activation functions and losses under the same object, and enables choosing corresponding surrogates for complex or sparse output spaces.

\paragraph{Other examples.}
Alternative choices of $\Omega$ yield new mappings and loss families:

- If $\Omega(\bm{p}) = \tfrac{1}{2}\|\bm{p}\|_2^2$, one obtains \emph{Sparsemax} \citep{martins2016sparsemax}, which projects scores $s$ onto the simplex via Euclidean projection, yielding sparse probability distributions.  

- If $\Omega$ is a Tsallis $\alpha$-entropy \citep{tsallis1988possible}, one obtains the \emph{$\alpha$-Entmax} family \citep{peters2019entmax}, which interpolates between softmax ($\alpha=1$) and sparsemax ($\alpha=2$), producing distributions of controllable sparsity.

\subsection{Bregman Divergence}
\label{appdix:bregman}
A powerful tool for analyzing consistency is the Bregman divergence \citep{bregman1967relaxation}. Given a continuously-differentiable and strictly convex function $\Omega: \mathbb{R}^C \to \mathbb{R}$, the Bregman divergence $D_\Omega: \mathbb{R}^C \times \mathbb{R}^C \to \mathbb{R}$ is defined as:
\begin{equation}
    D_\Omega(\bm{p}, \bm{q}) = \Omega(\bm{p}) - \Omega(\bm{q}) - \langle \nabla\Omega(\bm{q}), \bm{p} - \bm{q} \rangle.
\end{equation}
The significance of this tool lies in its direct connection to Bayes-optimal predictors under a given loss. The goal of a learning algorithm is to find a function $f$ that minimizes the expected surrogate risk, 
\begin{equation}
\begin{aligned}
    R_{\mathcal{L}}(f) = \mathbb{E}_{(\bm{x}, \bm{y}) \sim D}[\mathcal{L}(\bm{y}, f(\bm{x}))]= \mathbb{E}_{\bm{x}} \left[ \mathbb{E}_{\bm{y} \sim P(\bm{y}|\bm{x})} [\mathcal{L}(\bm{y}, f(\bm{x}))] \right].
\end{aligned}
\end{equation}
To minimize the global risk, one can minimize the inner expectation for each point $\bm{x}$ independently. This defines the pointwise Bayes-optimal prediction $\bm{p}^*(\bm{x})$ for a model that outputs a probability vector:
\begin{equation}
\label{eq:bayes_optimal_predictor}
    \bm{p}^* = \arg\min_{\hat{\bm{p}} \in \Delta^C} \mathbb{E}_{\bm{y} \sim \bm{\eta}(\bm{x})} [\mathcal{L}(\bm{y},\bm{\hat p})],
\end{equation}
where $\bm{\eta}(\bm{x})$ is the true conditional probability vector $[\mathbb{P}(y=i|\bm{x})]_{i=1}^C$. Here we naturally replace the variable $\bm{s} = f(\bm{x})$ with $\bm{\hat p}$. If a surrogate loss can be expressed as a Bregman divergence, this minimization has a unique solution \citep{reid2011information,blondel2019fenchel}. Given that $D_\Omega(\bm{p}, \bm{q}) \ge 0$ with equality holding if and only if $\bm{p} = \bm{q}$. Therefore, the unique minimizer is:
\begin{equation}
    \bm{p}^* = \bm{\eta}(\bm{x}).
\end{equation}
This direct alignment—showing that minimizing the risk forces the prediction to match the true posterior—is the cornerstone of proving consistency for various ML tasks.
\subsection{Connection to Mirror Descent}
\label{appendix:md}
Fenchel–Young losses admit a natural interpretation under the geometry of Mirror Descent. Given a convex potential $\Omega$, MD updates follow
\begin{equation}
\bm{x}_{t+1} = \arg\min_{\bm{x}}\bigl\langle \nabla f(\bm{x}_t), \bm{x} \bigr\rangle + \tfrac{1}{\eta} D_\Omega(\bm{x}, \bm{x}_t),
\end{equation}
where $D_\Omega$ is the Bregman divergence induced by $\Omega$. MD performs gradient steps in the dual geometry defined by $\Omega$, with the mirror map
$\nabla\Omega$ governing how gradients are transported between these spaces.
The optimality condition of MD implies 
\begin{equation}
\bm{s}^* = \nabla\Omega(\bm{x}^*)\Leftrightarrow \nabla\Omega^*(\bm{s}^*) = \bm{x}^*.
\end{equation}
Similarly, a Fenchel–Young loss generated by $\Omega$ enforces this same condition in
supervised learning:
\begin{equation}
\nabla_{\bm{s}} L_\Omega(\bm{y,s})
= \nabla\Omega^*(\bm{s}) - \bm{y},
\end{equation}
driving the model toward $\hat{\bm{y}} = \nabla\Omega^*(\bm{s})=\bm{y}$.
Thus, F–Y losses can be viewed as supervised analogues of MD’s mirror-space optimality under same geometry. The choice of $\Omega$ simultaneously connects the model’s geometry, its prediction rule, and the structure of its induced supervised loss.
\subsection{Softmax Approximations}
\label{appdix:approximations}

\textbf{Sampling-based Methods.}

\textbf{Sampled Softmax.} \citet{jean2015sampledsoftmax} proposed an efficient approximation by restricting normalization to the true class $y$ and a set of $M$ sampled negatives $\{y^n_1,\dots,y^n_M\}$:
{\small
\begin{equation}
    \mathcal{L}_{\text{SSM-Simple}}(\bm{y,s}) = 
    -\sum_{i:y_i=1}\left(\log \frac{\exp(s_{y_i})}{\exp(s_{y_i}) + \sum_{j=1}^M \exp(s_{y^n_j})}\right).
\end{equation}
}
This formulation reduces computational complexity from $O(C)$ to $O(M)$, but introduces a bias since the partition function is no longer computed over all classes. To handle this, a bias correction is often applied. Specifically, if the negatives are drawn from a proposal distribution $q(\cdot)$, the logits of the sampled classes are adjusted by subtracting $\log q(y^n_j)$:
\begin{equation}
    \tilde{s}_{y^n_j} = s_{y^n_j} - \log q(y^n_j), \quad j=1,\dots,M,
\end{equation}
and the corrected sampled softmax loss becomes
{\small
\begin{equation}
    \mathcal{L}_{\text{SSM}}(\bm{y,s}) =
    -\sum_{i:y_i=1}\left(\log \frac{\exp(\tilde{s}_{y_i})}{\exp(\tilde{s}_{y_i}) + \sum_{j=1}^M \exp(\tilde{s}_{y^n_j})}\right)
\end{equation}
}
which ensures the gradient an unbiased estimator of the full softmax gradient in expectation over the sampling distribution $q$.

\textbf{NCE.}\ \citet{gutmann2010nce} reformulates density estimation as a binary classification problem distinguishing true samples $(s_{y_i},y_i)$ from noise samples $(s_{y^n_j},y^n_j)$ drawn from a fixed distribution $q(\cdot)$. The surrogate loss is
\begin{equation}
\begin{aligned}
    \mathcal{L}_{\text{NCE}}(\bm{y,s}) = - \sum_{i:y_i=1}\Big[\log \sigma\big(s_{y_i} - \log (k q(y_i))\big)-\sum_{j=1}^k \log \sigma\big(-s_{y_j^n} + \log (k q(y^n_j))\big)\Big],
\end{aligned}
\end{equation}
where $\sigma(\cdot)$ is the sigmoid and $k$ is the number of negatives. It's worth noting that NCE is asymptotically consistent with maximum likelihood estimation.

\textbf{Deterministic Approximations.}

\textbf{(I) Hierarchical Softmax.}
\citet{morin2005hierarchical} replaces the Softmax mapping over global scores $\{s_j\}_{j=1}^C$ with a tree factorization. 
Let $\operatorname{path}(y_i)=(u_1,\dots,u_{L_i})$ be the path to label $y_i$, and $\mathcal{C}(u)$ the children of node $u$. 
At each internal node $u$, HSM uses node-local scores $s_u(j)$, $j\in\mathcal{C}(u)$, to form a local softmax
\begin{equation}
p(j\mid u)=\frac{\exp\!\big(s_u(j)\big)}{\sum_{k\in\mathcal{C}(u)}\exp\!\big(s_u(k)\big)},
\end{equation}
and the class probability is the product along the path
\begin{equation}
\hat{\bm{p}}_{\mathrm{HSM}}(y_i)=\prod_{u\in\operatorname{path}(y)} p\!\left(c(u)\mid u\right).
\end{equation}
We define the implicit global score for class $j$ as the sum of node-specific logits along its path: 
\begin{equation}
s_j = \sum_{u \in \operatorname{path}(j)} s_u(c(u)),
\end{equation}
which provides basis for analyzing HSM relative to the standard Softmax in our discussion. The training objective is the negative log-likelihood
\begin{equation}
\mathcal{L}_{\mathrm{HSM}}(\bm{y})=-\sum_{i:y_i=1} \log \hat{\bm{p}}_{\mathrm{HSM}}(y_i).
\end{equation}
Crucially, HSM uses node-specific logits $s_u(\cdot)$; it is therefore a different output  rather than a drop-in loss on the same $\{s_j\}$, which cannot be naturally plugged into all backbones. 



\paragraph{ (II) Taylor Approximations.}
\citet{banerjee2020exploring} proposed \emph{Taylor-softmax}, which approximates the exponential in softmax by a finite-order Taylor expansion. 
Specifically, $e^{s_i}$ is replaced with a low-order polynomial $1+s_i+\tfrac{1}{2}s_i^2$, followed by normalization across classes. 
This yields a probability distribution that is computationally cheaper and empirically competitive with the standard softmax on classification benchmarks. 
Higher-order expansions and variants such as margin-based Taylor-softmax were also considered, showing that truncated polynomial approximations can serve as viable surrogates to the exponential mapping. 
More recently, \citet{pu2025rg2} applied a Taylor expansion directly to Softmax loss. Expanding the log-partition function $\log \sum_j e^{s_j}$ around the origin yields a closed-form quadratic surrogate:
{\small
\begin{equation}
    \hat{Z}_{\text{RG}}(\bm{s})
    = \log C + \tfrac{1}{C}\sum_{j=1}^C s_j
    + \tfrac{1}{2}\,\bm{s}^\top\!\left(\tfrac{1}{C}I-\tfrac{1}{C^2}\mathbf{1}\mathbf{1}^\top\right)\bm{s}.
\end{equation}
}
The surrogate loss is
\begin{equation}
    \mathcal{L}_{\text{RG}}(\bm{y,s})
    = -\sum_{i:y_i=1}\left(s_i - \hat{Z}_{\text{RG}}(\bm{s})\right).
\end{equation}
This loss eliminates the need to compute the partition function $Z(\bm{s})$ and can be optimized efficiently via alternating least squares (ALS). 
\section{Supplementaries of Main Results}
\subsection{Proof of Proposition 3.3}
\label{appdix:pf-prop3.3}

\begin{proposition}
    $\forall k > 1$, Sparsemax and $\alpha$-Entmax are Top-$k$ calibrated and DCG-consistent.
\end{proposition}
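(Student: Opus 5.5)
The plan is to reduce both properties to one statement about the prediction map: the pointwise conditional risk forces $\hat{\bm p}(\bm s)$ toward the target vector. Weak order preservation, together with continuity, then turns this into the needed score ordering.

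\textbf{Conditional risk.} Fix $\bm x$. Let $\bm\eta\in\Delta^C$ be the conditional mean of the (normalized) label, $\bm\eta=\mathbb{E}[\bm y/\|\bm y\|_1\mid \bm x]$. This equals the posterior in the multi-class case. In the binarized ranking case it is the relevance vector that DCG-consistency is measured against, following \citet{ravikumar2011ndcg}; I take this identification for granted, as the paper does.

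Since $\mathcal L_\Omega$ is affine in $\bm y$, the conditional risk is
\[
\mathbb{E}_{\bm y}\mathcal L_\Omega(\bm y,\bm s)=\Omega^*(\bm s)-\langle \bm s,\bm\eta\rangle+\mathbb{E}\,\Omega(\bm y).
\]
By Fenchel--Young, its infimum over $\bm s$ is attained exactly when $\bm\eta\in\partial\Omega^*(\bm s)$, i.e.\ when $\hat{\bm p}(\bm s)=\bm\eta$. Such $\bm s$ exist: for Sparsemax take $\bm s=\bm\eta$, since the Euclidean projection fixes simplex points, and for $\alpha$-Entmax take $\bm s\in\partial\Omega(\bm\eta)$ via a subgradient of the Tsallis entropy. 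Hence the excess conditional risk is exactly
\[
\mathcal L_\Omega(\bm\eta,\bm s)=\Omega^*(\bm s)+\Omega(\bm\eta)-\langle\bm s,\bm\eta\rangle.
\]
The minimizing set is not a singleton, because $\hat{\bm p}$ is not injective, so I cannot argue through a unique inverse as for Softmax.

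\textbf{Excess risk controls the prediction error.} I will show that for some constant $c>0$,
\[
\mathcal L_\Omega(\bm\eta,\bm s)\ge c\,\|\hat{\bm p}(\bm s)-\bm\eta\|^2 .
\]
For Sparsemax this is the known bound $L_\Omega\ge D_\Omega$ with $\Omega=\tfrac12\|\cdot\|^2$ \citep{blondel2019fenchel}, which gives $c=\tfrac12$. For $\alpha$-Entmax with $\alpha\in(1,2]$, I would use that the Tsallis regularizer restricted to $\Delta^C$ is strongly convex with respect to $\|\cdot\|_2$ (its Hessian is diagonal with entries $\propto p_i^{\alpha-2}\ge 1$ on the simplex). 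Writing $\bm\mu=\hat{\bm p}(\bm s)$, one has $\Omega^*(\bm s)=\langle\bm s,\bm\mu\rangle-\Omega(\bm\mu)$ and $\bm s\in\partial\Omega(\bm\mu)$ up to the normal cone of $\Delta^C$. Strong convexity then gives
\[
\Omega(\bm\eta)\ge\Omega(\bm\mu)+\langle\bm s,\bm\eta-\bm\mu\rangle+\tfrac{c}{2}\|\bm\eta-\bm\mu\|^2 .
\]
I expect this step, especially the normal-cone and boundary bookkeeping for Entmax, to be the main technical obstacle. The fallback is the generic inequality $L_\Omega\ge D_\Omega$ combined with a modulus of strong convexity on the simplex.

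\textbf{From predictions to score order.} Let $(\bm s_n)$ be any sequence whose excess conditional risk tends to $0$. Then $\hat{\bm p}(\bm s_n)\to\bm\eta$. Take $i,j$ with $\eta_i>\eta_j$. For all large $n$ we get $\hat p_i(\bm s_n)>\hat p_j(\bm s_n)$.

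WOP (Proposition on sparse methods being WOP) then forces $s_{n,i}>s_{n,j}$, by contraposition. Indeed, $s_{n,j}>s_{n,i}$ would imply $\hat p_j\ge\hat p_i$. Also $s_{n,j}=s_{n,i}$ would give $\hat p_j=\hat p_i$, because $\hat{\bm p}$ is permutation-equivariant. Either way we would contradict $\hat p_i(\bm s_n)>\hat p_j(\bm s_n)$.

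Consequences:
\begin{itemize}
\item \emph{Top-$k$.} Every strict inequality in $\bm\eta$ is eventually reproduced strictly in $\bm s_n$. So $\operatorname{Top}_k(\bm s_n)$ is eventually a valid top-$k$ set of $\bm\eta$, and ties in $\bm\eta$ do not affect Top-$k$ risk. This gives Top-$k$ calibration in the sense of \citet{lapin2016loss,yang2020consistency}; by the reduction of \citet{menon2019multilabel} it also covers P@$k$ and R@$k$.
\item \emph{DCG.} The same statement, namely that $\bm s_n$ is eventually order-preserving with respect to $\bm\eta$, is exactly the sufficient condition of \citet{ravikumar2011ndcg} for DCG-consistency.
\end{itemize}

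A uniformity argument over $\bm x$ is not needed beyond this pointwise, sequence-based characterization, which is how calibration is defined in those works.

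The hypothesis $k>1$ plays no special role in the argument beyond matching the statement. The only inputs are WOP, the symmetry of $\hat{\bm p}$, and the quadratic lower bound. This is also why I expect the same plan to yield the subsequent theorem that WOP is sufficient for calibration, assuming a comparable lower bound for the general regularizer.
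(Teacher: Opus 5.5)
Your proof is correct, and it takes a genuinely different route from the paper's. The paper argues only about \emph{exact} minimizers. It solves the first-order condition $\mathrm{sparsemax}(\bm s^*)=\bm p$ and reads off the threshold structure: $s^*_i=p_i+\tau^*$ on $\mathcal P=\operatorname{supp}(\bm p)$ and $s^*_j\le\tau^*$ off it. From this it derives support separation and order preservation within the support, and concludes that $\operatorname{Top}_k(\bm s^*)$ and the ranking induced by $\bm s^*$ are Bayes-optimal; Entmax is handled by asserting the same structure. You never describe the minimizer set. Instead you lower-bound the excess conditional risk by $c\|\hat{\bm p}(\bm s)-\bm\eta\|^2$, then transfer every strict inequality of $\bm\eta$ to an arbitrary near-minimizing sequence via the monotone thresholding of $\hat{\bm p}$ and permutation equivariance.

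The paper's route is shorter, and its explicit picture of the minimizer set (off-support scores free below $\tau^*$) is what its later discussion of WOP ties and flat directions builds on. Your route buys rigor and generality. Top-$k$ calibration and the DCG criterion are strict-gap conditions,
$$\inf_{\bm s\notin P_k(\bm\eta)}\mathbb{E}_{\bm y}\mathcal L_\Omega(\bm y,\bm s)>\inf_{\bm s}\mathbb{E}_{\bm y}\mathcal L_\Omega(\bm y,\bm s),$$
with $P_k(\bm\eta)$ the top-$k$-preserving score vectors. They are therefore statements about approximate minimizers. Here the risk is non-coercive and the minimizer set is unbounded (shifts along $\bm 1$, off-support coordinates arbitrarily negative), so passing from exact to approximate minimizers needs an argument. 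The paper does not give it; your quadratic bound supplies it. Your argument is also uniform in the mapping and needs no inverse formula for Entmax. It also makes explicit what, beyond WOP, does the work: excess risk controlling $\hat{\bm p}$, e.g.\ via strong or merely strict convexity of $\Omega$ on $\Delta^C$. The paper's ``WOP is sufficient'' theorem tacitly relies on exactly this.

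Two small remarks.

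First, the step you expect to be the main obstacle is routine. Normalize the Tsallis regularizer so that $\hat p_i=((\alpha-1)(s_i-\tau))_+^{1/(\alpha-1)}$, as in the paper. Then $\Omega-\tfrac12\|\cdot\|^2$ is separable with second derivatives $p_i^{\alpha-2}-1\ge 0$ on $(0,1]$, hence convex on $[0,1]^C$. So $\Omega$ plus the indicator of $\Delta^C$ is $1$-strongly convex, and the strong-convexity inequality holds at every subgradient. In particular it holds for $\bm s$ viewed as a subgradient at $\hat{\bm p}(\bm s)$, boundary points included.

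Second, identifying $\mathbb{E}[\bm y/\|\bm y\|_1\mid\bm x]$ with the vector that governs DCG is valid for one-hot labels, the only case the paper's proof treats, but not in general. Take $C=3$ and $\bm y=(1,0,1),(0,1,0),(0,0,1)$ with probabilities $0.5,0.4,0.1$. Then $\mathbb{E}[\bm y/\|\bm y\|_1]=(0.25,0.4,0.35)$, while the DCG-optimal order follows $\mathbb{E}[\bm y]=(0.5,0.4,0.6)$. So state the DCG part in the one-hot setting rather than taking the identification for granted.
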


\begin{proof}
We first present the proof for the Sparsemax case and then generalize. Let $\bm{s} \in \mathbb{R}^C$ be a vector of scores and $\bm{p} \in \Delta^C$ be the true conditional probability distribution. Let $\bm{y} \in \{0,1\}^C$ be a one-hot label vector drawn according to the distribution $\bm{p}$.

\paragraph{1. Bayes Optimality Condition}
The Sparsemax loss, $\mathcal{L}_{\text{sparsemax}}(\bm{y}, \bm{s})$, is derived from the regularizer $\Omega(\bm{p}) = \frac{1}{2}\|\bm{p}\|_2^2$. The predicted probability vector is given by the gradient of the conjugate, $\bm{\hat p(s)} = \nabla \Omega^*(\bm{s}) = \text{sparsemax}(\bm{s})$.

The pointwise conditional risk is the expected loss $\mathcal{L}(\bm{s}; \bm{p}) = \mathbb{E}_{\bm{y} \sim \bm{p}}[\mathcal{L}_{\text{smax}}(\bm{y}, \bm{s})]$. From the F--Y gradient identity, we have:
\begin{equation}
\nabla_{\bm{s}} \mathcal{L}(\bm{p}; \bm{s}) = \bm{\hat p(s)} - \bm{p} = \text{sparsemax}(\bm{s}) - \bm{p}.
\end{equation}
Setting the gradient to zero, any Bayes-optimal score vector $\bm{s}^*$ that minimizes the risk must satisfy the optimality condition:
\begin{equation}
\text{sparsemax}(\bm{s}^*) = \bm{p}.
\label{eq:bayes_eq}
\end{equation}

\paragraph{2. Structure of the Bayes-Optimal Solution}
Let $\mathcal{P} := \{i \mid p_i > 0\}$ be the support of the true distribution $\bm{p}$. The optimality condition in Eq.(\ref{eq:bayes_eq}) imposes a clear structure on the optimal scores $\bm{s}^*$. By the definition of Sparsemax, there must exist a threshold $\tau^*$ such that:
\begin{align}
\forall i \in \mathcal{P}, \quad s_i^* - \tau^* &= p_i \implies s_i^* = p_i + \tau^* \\
\forall j \notin \mathcal{P}, \quad s_j^* - \tau^* &\le 0 \implies s_j^* \le \tau^*
\end{align}
From this structure, two critical properties emerge:
\begin{enumerate}
    \item[\textit{(i)}] \textbf{Support Separation:} The scores of classes in the true support are strictly greater than the scores of classes outside the support.
    \begin{equation}
    \min_{i \in \mathcal{P}} s_i^* = (\min_{i \in \mathcal{P}} p_i) + \tau^* > \tau^* \ge \max_{j \notin \mathcal{P}} s_j^*.
    \end{equation}
    \item[\textit{(ii)}] \textbf{Order Preservation within Support:} Within the set of support classes, the scores are perfectly rank-ordered according to their true probabilities.
    {\small
    \begin{equation}
    \forall i, j \in \mathcal{P}, s_i^* - s_j^* = (p_i + \tau^*) - (p_j + \tau^*) = p_i - p_j
    \end{equation}}
    which directly implies $s_i^* > s_j^* \iff p_i > p_j$.
\end{enumerate}

\paragraph{3. Top-$k$ Calibration}
Recall that $\operatorname{Top}_k(\bm{v})$ denote the set of indices of the top $k$ components of any vector $\bm{v}$. Given the Bayes-optimal decision $\operatorname{Top}_k(\bm{p})$, we show that $\operatorname{Top}_k(\bm{s}^*)$ is a Bayes-optimal set for any $k$.
\begin{itemize}
    \item \textbf{Case $k \le |\mathcal{P}|$}: The top $k$ classes of $\bm{p}$ are all within the support $\mathcal{P}$. By property \textit{(ii)}, the relative order of scores $s_i^*$ for all $i \in \mathcal{P}$ exactly matches the order of probabilities $p_i$. Thus, $\operatorname{Top}_k(\bm{s}^*) = \operatorname{Top}_k(\bm{p})$.
    \item \textbf{Case $k > |\mathcal{P}|$}: By property \textit{(i)}, all scores in $\mathcal{P}$ are ranked strictly higher than all scores not in $\mathcal{P}$. Therefore, $\operatorname{Top}_k(\bm{s}^*)$ must contain all of $\mathcal{P}$. The remaining $k - |\mathcal{P}|$ positions are filled by indices from outside $\mathcal{P}$. Since all $j \notin \mathcal{P}$ have $p_j=0$, they are all tied for the lowest rank, and any selection constitutes a Bayes-optimal completion.
\end{itemize}
In both cases, $\operatorname{Top}_k(\bm{s}^*)$ is a Bayes-optimal set, proving Top-$k$ calibration.

\paragraph{4. DCG-Consistency}
The expected DCG is maximized by ranking classes in non-increasing order of their true probabilities $p_i$. The optimal scores $\bm{s}^*$ induce exactly such a ranking. Property \textit{(ii)} ensures the correct ordering within the support $\mathcal{P}$, and property \textit{(i)} ensures that all classes in $\mathcal{P}$ are ranked strictly before all classes not in $\mathcal{P}$. Therefore, the ranking from $\bm{s}^*$ is Bayes-optimal for DCG.

\paragraph{Generalization to $\alpha$-Entmax}
The proof for the F--Y loss derived from the $\alpha$-Entmax regularizer, $\Omega_\alpha(\bm{p})$, follows the same structure. The optimality condition becomes $\text{entmax}_\alpha(\bm{s}^*) = \bm{p}$. The structure of the solution for $i \in \mathcal{P}$ also yields the same \textbf{Support Separation} and \textbf{Order Preservation within Support} properties. As both crucial properties are preserved, the conclusions for Top-$k$ calibration and DCG-consistency follow directly.
\end{proof}
\subsection{Theoretical Understandings of WOP}
\label{appdix:WOP}

We provide formal results quantifying the effect of WOP losses on ranking-based metrics and optimization dynamics.
We first establish a lower bound on expected DCG degradation due to ties, then analyze the gradient bias induced by sparse WOP mappings.

\subsubsection{Expected DCG lower bound under ties}

\begin{theorem}[Tie-induced expected DCG loss lower bound]
\label{thm:dcg-lower-bound}
Fix a block of tied scores occupying positions \(\{z+1,\dots,z+m\}\) in the ranking induced by \(s\),
with \(m\in\mathbb{N}\).
Let \(r:=\sum_{t=1}^{m} y_{\pi(z+t)}\) be the number of relevant items in this block.
Let \(w_i := 1/\log_2(i+1)\).
Then, relative to the block-wise optimal arrangement (placing the \(r\) relevant items at the
earliest \(r\) positions of this block), any tie-breaking scheme whose expectation is uniform over
the \(m!\) permutations satisfies
{\small
\begin{equation}
\mathbb{E}[\operatorname{DCG}_{\operatorname{block}}]
= \frac{r}{m}\sum_{i=1}^{m} w_{z+i},\ 
\operatorname{DCG}^\star_{\operatorname{block}}
= \sum_{i=1}^{r} w_{z+i}.
\end{equation}
}
Therefore the expected DCG loss obeys
\begin{equation}
\begin{aligned}
\Delta\operatorname{DCG}_{\operatorname{block}}
:= \operatorname{DCG}^\star_{\operatorname{block}}-\mathbb{E}[\operatorname{DCG}_{\operatorname{block}}]
\ge \sum_{i=1}^{r} w_{z+i}-\frac{r}{m}\sum_{i=1}^{m} w_{z+i}\;\;\ge 0.
\end{aligned}
\end{equation}
For normalized \(\operatorname{NDCG}\), dividing both sides by \(\operatorname{IDCG}(y)\) gives the corresponding lower bound.
\end{theorem}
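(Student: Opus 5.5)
The plan is to compute both quantities in the statement directly and then compare them with a rearrangement-type inequality that uses only the monotonicity of the weights $w_i = 1/\log_2(i+1)$.

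\textbf{Expected DCG under uniform tie-breaking.} The block contributes $\sum_{t=1}^m y_{\pi(z+t)}\,w_{z+t}$, and only the order inside the block is random. Fix a relevant item $j$ in the block. Under a tie-breaking scheme that is uniform in expectation over the $m!$ arrangements, the marginal law of $j$'s position is uniform on $\{z+1,\dots,z+m\}$. By linearity of expectation, $j$ therefore contributes $\frac1m\sum_{i=1}^m w_{z+i}$. Summing over the $r$ relevant items gives $\mathbb{E}[\operatorname{DCG}_{\operatorname{block}}]=\frac{r}{m}\sum_{i=1}^m w_{z+i}$. Only marginals are used here, so the scheme need not be exactly uniform; uniform marginals suffice.

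\textbf{The block optimum.} Since $w_i$ is strictly decreasing in $i$ (because $\log_2(i+1)$ is increasing), the rearrangement inequality shows that the maximum over arrangements of $r$ ones and $m-r$ zeros in the block puts the ones first. This gives $\operatorname{DCG}^\star_{\operatorname{block}}=\sum_{i=1}^r w_{z+i}$.

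\textbf{Nonnegativity of the gap.} We need $\frac1r\sum_{i=1}^r w_{z+i}\ge \frac1m\sum_{i=1}^m w_{z+i}$ for $r\le m$, with the case $r=0$ trivially giving zero. This is the standard fact that prefix averages of a nonincreasing sequence are nonincreasing in the prefix length. To prove it, write $A_r$ for the average of the first $r$ terms. Then
\[
A_{r+1}-A_r=\frac{w_{z+r+1}-A_r}{r+1}\le 0,
\]
since each of the first $r$ terms is at least $w_{z+r+1}$. Multiplying $A_r\ge A_m$ by $r$ yields $\Delta\operatorname{DCG}_{\operatorname{block}}\ge 0$. The stated inequality holds with equality, which trivially implies the ``$\ge$'' as written.

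\textbf{NDCG version.} $\operatorname{IDCG}(y)>0$ depends only on $y$, not on the tie-breaking. Dividing by it therefore preserves both the expectation identity and the inequality.

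\textbf{Main obstacle.} The only real subtlety is justifying the marginal-uniformity step: it is what the hypothesis ``expectation is uniform over the $m!$ permutations'' must be read as supplying, and everything else is elementary. It is also worth noting the strictness condition. The gap is strictly positive exactly when $0<r<m$, since then $w$ is strictly decreasing on the block; this is the case that quantifies the WOP degradation.
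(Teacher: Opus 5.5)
Your proposal is correct and follows essentially the same route as the paper. Both arguments use linearity of expectation under uniform tie-breaking: you sum each relevant item's uniform position marginal, while the paper sums over positions, each occupied by a relevant item with probability $r/m$, which is the same computation. Both then place the relevant items first for the block optimum and use monotonicity of $w_i$ for nonnegativity. You make that last step explicit with the prefix-average argument, and you correctly note that the displayed ``$\ge$'' is in fact an equality.
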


\begin{proof}
Inside the block, each position is occupied by a relevant item with probability \(r/m\).
By linearity of expectation, the expected DCG contribution is
\(\tfrac{r}{m}\sum_{i=1}^m w_{z+i}\).
The block-optimal DCG is obtained by placing all \(r\) relevant items at the earliest slots,
giving \(\sum_{i=1}^r w_{z+i}\).
Subtracting yields the bound, and monotonicity of \(w_i\) ensures nonnegativity.
\end{proof}

\subsubsection{Gradient bias of sparse WOP mappings}
\label{sec:sparse-bias-revised}

We measure alignment with the improvement direction of a DCG-consistent surrogate, defined as
\begin{equation}
    d_{\operatorname{DCG}}(\bm{y},\bm{s}) := -\nabla\mathcal{L}(\bm{y},\bm{s}),
\end{equation}
which ensures that larger inner products indicate better alignment with DCG-improving updates.

\begin{proposition}[Sparse WOP: reduced alignment with DCG improvement direction]
\label{prop:sparse-bias-general}
Let $\hat p(\bm{s})$ be the prediction of a WOP sparse Fenchel--Young mapping
(e.g., Sparsemax/Entmax), with loss gradient
\begin{equation}
\operatorname{grad}_{\operatorname{FY}}(\bm{y},\bm{s}) = \hat p(\bm{s}) - \bm{y}.
\end{equation}
Let $\pi$ sort $\bm{s}$ in descending order.
Consider a DCG-consistent surrogate admitting a pairwise gradient form
{\small
\begin{equation}
\begin{aligned}
d_{\operatorname{DCG}}(\bm{s};\bm{y})
= \sum_{i:\ y_{\pi(i)}=1}\ \sum_{j:\ y_{\pi(j)}=0}\ 
\alpha_{i,j}\,\bigl(\bm{e}_{\pi(j)}-\bm{e}_{\pi(i)}\bigr),\text{where}\ \alpha_{i,j} \ge c \cdot w_{\pi(i)} \;\;(c>0),
\end{aligned}
\end{equation}
}
with position weights $w_r = 1/\log_2(r+1)$.
Define the index set of zeroed negative components
\begin{equation}
\mathcal{Z}(\bm{s}) := \{\, j:\ \hat p_j(\bm{s}) = 0, y_j = 0\}.
\end{equation}
Construct a comparator gradient $\operatorname{grad}_{\operatorname{FY}}^{(+)}$ that coincides with $\operatorname{grad}_{\operatorname{FY}}$,
except that for $j\in\mathcal{Z}(\bm{s})$, we replace zero entries by nonnegative values
$\tilde p_j(\bm{s}) \ge 0$.
Then
\begin{equation}\begin{aligned}
\big\langle \operatorname{grad}_{\operatorname{FY}}(\bm{y},\bm{s}),\ d_{\operatorname{DCG}}(\bm{y},\bm{s}) \big\rangle
\le
\big\langle \operatorname{grad}_{\operatorname{FY}}^{(+)}(\bm{y},\bm{s}),\ d_{\operatorname{DCG}}(\bm{y},\bm{s}) \big\rangle-\sum_{j\in \mathcal{Z}(\bm{s})}\sum_{i:\ y_{\pi(i)}=1}
\alpha_{i,j}\,\tilde p_j(\bm{s}),
\end{aligned}
\end{equation}
\end{proposition}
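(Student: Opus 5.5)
The plan is to show that the inequality in Proposition~\ref{prop:sparse-bias-general} in fact holds with equality, by a direct linear-algebra computation. The two gradients differ only on the zeroed negative coordinates. On those coordinates the DCG direction $d_{\operatorname{DCG}}$ is known explicitly from its pairwise form.

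\textbf{Step 1: the difference vector.} By construction, $\operatorname{grad}_{\operatorname{FY}}^{(+)}$ coincides with $\operatorname{grad}_{\operatorname{FY}}=\hat p(\bm{s})-\bm{y}$ except on $\mathcal{Z}(\bm{s})$. For $j\in\mathcal{Z}(\bm{s})$ we have $\hat p_j(\bm{s})=0$ and $y_j=0$, so the original entry is $0$ and the modified entry is $\tilde p_j(\bm{s})$. Hence
\begin{equation*}
\operatorname{grad}_{\operatorname{FY}}^{(+)}-\operatorname{grad}_{\operatorname{FY}}=\sum_{j\in\mathcal{Z}(\bm{s})}\tilde p_j(\bm{s})\,\bm{e}_j ,
\end{equation*}
and by bilinearity
\begin{equation*}
\langle \operatorname{grad}_{\operatorname{FY}},d_{\operatorname{DCG}}\rangle=\langle \operatorname{grad}_{\operatorname{FY}}^{(+)},d_{\operatorname{DCG}}\rangle-\sum_{j\in\mathcal{Z}(\bm{s})}\tilde p_j(\bm{s})\,[d_{\operatorname{DCG}}]_j .
\end{equation*}

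\textbf{Step 2: the coordinates of $d_{\operatorname{DCG}}$ on $\mathcal{Z}(\bm{s})$.} This is the only place needing care, because $\alpha_{i,j}$ is indexed by rank positions while $\mathcal{Z}(\bm{s})$ is a set of class indices. Fix a class $j\in\mathcal{Z}(\bm{s})$ and write $j=\pi(r_j)$. In the pairwise expansion, the vector $\bm{e}_{\pi(r)}-\bm{e}_{\pi(i)}$ contributes to coordinate $j$ in only two ways.
\begin{itemize}
\item Through $+\bm{e}_{\pi(r)}$ when $r=r_j$. This is admissible because $y_j=0$, so $r_j$ lies in the negative-index range of the sum.
\item Through $-\bm{e}_{\pi(i)}$ when $\pi(i)=j$. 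This is impossible, since $i$ ranges only over positive positions with $y_{\pi(i)}=1$, while $y_j=0$.
\end{itemize}
Therefore
\begin{equation*}
[d_{\operatorname{DCG}}]_j=\sum_{i:\,y_{\pi(i)}=1}\alpha_{i,r_j}.
\end{equation*}
With the statement's convention of identifying $j$ with its rank slot, this is $\sum_i\alpha_{i,j}$.

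\textbf{Step 3: conclusion.} Substituting Step 2 into Step 1 gives the claimed relation with equality, which in particular yields the stated $\le$. The subtracted term is nonnegative, because $\alpha_{i,j}\ge c\,w_{\pi(i)}>0$ and $\tilde p_j\ge 0$. It can further be lower bounded by
\begin{equation*}
c\sum_{j\in\mathcal{Z}(\bm{s})}\tilde p_j(\bm{s})\sum_{i:\,y_{\pi(i)}=1} w_{\pi(i)},
\end{equation*}
which makes explicit that zeroing the negatives strictly reduces alignment whenever $\mathcal{Z}(\bm{s})\neq\emptyset$ and some $\tilde p_j>0$.

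I expect the main obstacle to be the index bookkeeping in Step 2: confirming that no negative $e_{\pi(i)}$ term can land on a zeroed coordinate, and reconciling the position-versus-class labelling of $\alpha_{i,j}$. The analytic content is otherwise trivial.
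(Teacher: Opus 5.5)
Your proposal is correct and follows essentially the same route as the paper. Both arguments take the inner product of the difference $\operatorname{grad}_{\operatorname{FY}}^{(+)}-\operatorname{grad}_{\operatorname{FY}}$ (supported on $\mathcal{Z}(\bm{s})$) with $d_{\operatorname{DCG}}$ using the pairwise form, note that only the negative coordinate of each pair contributes, and so obtain the stated relation with equality; your coordinate-wise bookkeeping via $j=\pi(r_j)$ is in fact cleaner than the paper's pair-by-pair version, which silently conflates rank positions with class indices.
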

\begin{proof}
For $j\in\mathcal{Z}(\bm{s})$, we have $(\operatorname{grad}_{\operatorname{FY}})_j=0$ and
$(\operatorname{grad}_{\operatorname{FY}}^{(+)})_j=\tilde p_j(\bm{s})\ge 0$. For each pair $(i,j)$ with $y_{\pi(i)}=1$, $y_{\pi(j)}=0$,
\begin{equation}
\begin{aligned}
\big\langle \operatorname{grad}_{\operatorname{FY}}^{(+)}-\operatorname{grad}_{\operatorname{FY}},\ 
   -\alpha_{i,j}(\bm{e}_{\pi(i)}-\bm{e}_{\pi(j)}) \big\rangle
= \alpha_{i,j}\,(\operatorname{grad}_{\operatorname{FY}}^{(+)}-\operatorname{grad}_{\operatorname{FY}})_j
= \alpha_{i,j}\,\tilde p_j(\bm{s}) \;\ge\; 0,
\end{aligned}
\end{equation}
because the $\pi(i)$-coordinate cancels and only the $j$-coordinate contributes.
Summing over all $(i,j)$ yields
\begin{equation}\begin{aligned}
\langle \operatorname{grad}_{\operatorname{FY}}^{(+)}-\operatorname{grad}_{\operatorname{FY}},\ d_{\operatorname{DCG}} \rangle= \sum_{j\in \mathcal{Z}(\bm{s})} \sum_{i:\ y_{\pi(i)}=1}
   \alpha_{i,j}\,\tilde p_j(\bm{s}) \;\ge\; 0,
\end{aligned}\end{equation}
\end{proof}

\subsection{WOP Sparse Alternative: Rankmax}
\label{appdix:rankmax}
Beyond Sparsemax and $\alpha$-Entmax, another sparse variant of F-Y losses is \textbf{Rankmax} proposed by \citet{kong2020rankmax}. 
Unlike Sparsemax/Entmax, which truncate all classes with full-score distribution, Rankmax explicitly leverages the score of the ground-truth class to determine the support. For the simplest version, Rankmax admits the following closed form:
\begin{equation}
    \hat p^{\operatorname{rm}}_i(\bm{s};y) 
    \;=\; \frac{(s_i - s_y + 1)_+}{\sum_{j=1}^C (s_j - s_y + 1)_+},
\end{equation}
Thus Rankmax always assigns positive mass to the true class $y$, while aggressively zeroing out all classes whose scores lie below $s_y-1$. 

\begin{proposition}
    Rankmax is WOP.
\end{proposition}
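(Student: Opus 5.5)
The plan is to verify the two clauses of the WOP definition directly from the closed form
\[
\hat p^{\mathrm{rm}}_i(\bm{s};y)=\frac{(s_i-s_y+1)_+}{\sum_{j=1}^C (s_j-s_y+1)_+}.
\]
First, the normalizer is well defined and strictly positive for every $\bm{s}$. The summand for $j=y$ equals $(s_y-s_y+1)_+=1$, so $D(\bm{s}):=\sum_j (s_j-s_y+1)_+\ge 1>0$. Consequently $\hat p^{\mathrm{rm}}(\bm{s};y)\in\Delta^C$, and the map is a common positive rescaling of the vector $g(\bm{s})$ with entries $g_i(\bm{s}):=(s_i-s_y+1)_+$. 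Since dividing by the same $D(\bm{s})>0$ does not change comparisons between coordinates, it suffices to compare $g_i$ and $g_j$.

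For the monotonicity clause, fix $i\neq j$ with $s_i>s_j$. The map $t\mapsto (t-s_y+1)_+$ is nondecreasing in $t$, so $g_i(\bm{s})\ge g_j(\bm{s})$, and therefore $\hat p_i\ge \hat p_j$. We also note when equality is strict. If $s_i>s_y-1$, then $g_i>0$ and $g_i>g_j$, because either $g_j=0$ or $g_j=s_j-s_y+1<s_i-s_y+1$. So no inversion can ever occur, and ties happen only when both $s_i,s_j\le s_y-1$.

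For the non-strictness clause, we exhibit an explicit witness. Take $C\ge 3$ with indices $y,i,j$ distinct, and set $s_y=0$, $s_i=-2$, $s_j=-3$. Then $s_i>s_j$, while $g_i=(-1)_+=0$ and $g_j=(-2)_+=0$, so $\hat p_i=\hat p_j=0$. This matches the WOP definition. In keeping with that definition's ``for all $\bm{s}$'' quantifier, the argument must also handle the dependence on $y$: Rankmax is a family of maps indexed by the label, so we state that the result holds for each fixed $y$.

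The main obstacle is this label dependence, not the computation. The WOP definition is phrased for a map $\hat p:\mathbb{R}^C\to\Delta_C$, but Rankmax is really $\hat p(\cdot\,;y)$. The check must be made for each fixed $y$, and the witness requires $C\ge 3$; for $C=2$ no pair $i,j$ both distinct from $y$ exists. The pair $i=y$, $j\neq y$ is never tied, since $\hat p_y\ge 1/D>0$. For $C=2$ one would need to note this degenerate exception or assume $C\ge 3$, which is the standing large-$C$ setting.
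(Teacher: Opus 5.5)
Your proof is correct and takes the same approach as the paper: a common positive normalizer together with the monotonicity of $t\mapsto(t-s_y+1)_+$ rules out inversions, and ties occur exactly for pairs lying at or below the ground-truth-centered threshold $s_y-1$. Your version is more careful than the paper's one-line argument in three ways: you verify $D(\bm{s})\ge 1$, you state the tie condition correctly as $s_i\le s_y-1$ (the paper writes the strict inequality $s_y-1>s_i$, which misses the boundary case), and you note that a witness needs three distinct indices $i,j,y$, so $C\ge 3$ is required.
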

\begin{proof}
    If $s_i > s_j $, then $\hat p^{\operatorname{rm}}_i\geq \hat p^{\operatorname{rm}}_j$ always translates, whose equality only satisfies when $s_y - 1 > s_i > s_j$.
    Hence Rankmax does not satisfy SOP but WOP.
    
\end{proof}

Compared to Sparsemax and Entmax, Rankmax exhibits stronger reliance on the ground-truth: the normalization is explicitly centered at the true-class score $s_y$, making the gradient magnitude directly sensitive to the confidence in the correct label and ensuring that no hard negatives with $s_j\ge s_y$ are missed. In contrast, Sparsemax and $\alpha$-Entmax determine their threshold $\tau$ from the global distribution of logits, without focusing on $s_y$, and thus cannot reliably distinguish whether the true class is already highly confident or severely under-confident.

Besides, Softmax produces dense gradients, spreading the update budget over easy negatives and never truly halting even on large-margin examples. This may dilute the critical updates between the ground truth and hard negatives. Rankmax, although being sparse and WOP, concentrates updates on hard negatives while automatically stopping on high-confidence cases, thereby focusing more on critical comparisons for ranking tasks.

\subsection{Convergence Analysis}
\label{appdix:lipschitz-convergence}

We now analyze the convergence of gradient descent by examining the curvature of the prediction mapping $\hat{\bm p}$. 
Since the Hessian satisfies
\begin{equation}
\nabla^2_{s}\,\mathcal{L}(\bm{y,s})=\nabla_{\bm{s}} \bm{\hat p}(\bm{s})=:J(\bm{s}),
\end{equation}
the smoothness of the objective at the logit level is governed by the spectral norm $\|J(\bm{s})\|_2$. For a parametric model $\bm{s}=\bm{s}_\theta(x)$, the chain rule yields that the Hessian with respect to $\theta$ involves both $J(\bm{s})$ and the Jacobian $\partial \bm{s}/\partial\theta$. Consequently, an upper bound on the smoothness parameter of the loss is given by
\begin{equation}
L_\theta \le \sup_{\bm{s}}\|J(\bm{s})\|_2 \cdot \sup_{x,\theta}\Bigl\|\tfrac{\partial \bm{s}}{\partial \theta}\Bigr\|_2^2.
\end{equation}

In practice, it is common to add $\ell_2$-regularization $\tfrac{\gamma}{2}\|\theta\|_2^2$ with $\gamma>0$,
both to improve generalization and to stabilize optimization. From an optimization perspective,
this modification adds $\gamma I$ to the parameter Hessian, ensuring that the overall objective
is $\mu_\theta$-strongly convex with at least $\mu_\theta \ge \gamma$. 
At the same time, the smoothness parameter is shifted to
\begin{equation}
L_\theta \le \sup_{\bm{s}}\|J(\bm{s})\|_2 \cdot \sup_{\theta}\Bigl\|\tfrac{\partial \bm{s}}{\partial \theta}\Bigr\|_2^2 +\gamma.
\end{equation}

i.e.\ it belongs to the standard class of 
$(\mu_\theta,L_\theta)$-smooth strongly convex functions. For $L$-smooth function $f$, gradient descent $\theta_{t+1}=\theta_t-\eta \nabla f(\theta_t)$ satisfies
$f(\theta_{t+1})\le f(\theta_t)$ for any step size $0<\eta<2/L$. If, in addition, $f$ is $\mu$-strongly convex, then choosing the constant step $\eta^\star=\frac{2}{L+\mu}$ yields the tight contraction\citep{nesterov1998introductory},
\begin{equation}
\|\theta_t-\theta^\star\|_2 \le \Bigl(\tfrac{\kappa-1}{\kappa+1}\Bigr)^t \|\theta_0-\theta^\star\|_2,\ \text{where}\ \kappa=\tfrac{L}{\mu}
\end{equation}

The condition number of the regularized problem is then
\begin{equation}\label{eq:condition_num}
\kappa := \frac{L_\theta}{\mu_\theta}
\le 1 + \frac{\sup_{\bm{s}}\|J(\bm{s})\|_2 \cdot \sup_{\theta}\|\partial \bm{s}/\partial\theta\|_2^2}{\gamma}.
\end{equation}

\begin{lemma}[Jacobian of Softmax]
\label{lem:softmax-J}
For $\hat p_i(\bm{s})=\exp(s_i)/\sum_j \exp(s_j)$ one has
\begin{equation}
J_{\rm sm}(\bm{s}) = \mathrm{diag}(\hat{\bm p})-\hat{\bm p}\hat{\bm p}^\top,
\|J_{\rm sm}(\bm{s})\|_2 \le \tfrac12, \ \ \forall s.
\end{equation}
\end{lemma}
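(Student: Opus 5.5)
First I will derive the Jacobian by direct differentiation. Write $Z=\sum_j e^{s_j}$ and $\hat p_i=e^{s_i}/Z$. Then $\partial \hat p_i/\partial s_j = \delta_{ij}\hat p_i - e^{s_i}e^{s_j}/Z^2 = \delta_{ij}\hat p_i - \hat p_i\hat p_j$. Collecting these entries gives $J_{\rm sm}(\bm{s})=\mathrm{diag}(\hat{\bm p})-\hat{\bm p}\hat{\bm p}^\top$. This matrix is symmetric, so its spectral norm is the largest absolute eigenvalue.

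Next I will show that $J_{\rm sm}$ is positive semidefinite. For any $\bm v$ we have $\bm v^\top J_{\rm sm}\bm v=\sum_i \hat p_i v_i^2-(\sum_i \hat p_i v_i)^2$. This is the variance of $v_I$ when the index $I\sim\hat{\bm p}$, so it is $\ge 0$. Hence $\|J_{\rm sm}\|_2=\lambda_{\max}=\sup_{\|\bm v\|_2=1}\mathrm{Var}_{\hat{\bm p}}(v_I)$.

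The main step is to bound this variance by $\tfrac12$ uniformly. I would use the elementary bound $\mathrm{Var}(X)\le \mathbb{E}[(X-c)^2]$ for every constant $c$, choosing $c=(v_{\max}+v_{\min})/2$. That gives $\mathrm{Var}\le (v_{\max}-v_{\min})^2/4$. Let $i,j$ be the indices attaining $v_{\max}$ and $v_{\min}$. Under $\|\bm v\|_2=1$ we have $(v_i-v_j)^2\le 2(v_i^2+v_j^2)\le 2$, so $\mathrm{Var}\le \tfrac12$.

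As a cross-check I would also use Gershgorin's theorem. Row $i$ has diagonal entry $\hat p_i(1-\hat p_i)$ and off-diagonal absolute row sum $\hat p_i(1-\hat p_i)$. Every eigenvalue therefore lies in $[0,\,2\max_i \hat p_i(1-\hat p_i)]\subseteq[0,\tfrac12]$, since $p(1-p)\le\tfrac14$. This gives the same constant and is the cleaner route; I would present it as the primary argument.

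Finally, I would note that the bound is approached when $\hat{\bm p}$ concentrates mass $\tfrac12$ on each of two classes. In that case the eigenvalue along $(\bm e_i-\bm e_j)/\sqrt2$ equals $\tfrac12$, so the constant is tight. I would also record that $J_{\rm sm}\bm 1=\hat{\bm p}-\hat{\bm p}(\hat{\bm p}^\top\bm 1)=\bm 0$, which confirms the kernel $\mathrm{span}(\bm 1)$ stated in the main text.

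I expect no real obstacle. The only care needed is choosing a uniform bound (variance or Gershgorin) rather than a cruder one like $\|J\|_2\le\|\mathrm{diag}(\hat{\bm p})\|_2\le 1$, which would lose the factor $\tfrac12$.
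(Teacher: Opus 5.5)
Your proposal is correct. Your variance argument is essentially the paper's proof. The paper writes the Rayleigh quotient $\bm x^\top J_{\rm sm}\bm x$ as $\mathrm{Var}_{\hat{\bm p}}(X)$, invokes Popoviciu's inequality $\mathrm{Var}\le\frac14(M-m)^2$, and uses $(M-m)^2\le 2$ for unit vectors. Your choice $c=(v_{\max}+v_{\min})/2$ simply proves Popoviciu's inequality. Your justification $(v_i-v_j)^2\le 2(v_i^2+v_j^2)\le 2$ is cleaner than the paper's. Your explicit observation that $J_{\rm sm}$ is symmetric positive semidefinite is what lets you pass from a bound on the Rayleigh quotient to a bound on $\|J_{\rm sm}\|_2$; the paper leaves that step implicit.

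The Gershgorin route you would promote to primary is genuinely different. It works entrywise and gives positive semidefiniteness and the upper bound in one step, since each disc meets the real line in $[0,2\hat p_i(1-\hat p_i)]$. It also yields the sharper state-dependent bound $\|J_{\rm sm}(\bm s)\|_2\le 2\max_i\hat p_i(1-\hat p_i)$, which vanishes as the softmax saturates. That is the kind of dynamic smoothness constant the paper invokes for Rankmax.

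The variance route keeps the Fisher-information reading of the Hessian and makes the extremal case transparent. Equality in Popoviciu forces mass $\frac12$ at each of two points $\pm 1/\sqrt2$, which is exactly your tightness example. It is also the template the paper reuses for the weighted-variance bound in the $\alpha$-Entmax lemma.

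One small overstatement, outside the lemma itself: $J_{\rm sm}\bm 1=\bm 0$ only shows $\bm 1\in\ker J_{\rm sm}$. The equality $\ker J_{\rm sm}=\mathrm{span}(\bm 1)$ needs one more line. Since every $\hat p_i>0$, $\mathrm{Var}_{\hat{\bm p}}(v_I)=0$ forces $\bm v$ to be constant, and for a positive semidefinite $J$ one has $\bm v^\top J\bm v=0\iff J\bm v=\bm 0$.
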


\begin{proof}
For any unit vector $\bm{x}\in\mathbb{R}^C$, the Rayleigh quotient of $J_{\rm sm}$ coincides with the variance of some \emph{r.v.} $X$
\begin{equation}
\bm{x}^\top J_{\rm sm} \bm{x}
= \sum_i p_i x_i^2 - \Big(\sum_i p_i x_i\Big)^2
= \operatorname{Var}_{p}(X),
\end{equation}
where $X$ taking value $x_i$ with probability $p_i$.

By Popoviciu’s inequality, 
\begin{equation}
\operatorname{Var}_p(X) \le \tfrac14(M-m)^2,
\end{equation}
with $M=\max_i x_i$ and $m=\min_i x_i$. 
Since $\|\bm{x}\|_2=1$ implies $(M-m)^2 \le (\sqrt{2}/2+\sqrt{2}/2)^2 = 2$, we obtain 
\begin{equation}
\bm{x}^\top J_{\rm sm}\bm{x} \le \tfrac{1}{2}.
\end{equation}
\end{proof}
As for Sparsemax, $\hat p_i(\bm{s})=\max\{s_i-\tau(\bm{s}),0\}$, we have
\begin{lemma}[Jacobian of Sparsemax]
\label{lem:sparsemax-J}
Within a fixed support region $\mathcal{P}$,
\begin{equation}
J_{\rm sp}(\bm{s})\big|_{\mathcal P\times\mathcal P}
=I_{|\mathcal{P}|}-\tfrac{1}{|\mathcal{P}|}\mathbf{1}\mathbf{1}^\top,
J_{\rm sp}(\bm{s})\big|_{\mathcal P^c\times \mathbb{R}^C}=0,
\end{equation}
whose spectrum is \(\{1\ (\text{with multiplicity }|\mathcal{P}|-1),\ 0\}\). Thus
\begin{equation}
\label{eq:L-sparsemax}
\sup_{\bm{s}}\|J_{\rm sp}(\bm{s})\|_2=1.
\end{equation}
\end{lemma}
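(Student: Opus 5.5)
The plan is to compute the Jacobian of the Sparsemax map directly from its thresholded form and then read off its spectrum. Fix $\bm{s}$ in the interior of a region on which the support $\mathcal P=\mathcal P(\bm{s})$ is constant, meaning no $s_j$ equals $\tau(\bm{s})$. On such a region the normalization $\sum_{i\in\mathcal P}(s_i-\tau)=1$ gives $\tau(\bm{s})=\bigl(\sum_{i\in\mathcal P}s_i-1\bigr)/m$ with $m=|\mathcal P|$. This is affine in $\bm{s}$, so $\partial\tau/\partial s_j=\tfrac1m\mathbf 1\{j\in\mathcal P\}$.

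Differentiating coordinatewise gives the block structure.
\begin{itemize}
\item For $i\in\mathcal P$, $\hat p_i=s_i-\tau$, so $\partial\hat p_i/\partial s_j=\delta_{ij}-\tfrac1m$ when $j\in\mathcal P$ and $0$ otherwise.
\item For $i\notin\mathcal P$, $\hat p_i\equiv0$ locally, so its whole row vanishes.
\end{itemize}
After permuting coordinates so that $\mathcal P$ comes first, $J_{\rm sp}=\mathrm{diag}\bigl(I_m-\tfrac1m\mathbf 1\mathbf 1^\top,\;0\bigr)$, which is exactly the displayed form. The block $I_m-\tfrac1m\mathbf 1\mathbf 1^\top$ is the orthogonal projector onto $\mathbf 1^\perp\subset\mathbb R^m$. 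Its eigenvalues are therefore $1$ with multiplicity $m-1$ and $0$ on $\mathbf 1$; the zero block adds only further zeros. This gives the stated spectrum and $\|J_{\rm sp}(\bm{s})\|_2=1$ whenever $m\ge2$. If $m=1$, then $J=0$.

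For the supremum we need two facts. The upper bound $\le1$ holds because every such $J$ is an orthogonal projector. Attainment holds because any $\bm{s}$ with two or more classes in the support lies in an open region where $J$ has norm $1$; for example, $\bm{s}=0$ gives $m=C$.

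The main obstacle is the nondifferentiable boundary set where some $s_j=\tau(\bm{s})$. There I would argue as follows. Sparsemax is the Euclidean projection onto $\Delta^C$, so it is $1$-Lipschitz; hence any generalized (Clarke) Jacobian there is a convex combination of the neighboring projectors and has norm at most $1$. The boundary set also has measure zero. So the supremum over all $\bm{s}$, interpreted for the smoothness constant, is still $1$, which establishes Eq.~(\ref{eq:L-sparsemax}).
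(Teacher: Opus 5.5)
Your proof is correct and follows the same route as the paper, whose entire argument is that the support block $I_{|\mathcal P|}-\tfrac{1}{|\mathcal P|}\mathbf 1\mathbf 1^\top$ is the orthogonal projector onto $\mathbf 1^\perp$. Beyond that, you derive the block form from the affine threshold $\tau(\bm s)=(\sum_{i\in\mathcal P}s_i-1)/m$, treat the degenerate case $m=1$, show the supremum is attained (e.g.\ at $\bm s=0$), and handle the nondifferentiable boundary through nonexpansiveness of the Euclidean projection onto $\Delta^C$; the paper leaves all of these implicit.
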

\begin{proof}
The eigen-structure follows since \(I_{|\mathcal{P}|}-\tfrac{1}{|\mathcal{P}|}\bm{11}^\top\) is the orthogonal projector onto \(\mathbf{1}^\perp\).
\end{proof}
Besides, $\alpha$-Entmax admits the form 
\begin{equation}
    \hat p_i(\bm{s})=\big((\alpha-1)(s_i-\tau(\bm{s}))_+\big)^{\frac{1}{\alpha-1}}, \sum_i \hat p_i(\bm{s})=1.
\end{equation} 
Let $\bm{a} = [a_1,\cdots, a_{|\mathcal{P}|}]$ where $a_i:=\hat p_i^{2-\alpha}$ on the support $\mathcal P$ and $S_{\bm{a}}:=\sum_{i\in\mathcal{P}} a_i$, we have the following results:
\begin{lemma}[Jacobian of $\alpha$-Entmax]
\label{lem:entmax-J}
Within a fixed support region $\mathcal{P}$,
\begin{equation}
\label{eq:J-entmax}
\begin{aligned}
J_\alpha(\bm{s})\big|_{\mathcal P\times\mathcal P}
&=\mathrm{diag}(\bm{a})-\frac{\bm{aa}^\top}{S_{\bm{a}}},
J_\alpha(\bm{s})\big|_{\mathcal P^c\times \mathbb{R}^C}=0.
\end{aligned}
\end{equation}
\end{lemma}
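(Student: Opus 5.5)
Because $\bm{a}$ depends on $\bm{s}$ only through $\hat{\bm p}$, the natural route is implicit differentiation of the entmax normalization constraint, restricted to a fixed support $\mathcal P$. Within the region where $\mathcal P(\bm{s})$ is locally constant, every $i\notin\mathcal P$ satisfies $s_i<\tau(\bm{s})$ strictly. On a neighbourhood, therefore, $\hat p_i\equiv 0$ for those $i$, and the rows indexed by $\mathcal P^c$ vanish. For $i\in\mathcal P$ we write $\hat p_i=\big((\alpha-1)(s_i-\tau(\bm{s}))\big)^{1/(\alpha-1)}$ and differentiate with respect to $s_j$ by the chain rule. Using $\big((\alpha-1)(s_i-\tau)\big)^{\frac{1}{\alpha-1}-1}=\hat p_i^{\,2-\alpha}=a_i$, this gives
\begin{equation*}
\frac{\partial \hat p_i}{\partial s_j}=a_i\Big(\delta_{ij}-\frac{\partial\tau}{\partial s_j}\Big),\qquad i\in\mathcal P.
\end{equation*}
The factor $(\alpha-1)$ produced by the inner derivative cancels against the $\frac{1}{\alpha-1}$ from the outer power.

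Next we determine $\partial\tau/\partial s_j$ by differentiating the constraint $\sum_{i\in\mathcal P}\hat p_i(\bm{s})=1$, which gives $\sum_{i\in\mathcal P}a_i(\delta_{ij}-\partial_j\tau)=0$. For $j\in\mathcal P$ this yields $\partial_j\tau=a_j/S_{\bm a}$. For $j\notin\mathcal P$ it yields $\partial_j\tau=0$, so the $\mathcal P\times\mathcal P^c$ block is zero as well. Substituting back, for $i,j\in\mathcal P$ we get $J_{ij}=a_i\delta_{ij}-a_ia_j/S_{\bm a}$, which in matrix form is $\mathrm{diag}(\bm a)-\bm a\bm a^\top/S_{\bm a}$, as claimed. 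As sanity checks, $\alpha=2$ gives $a_i=1$ and recovers Lemma~\ref{lem:sparsemax-J}, and the limit $\alpha\to1$ gives $a_i=\hat p_i$ and $S_{\bm a}=1$, which recovers Lemma~\ref{lem:softmax-J}.

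The main obstacle is justifying that $\tau$ is differentiable, which the implicit function theorem must supply. Define $F(\bm{s},\tau)=\sum_{i}\big((\alpha-1)(s_i-\tau)_+\big)^{1/(\alpha-1)}-1$. On the open region with fixed support and no $s_i$ equal to $\tau$, we have $\partial_\tau F=-S_{\bm a}<0$, since $a_i>0$ on $\mathcal P$. This makes $\tau$ locally $C^1$, and local constancy of the support follows by continuity of $\tau$ and the strict inequalities. For $1<\alpha<2$ the exponent $\frac{1}{\alpha-1}-1$ is positive, so all the powers above are smooth away from the boundary. The boundary set where some $s_i=\tau$ has measure zero, which is why the lemma is stated only ``within a fixed support region''; no claim is made there.
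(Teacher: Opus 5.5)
Your proposal is correct and follows the paper's proof essentially step for step. You apply the chain rule to $\hat p_i=((\alpha-1)(s_i-\tau))^{1/(\alpha-1)}$, cancel the $(\alpha-1)$ factors to get $a_i(\delta_{ij}-\partial_j\tau)$, differentiate $\sum_{i\in\mathcal P}\hat p_i=1$ to obtain $\partial_j\tau=a_j/S_{\bm a}$, and substitute back, exactly as the paper does. Your implicit-function-theorem justification that $\tau$ is differentiable (via $\partial_\tau F=-S_{\bm a}<0$), and your check that $\partial_j\tau=0$ for $j\notin\mathcal P$ makes the $\mathcal P\times\mathcal P^c$ block vanish, fill in details the paper leaves implicit without changing the argument.
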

\begin{proof}
Define $u_i=(\alpha-1)(s_i-\tau)$, so that $\hat p_i=\phi(u_i)$ with activation $\phi(u)=u_+^{1/(\alpha-1)}$. By the chain rule,
\begin{equation}
\frac{\partial \hat p_i}{\partial s_j}
=\phi'(u_i)(\alpha-1)(\delta_{ij}-\partial\tau/\partial s_j).
\end{equation}
Since
\begin{equation}
\phi'(u)=\tfrac{1}{\alpha-1}\,u_+^{\tfrac{2-\alpha}{\alpha-1}}
=\tfrac{1}{\alpha-1}\,\hat p_i^{\,2-\alpha},
\end{equation}
the factors of $(\alpha-1)$ cancel, giving
\begin{equation}
\frac{\partial \hat p_i}{\partial s_j}
= a_i\bigl(\delta_{ij}-\partial\tau/\partial s_j\bigr),\ a_i=\hat p_i^{\,2-\alpha}.
\end{equation}
Differentiating the constraint
$\sum_{i\in\mathcal P}\hat p_i=1$ yields
\begin{equation}
0=\sum_{i\in\mathcal P}a_i(\delta_{ij}-\partial\tau/\partial s_j),
\end{equation}
which implies $\partial\tau/\partial s_j=a_j/S_{\bm a}$ with $S_{\bm a}=\sum_{i\in\mathcal P}a_i$. Substituting back, we obtain the claimed block structure
\begin{equation}
J_\alpha(\bm{s})\big|_{\mathcal P\times\mathcal P}
=\mathrm{diag}(\bm{a})-\frac{\bm{aa}^\top}{S_{\bm a}},
\end{equation}
and the Jacobian vanishes outside the active support $\mathcal P$, completing the proof.
\end{proof}

\begin{lemma}[Lipschitz constant of $\alpha$-Entmax]
For $1<\alpha\leq2$, the Jacobian $J_\alpha(\bm{s})$ satisfies
\begin{equation}
\sup_{s}\,\|J_\alpha(\bm{s})\|_2 \leq 1.
\end{equation}
\end{lemma}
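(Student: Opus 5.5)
We would use the block structure from Lemma~\ref{lem:entmax-J}. Since $J_\alpha(\bm{s})$ vanishes outside $\mathcal P\times\mathcal P$, its spectral norm equals that of the block $B:=\mathrm{diag}(\bm{a})-\bm{a}\bm{a}^\top/S_{\bm a}$. Off the measure-zero boundaries where the support changes, $\hat p$ is differentiable, so it suffices to bound $\|B\|_2$ uniformly over all admissible probability vectors on a fixed support. At the boundary points we can take the supremum over one-sided limits, or equivalently bound the Lipschitz constant through the a.e. Jacobian.

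First we would show $B\succeq 0$, exactly as in Lemma~\ref{lem:softmax-J}. For any $\bm{x}$, set $q_i:=a_i/S_{\bm a}$, which is a probability vector on $\mathcal P$. Then
\begin{equation*}
\bm{x}^\top B\bm{x}=S_{\bm a}\Big(\sum_i q_i x_i^2-\big(\textstyle\sum_i q_i x_i\big)^2\Big)=S_{\bm a}\operatorname{Var}_q(X)\ge 0 .
\end{equation*}
Because $B$ is symmetric PSD, $\|B\|_2=\lambda_{\max}(B)$. Since $\bm{a}\bm{a}^\top/S_{\bm a}\succeq 0$, we also get $B\preceq \mathrm{diag}(\bm a)$, and therefore
\begin{equation*}
\lambda_{\max}(B)\le \max_{i\in\mathcal P} a_i=\max_{i\in\mathcal P}\hat p_i^{\,2-\alpha}.
\end{equation*}

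The key step is to bound $a_i$. For $1<\alpha\le 2$ the exponent satisfies $2-\alpha\in[0,1)$, and $\hat p_i\in(0,1]$, so $\hat p_i^{\,2-\alpha}\le 1$. The case $\alpha=2$ gives $a_i=1$, which recovers the Sparsemax bound in Lemma~\ref{lem:sparsemax-J}. This yields $\|J_\alpha(\bm s)\|_2\le 1$ for every $\bm s$ in the interior of a support region, and hence for the supremum.

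The main obstacle is making the statement rigorous where the support changes, because $J_\alpha$ may be undefined there. We would handle this by noting that $\hat p=\nabla\Omega_\alpha^*$ is continuous and piecewise smooth, so along any segment its increment is the integral of the a.e.-defined Jacobian. The Lipschitz constant is then at most $\operatorname{ess\,sup}\|J_\alpha\|_2\le 1$. As a sharper optional remark, the variance representation combined with Popoviciu gives $\lambda_{\max}(B)\le S_{\bm a}/2$, but this is not needed for the stated bound.
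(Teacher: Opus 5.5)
Your argument is correct and is essentially the paper's: both reduce to the block $\mathrm{diag}(\bm a)-\bm a\bm a^\top/S_{\bm a}$ and bound its quadratic form by $\max_{i\in\mathcal P}a_i=\max_{i\in\mathcal P}\hat p_i^{\,2-\alpha}\le 1$. Your extras are genuine improvements in rigor: the Loewner sandwich $0\preceq B\preceq\mathrm{diag}(\bm a)$ makes explicit the positive semidefiniteness the paper uses tacitly when it equates $\|J_\alpha\|_2$ with the supremum of the Rayleigh quotient, and the a.e.-Jacobian treatment handles support changes, which the paper omits. One small correction to your optional remark: the Popoviciu bound $S_{\bm a}/2$ is not uniformly sharper, since by concavity $S_{\bm a}$ can reach $|\mathcal P|^{\alpha-1}$ (e.g.\ $|\mathcal P|$ for Sparsemax), so it beats $\max_i a_i$ only in some regimes (e.g.\ $\alpha$ near $1$).
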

\begin{proof}
Within a fixed support $\mathcal P$, for any unit vector $\bm{x}\in\mathbb R^C$,
\begin{equation}
\bm{x}^\top J_\alpha \bm{x}
= \sum_{i\in\mathcal P} a_i (x_i-\bar x)^2,
\bar x = \frac{\sum_{i\in\mathcal P} a_i x_i}{S_{\bm{a}}}.
\end{equation}
Expanding the square and using $\sum_i a_i x_i = S_{\bm{a}}\bar x$ gives
\begin{equation}
\begin{aligned}
\sum_{i} a_i (x_i-\bar x)^2
= \sum_{i} a_i x_i^2 - S_{\bm{a}}\bar x^{\,2}\le\sum_{i} a_i x_i^2
\le \bigl(\max_{i\in\mathcal P} a_i\bigr)\sum_{i} x_i^2
=\max_{i\in\mathcal P} a_i.
\end{aligned}
\end{equation}
Hence $\bm{x}^\top J_\alpha \bm{x} \le \max_i a_i$ for $\forall\bm{x}$, and therefore
\begin{equation}
\|J_\alpha(\bm{s})\|_2 = \sup_{\|\bm{x}\|_2=1} \bm{x}^\top J_\alpha \bm{x}
\ \le\ \max_{i\in\mathcal P} a_i.
\end{equation}
Since $a_i=p_i^{\,2-\alpha}\in[0,1]$ for $1<\alpha\le 2$ and $p_i\in[0,1]$, we conclude
$\|J_\alpha(\bm{s})\|_2 \le 1, \forall s$, which proves the claim.
\end{proof}

\begin{lemma}[Jacobian of Rankmax]
\label{lem:rankmax-J}
On any region with fixed \(\mathcal P\) (necessarily \(y\in\mathcal P\)),
\begin{equation}
J_{\rm rm}(s)\big|_{\mathcal P\times\mathcal P}
=\frac{1}{S}\left[(I-\hat{\bm{p}}\bm{1}^\top) - (\bm{1} - m\hat{\bm{p}})\bm{e}_y^\top\right],\ 
J_{\rm rm}(s)\big|_{\mathcal P^c\times [C]}=0,
\end{equation}
hence
\begin{equation}
\label{eq:L-rankmax}
\sup_{\bm{s}}\|J_{\rm{rm}}(\bm{s})\|_2\lesssim \frac{m}{S}.
\end{equation}
\end{lemma}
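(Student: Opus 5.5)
Treat Rankmax on a fixed support region and differentiate the closed form directly. On the region where $\mathcal P=\{i: s_i-s_y+1>0\}$ is constant, write $u_i:=s_i-s_y+1$ for $i\in\mathcal P$, $S:=\sum_{j\in\mathcal P}u_j$, $m:=|\mathcal P|$, and $\hat p_i=u_i/S$. Note first that $y\in\mathcal P$ automatically, since $u_y=1>0$. For $i\notin\mathcal P$ the numerator $(u_i)_+$ vanishes identically on the open region, so every entry of that row is $0$. This gives $J_{\rm rm}|_{\mathcal P^c\times[C]}=0$. For the same reason, perturbing a coordinate $s_j$ with $j\notin\mathcal P$ changes neither the numerators nor $S$, so the columns indexed by $\mathcal P^c$ also vanish. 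Hence only the $\mathcal P\times\mathcal P$ block needs to be computed.

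For $i,j\in\mathcal P$, use $\partial u_i/\partial s_j=\delta_{ij}-\delta_{jy}$ and $\partial S/\partial s_j=1-m\,\delta_{jy}$. The latter holds because $s_j$ enters $u_j$ once, while $s_y$ enters every one of the $m$ terms of $S$ with coefficient $-1$ and $u_y$ with $+1$. The quotient rule then gives
\begin{equation*}
\frac{\partial \hat p_i}{\partial s_j}=\frac{1}{S}\Big[(\delta_{ij}-\delta_{jy})-\hat p_i(1-m\,\delta_{jy})\Big].
\end{equation*}
Regrouping, the terms free of $\delta_{jy}$ give $I-\hat{\bm p}\bm 1^\top$. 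The terms carrying $\delta_{jy}$ give $-(\bm 1-m\hat{\bm p})\bm e_y^\top$. This reproduces the stated block formula.

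For the norm bound, apply the triangle inequality to the two pieces. The matrix $I-\hat{\bm p}\bm 1^\top$ restricted to $\mathcal P$ has norm at most $1+\|\hat{\bm p}\|_2\sqrt m\le 1+\sqrt m$. The rank-one piece has norm $\|\bm 1-m\hat{\bm p}\|_2$. Since $\sum_{i\in\mathcal P}\hat p_i=1$, the entries of $\bm 1-m\hat{\bm p}$ sum to $0$ over $\mathcal P$, and each entry is bounded by $\max(1,m-1)$. A crude estimate therefore gives $\|\bm 1-m\hat{\bm p}\|_2\le \sqrt m\,(m-1)$, so the total is $O(m^{3/2})/S$.

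The main obstacle is reconciling this crude estimate with the claimed $m/S$ scaling. The statement uses $\lesssim$, so I will interpret it as holding up to a constant factor. To sharpen the bound, I would use $u_y=1$ together with the fact that $u_i\ge$ something relative to the support to control $\|\bm 1-m\hat{\bm p}\|_2$. Here $\hat p_i=u_i/S$, so $m\hat p_i-1=(mu_i-S)/S$. When the scores in the support are comparable, $S\asymp m$, the deviations are $O(1)$, and the norm is $O(\sqrt m)$. Combined with the first piece, this yields $\|J_{\rm rm}\|_2\lesssim \sqrt m/S\le m/S$. If I cannot make this regime assumption uniform, I would instead state the bound with the explicit factor $(1+\sqrt m+\|\bm 1-m\hat{\bm p}\|_2)/S$ and note that it is of order $m/S$ in the typical regime $S\gtrsim 1$.
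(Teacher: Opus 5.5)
Your derivation of the block formula is correct and is the same computation as the paper's: the quotient rule with $\partial\bm u/\partial\bm s=I-\bm 1\bm e_y^\top$ and $\nabla_{\bm s}S=\bm 1-m\bm e_y$ on the support. Noting that the $\mathcal P^c$ columns also vanish is a correct addition.

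The gap is in the ``hence'' part, i.e.\ the norm bound. Your entrywise estimate $\|\bm 1-m\hat{\bm p}\|_2\le\sqrt m\,(m-1)$ lets every coordinate sit near $1-m$ at once. That is impossible on the simplex, so the estimate loses a factor $\sqrt m$. The regime assumption $S\asymp m$ you then fall back on is not a hypothesis of the lemma. It also excludes exactly the worst case. As $\hat{\bm p}\to\bm e_y$ (all other support scores tending to $s_y-1$ from above), one has $S\to 1$. The block then tends to $I-\bm e_y\bm 1^\top-\bm 1\bm e_y^\top+m\,\bm e_y\bm e_y^\top$, the Laplacian of the star graph centred at $y$, whose spectral norm is $m$. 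So $\|J_{\rm rm}\|_2\to m=m/S$ there, whereas your comparable-scores regime only sees the much smaller $\sqrt m/S$. As written, your fallback bound $(1+\sqrt m+\|\bm 1-m\hat{\bm p}\|_2)/S$ is only shown to be $O(m^{3/2})/S$. The claimed $m/S$ scaling is therefore not established.

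The missing step is to use $\bm 1^\top\hat{\bm p}=1$ quantitatively by expanding the square:
\[
\|\bm 1-m\hat{\bm p}\|_2^2=m-2m\,\bm 1^\top\hat{\bm p}+m^2\|\hat{\bm p}\|_2^2=m^2\|\hat{\bm p}\|_2^2-m\le m(m-1).
\]
The inequality holds because $\hat{\bm p}$ lies in the simplex, so $\|\hat{\bm p}\|_2\le\|\hat{\bm p}\|_1=1$. This is exactly the paper's bound $\sqrt{(m-1)\cdot 1^2+(1-m)^2}$, attained in the limit $\hat{\bm p}\to\bm e_y$. Combine it with your (correct) estimate $\|I-\hat{\bm p}\bm 1^\top\|_2\le 1+\sqrt m$ via the triangle inequality. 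This gives $\|J_{\rm rm}(\bm s)\|_2\le(1+\sqrt m+\sqrt{m(m-1)})/S\le 3m/S$ at every $\bm s$ in the region, with no assumption on the scores. Your explicit treatment of the first piece is in fact more careful than the paper, which only asserts that the rank-one term dominates.
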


\begin{proof}
Let $\hat{\bm{p}} = \frac{\bm{u}}{S}$ where $\bm{u} = [u_1,\cdots, u_C]$ and $u_i = (s_i-s_y+1)_+$. We have
\begin{equation}
    J_{rm} = \frac{\partial{\hat{\bm{p}}}}{\partial\bm{s}} = \frac{1}{S}\left(\frac{\partial\bm{u}}{\partial\bm{s}} - \hat{\bm{p}}(\nabla_{\bm{s}}S)^\top\right)
\end{equation}
Since $u_y = 1$ and $u_{i\neq y, i\in\mathcal{P}} = s_i-s_y+1$, we have
\begin{equation}
    \frac{\partial\bm{u}}{\partial\bm{s}} = I-\bm{1}\bm{e}_y^\top
\end{equation}
As for $S = 1+\sum_{i\neq y}(s_i - s_y+1)$, we have
\begin{equation}
    \nabla_{\bm{s}} S = \sum_{i\in\mathcal{P}}\nabla\bm{s}_i = \bm{1}-m\bm{e}_y.
\end{equation}
Then the full Jacobian matrix is formed as:
\begin{equation}
    J_{rm} = \frac{1}{S}\left(I-\bm{1}\bm{e}_y^\top - \hat{\bm{p}}(\bm{1}-m\bm{e}_y)^\top \right) = \frac{1}{S}\left[(I-\hat{\bm{p}}\bm{1}^\top) - (\bm{1} - m\hat{\bm{p}})\bm{e}_y^\top\right],
\end{equation}
which can be decomposed by a standard part $I-\hat{\bm{p}}\bm{1}^\top$ and a rank-1 perturbation $- (\bm{1} - m\hat{\bm{p}})\bm{e}_y^\top$ dominated by the column of $y$. Note that 
\begin{equation}
    \|- (\bm{1} - m\hat{\bm{p}})\bm{e}_y^\top\|_2 = \|\bm{1} - m\hat{\bm{p}}\|_2\cdot\|\bm{e}_y\|\leq\sqrt{(m-1)\cdot 1^2 + (1-m)^2} = \sqrt{m^2-m}\simeq m
\end{equation}
whose upper bound is obtained when $\hat{\bm{p}}\to\bm{e}_y$. Therefore, the Jacobian norm is dominated by $\|J_{\mathrm{rm}}\|_2\lesssim m/S$. Notice that $m/S$ is a dynamic statistic, which is relatively small $(<1)$ at the beginning of training process and uplifting to $m$ when approaching convergence.
\end{proof}

Combining all lemmas with Eq.(\ref{eq:condition_num}) gives, for any backbone with
$L_G:=\sup_{x,\theta}\|G(x,\theta)\|_2^2$,
\begin{equation*}
\begin{aligned}
&\textbf{Softmax:} \qquad\quad\quad\kappa \le 1+\tfrac{\tfrac12 L_G}{\gamma},\eta_{\max}=\frac{2}{\frac12 L_G+\gamma}, \eta^\star=\frac{2}{\frac12 L_G+2\gamma}.
\\&\textbf{Sparse methods:} \quad\kappa \le 1+\tfrac{L_G}{\gamma}, \eta_{\max}=\frac{2}{L_G+\gamma},\eta^\star=\frac{2}{L_G+2\gamma}.
\end{aligned}
\end{equation*}
Therefore, since GD's linear rate factor $\rho=\frac{\kappa-1}{\kappa+1}$ is monotonically increasing in \(\kappa\), for the same backbone $L_G$ and regularization strength $\gamma$,
\begin{equation}
\label{eq:ordering-rates}
\rho_{\rm SM}\ <\ \rho_{\alpha-\rm{Entmax}}\ \leq\ \rho_{\rm Sparsemax},\quad \rho_{\rm Rankmax}\simeq m/S,
\end{equation}

Consequently, the parameter-level smoothness $L_\theta$ is smaller for softmax (than sparsemax and $\alpha$-entmax), 
yielding a better condition number $\kappa=L_\theta/\mu_\theta$ and thus a faster linear convergence rate of gradient descent. 
Beyond these exact F-Y losses, we also summarize softmax approximations, where the Jacobian structure and spectral norm bounds similarly determine their optimization behavior:

\begin{itemize}
\item \textbf{NCE:} $J_{\rm nce}(s)$ is block-diagonal with logistic curvature terms; $\sup_{\bm{s}}\|J_{\rm nce}(\bm{s})\|_2 \le \tfrac{1}{4}$.
\item \textbf{Sampled Softmax(-Simple):} $J_{\rm ssm}(s)$ coincides with the Fisher form of a local softmax on the sampled subset; $\ \sup_{\bm{s}}\|J_{\rm ssm}(\bm{s})\|_2 \le \tfrac{1}{2}$.
\item \textbf{HSM:} $J_{\rm hsm}(s)$ is block diagonal with each block a local softmax Jacobian of spectral norm at most 
$1/2$, so the overall spectral norm is bounded by $\sup_{\bm{s}}\|J_{\rm hsm}(\bm{s})\|_2 \le \tfrac{1}{2}$.
\item \textbf{RG:} RG-ALS utilizes Alternating Least Squares rather than gradient descent methods.
\end{itemize}

The induced condition numbers yield the ordering of linear convergence factors
\begin{equation}
 \rho_{\rm NCE}\ <\ \rho_{\rm SSM} = \rho_{\rm SM} = \rho_{\rm HSM}.
\end{equation}
\subsection{Delta Method}
\label{appdix:delta}
\subsubsection{Setup}
The $\Delta$-method states that if $X_k$ is an average of $k$ i.i.d.\ random variables with mean $\mu_X$ and variance $\sigma_X^2$, 
and if $g$ is twice continuously differentiable at $\mu_X$, then
\begin{equation}
g(X_k) \;\approx\; g(\mu_X) + g'(\mu_X)(X_k-\mu_X) + \tfrac{1}{2}g''(\mu_X)(X_k-\mu_X)^2,
\end{equation}
which yields tractable approximations for both the expectation and the variance of $g(X_k)$.
This fits our setting naturally: most softmax approximations replace the log-partition $\log \Omega^*_\star(\bm{s})$
with a smooth transform of a sample average statistic.

\textbf{Setup.}
We write the approximate conjugate in generic form
\begin{equation}
    \Omega^*(\bm{s};\xi) \;=\; g\!\big(X(\xi)\,;\,\xi\big), 
    \qquad
    X(\xi)\;=\;\frac{1}{k}\sum_{j=1}^{k} h\!\big(s_{y'_j},\,\xi\big),
\end{equation}
where the negatives $y'_j$ are drawn i.i.d.\ from the proposal encoded in $\xi$. 
Here $g,h$ are scheme-specific but assumed smooth around 
$\mu_X:=\mathbb{E}_\xi[X(\xi)]$. 
By i.i.d.\ properties,
\begin{equation}
    \mu_X \;=\; \mathbb{E}_\xi[X(\xi)],\qquad 
    \sigma_X^2 \;=\; \operatorname{Var}_\xi[X(\xi)] \;=\; \tfrac{1}{k}\,\operatorname{Var}_\xi\!\big(h(s_{y'},\xi)\big).
\end{equation}

\textbf{Bias.}
Conditioning on $\bm{x}$ (hence on $\bm{s}$), the pointwise bias is
\begin{equation}
\mathrm{Bias}\;=\;\mathbb{E}_\xi[\Omega^*(\bm s;\xi)]-\Omega_\star^*(\bm s).
\end{equation}
Applying the second-order Delta expansion of $g(\cdot;\xi)$ at $\mu_X$ gives
\begin{equation}
    \mathbb{E}_\xi[\Omega^*(\bm s;\xi)] \;\approx\; g(\mu_X;\xi) + \tfrac{1}{2}\,g''(\mu_X;\xi)\,\sigma_X^2,
\end{equation}
so that
\begin{equation}
    \boxed{
    \mathrm{Bias} \;\approx\; 
    \underbrace{g(\mu_X;\xi)-\Omega_\star^*(\bm s)}_{\text{asymptotic bias}} 
    \;+\; 
    \underbrace{\tfrac{1}{2}\,g''(\mu_X;\xi)\,\sigma_X^2}_{\text{curvature  bias }}.}
\end{equation}

\textbf{Variance.}
Similarly, the Delta method yields
\begin{equation}
\boxed{
\operatorname{Var}:=\;\operatorname{Var}_\xi(\Omega^*(\bm s;\xi))
\;\approx\; [g'(\mu_X;\xi)]^2 \sigma_X^2
\;=\; \frac{1}{k}\,[g'(\mu_X;\xi)]^2\,\operatorname{Var}_\xi\!\big(h(s_{y'},\xi)\big).
}
\end{equation}

\subsubsection{Decomposition for Representative Approximations}

\subsubsection*{(I) Sampled Softmax - Simple(Uncorrected)}

\begin{equation}
\Omega^*_{\operatorname{SSM-Simple}}(\bm{s},\xi) \;=\; e^{s_y}+\sum_{i=1}^k e^{s_{y'_i}}, 
\qquad 
X_k=\frac{1}{k}\sum_{i=1}^k e^{s_{y'_i}},\quad
g(z)=\log\big(e^{s_y}+kz\big).
\end{equation}
so $\mu_X=\mathbb{E}_Q[e^{s_{y'}}]$, $\sigma_X^2 = \tfrac{1}{k}\operatorname{Var}_Q(e^{s_{y'}})$. $g'(z)=\tfrac{k}{e^{s_y}+kz}$, $g''(z)=-\tfrac{k^2}{(e^{s_y}+kz)^2}$.


\textbf{Bias.} Let $\hat{\Omega}^* = e^{s_y}+k\,\mathbb{E}_Q[e^{s_{y'}}]$, then
\begin{equation}
\boxed{\;
\mathrm{Bias}_{\text{SSM-Simple}}
=\underbrace{\log\frac{e^{s_y}+k\mathbb{E}_Q[e^{s_{y'}}]}{\sum_{i}e^{s_{y_i}}}}_{\text{asymptotic}}\underbrace{-\,\frac{k}{2({\hat{\Omega}^*})^2}\,\operatorname{Var}_Q(e^{s_{y'}})}_{\text{curvature}}
\;}
\end{equation}
\paragraph{Variance.}
\begin{equation}
\boxed{\;
\operatorname{Var}_{\text{SSM-Simple}}
=\frac{k}{({\hat{\Omega}^*})^2}\,\operatorname{Var}_Q(e^{s_{y'}})
\;}
\end{equation}

\subsubsection*{(II) Sampled Softmax (Unbiased Correction)}
\begin{equation}
\Omega^*_{\text{SSM}}(\bm{s},\xi)
= \frac{1}{k}\sum_{i=1}^k \frac{e^{s_{y'_i}}}{Q(y'_i)} = X_k,\quad g(z)=\log z,
\end{equation}
so $\mu_X=\mathbb{E}[X_k]=\Omega^*_\star(\bm{s})$ and $\operatorname{Var}(X_k)=\tfrac{1}{k}\operatorname{Var}_Q\!\big(\tfrac{e^{s}}{Q}\big)$.
$g'(z)=1/z$, $g''(z)=-1/z^2$.

\textbf{Bias.}
\begin{equation}
\boxed{\;
\mathrm{Bias}_{\text{SSM}}^{\text{asym}}
= g(\mu_X)-\log \Omega^*_\star(\bm{s}) 
= 0
\;}
\end{equation}
\begin{equation}
\boxed{\;
\mathrm{Bias}_{\text{SSM}}^{\text{curv}}
=\frac{1}{2}\,g''(\mu_X)\,\operatorname{Var}(X_k)
=-\frac{1}{2k}\,\frac{\operatorname{Var}_Q\!\big(\tfrac{e^{s}}{Q}\big)}{\Omega^*_\star(\bm{s})^2}
= -\frac{1}{2k}\,\chi^2(P_{\bm s}\|Q)
\;}
\end{equation}
where $\chi^2(P_{\bm s}\|Q)$ is the $\chi^2$--divergence between the target softmax distribution $P_{\bm s}$ and the proposal $Q$. Hence $\mathrm{Bias}_{\text{SSM}}= -\tfrac{1}{2k}\chi^2(P_{\bm s}\|Q)$.

\paragraph{Variance.}
\begin{equation}
\boxed{\;
\operatorname{Var}_{\mathrm{IS}}
\;=\; \big(g'(\mu_X)\big)^2\operatorname{Var}(X_k)
= \frac{1}{k}\,\frac{\operatorname{Var}_Q\!\big(\tfrac{e^{s}}{Q}\big)}{\Omega^*_\star(\bm{s})^2}
= \frac{1}{k}\,\chi^2(P_{\bm s}\|Q)
\;}
\end{equation}

\subsubsection*{ (III) Noise-Contrastive Estimation (NCE)}

Let $y^+\!\sim P_{\bm{s}}$ and $y'_1,\dots,y'_k\!\stackrel{i.i.d.}{\sim} Q$, define the mixture distribution $M_k = \frac{P_{\bm{s}}+kQ}{1+k}$ and
\begin{equation}
\psi(j)\ :=\ \log\frac{Q(j)}{M_k(j)}\ =\ -\log\!\big(k+t(j)\big),\qquad
t(j):=\frac{P_{\bm{s}}(j)}{Q(j)}.
\end{equation}
Collect the negative samples via the empirical mean
\begin{equation}
X_k\ :=\ \frac{1}{k}\sum_{i=1}^k \psi(y'_i),\quad \mu_X=\mathbb{E}_Q[\psi(y')],\quad
\sigma_X^2=\operatorname{Var}(X_k)=\frac{1}{k}\operatorname{Var}_Q\!\big(\psi(y')\big).
\end{equation}
Then the NCE objective can be written as
\begin{equation}
\ell_{\rm NCE}(y^+,\{y'_i\};\bm{s})
=\underbrace{\log\frac{P_{\bm{s}}(y^+)}{M_k(y^+)}}_{\text{no randomness}}
\ +\ \underbrace{k\,X_k}_{g(X_k)}\ +\ \text{const},
\end{equation}
which fits the template by taking
\begin{equation}
\Omega^*_{\rm NCE}(\bm{s};\xi)\ =\ g\!\big(X_k;\bm{s}\big),\qquad g(z;\bm{s})=\log\frac{P_{\bm{s}}(y^+)}{M_k(y^+)}\ +\ k\,z\ +\ \text{const}.
\end{equation}
Note that $g'(z)=k$ and $g''(z)=0$. Conditioning on $\bm{s}$,
\begin{equation}
\mathrm{Bias}^{\text{curv}}_{\rm NCE}\ \approx\ \tfrac{1}{2}\,g''(\mu_X)\,\sigma_X^2\ =\ 0,
\operatorname{Var}_{\rm NCE}\ \approx\ [g'(\mu_X)]^2\,\sigma_X^2\ =\ k^2\cdot \tfrac{1}{k}\operatorname{Var}_Q(\psi)\ =\ k\,\operatorname{Var}_Q(\psi).
\end{equation}
A first-order linearization of $\psi(j)=-\log(k+t(j))$ at $\mathbb{E}_Q[t]=1$ yields
\begin{equation}
\operatorname{Var}_Q(\psi)\ \approx\ \frac{1}{(1+k)^2}\,\chi^2\!\big(P_{\bm{s}}\,\|\,Q\big)
\quad\Rightarrow\quad
\boxed{\ \operatorname{Var}_{\rm NCE}\ \approx\ \frac{k}{(1+k)^2}\,\chi^2\!\big(P_{\bm{s}}\,\|\,Q\big)\ \sim\ \tfrac{1}{k}\,\chi^2\ }.
\end{equation}

Taking expectation over $y^+\!\sim P_{\bm{s}}$ and $y'\!\sim Q$ gives
\begin{equation}
\begin{aligned}
\mathbb{E}\big[\ell_{\rm NCE}\big]
&=\ -\Big(\operatorname{KL}(P_{\bm{s}}\|M_k)+k\,\operatorname{KL}(Q\|M_k)\Big)+\text{const}\\
&=\ -\,(1+k)\,\operatorname{JS}_{\tau}(P_{\bm{s}}\|Q)+\text{const},\quad \tau=\tfrac{1}{1+k}.
\end{aligned}
\end{equation}
Hence, relative to the exact softmax conjugate $\log\sum_j e^{s_j}$, NCE exhibits a structural bias governed by $\operatorname{JS}_{\tau}(P_{\bm{s}}\|Q)$:
\begin{equation}
\boxed{\ \mathrm{Bias}^{\text{asym}}_{\rm NCE}\ \propto\ (1+k)\operatorname{JS}_{\tau}(P_{\bm{s}}\|Q), \tau = \frac{1}{1+k}}
\end{equation}

\subsubsection*{(VI) Hierarchical Softmax (HSM)}

Hierarchical Softmax replaces the flat softmax distribution
\begin{equation}
P_{\bm s}(y) \;=\; \frac{e^{s_y}}{\sum_{j=1}^C e^{s_j}}
\end{equation}
with a tree-structured factorization.  
Each class $y$ is uniquely represented by a path $(n_1,\dots,n_{L_y})$ from the root to a leaf, where each node $n_\ell$ has an associated binary classifier with score $s_{n_\ell}$.  
The hierarchical probability is
\begin{equation}
P_{\mathrm{HSM}}(y \mid \bm s) \;=\; \prod_{\ell=1}^{L_y} \sigma\!\big( b_{n_\ell}\cdot s_{n_\ell} \big),
\end{equation}
where $b_{n_\ell}\in\{\pm 1\}$ indicates the branch direction.  

\textbf{Bias.}  
Since HSM is deterministic (no sampling), the curvature bias vanishes:
\begin{equation}
\boxed{\;\mathrm{Bias}_{\mathrm{HSM}}^{\text{curv}}=0\;}
\end{equation}
The asymptotic bias is the pointwise gap between the hierarchical surrogate and the exact softmax conjugate:
\begin{equation}
\boxed{\;
\mathrm{Bias}_{\mathrm{HSM}}^{\text{asym}}
= \Omega^*_{\mathrm{HSM}}(\bm s) - \Omega^*_\star(\bm s)
= -\log P_{\mathrm{HSM}}(y\mid \bm s) - \log\!\Big(\tfrac{\exp{s_y}}{\sum_j \exp s_j}\Big).
\;}
\end{equation}

Taking expectation over $y\sim P_{\bm s}$ yields
\begin{equation}
\mathbb{E}_{y\sim P_{\bm s}}\!\big[\mathrm{Bias}_{\mathrm{HSM}}^{\text{asym}}\big]
= \operatorname{KL}\!\big(P_{\bm s}\,\|\,P_{\mathrm{HSM}}\big),
\end{equation}
which quantifies how the tree factorization departs from the flat softmax distribution.

\textbf{Variance.}  
Again, since HSM is deterministic (no randomness in $\xi$),
\begin{equation}
\boxed{\;\operatorname{Var}_{\mathrm{HSM}}=0.\;}
\end{equation}

\subsubsection*{(V) RG Loss \;(Quadratic Taylor Surrogate)}
The conjugate is generated from the Taylor expansion of $\log\sum_{j=1}^C e^{s_j}$ at $\bm 0$:
\begin{equation}
\Omega^*_{\mathrm{RG}}(\bm s)
=\log C+\frac{1}{C}\sum_{j=1}^C s_j
+\frac{1}{2}\bm s^\top\!\Big(\frac{1}{C}I-\frac{1}{C^2}\mathbf{1}\mathbf{1}^\top\Big)\bm s.
\end{equation}

\textbf{Bias.}
This surrogate is deterministic (no sampling), hence
\begin{equation}
\boxed{\;
\mathrm{Bias}_{\mathrm{RG}}^{\text{asym}}
=\Omega^*_{\mathrm{RG}}(\bm s)-\log\sum_j e^{s_j} = O(\|\bm s\|^3)
\qquad
\mathrm{Bias}_{\mathrm{RG}}^{\text{curv}}
= 0
\;}
\end{equation}
\textbf{Variance.}
\begin{equation}
\boxed{\;\operatorname{Var}_{\mathrm{RG}^2}=0\;}
\end{equation}

\subsubsection{Analysis on Asymptotic Bias Results}

Building on the Delta framework, we compare the loss-level bias and sampling variance across approximations relative to softmax MLE. Among all, SSM is the closest to MLE: it is unbiased at the loss level and its curvature error decays as $O(k^{-1})$ with coefficients governed by the proposal mismatch $\chi^2(P_{\bm s}\!\Vert Q)$. By contrast, the remaining methods exhibit a non-vanishing structural bias with respect to the exact log-partition, which can translate into systematic training differences even when variance is small:

\begin{description} 
  \item[\textbf{SSM-Simple.}] Method-induced bias from replacing $\log\!\sum_j e^{s_j}$ by $\log\!\big(e^{s_y}+k\,\mathbb{E}_Q e^{s_{y'}}\big)$; still enjoys $O(k^{-1})$ sampling variance, but the structural gap does not vanish with $k$.
  \item[\textbf{NCE.}] Optimizes a contrastive proxy $-(1+k)\,\mathrm{JS}_{1/(1+k)}(P_{\bm s}\!\Vert Q)$ rather than the softmax conjugate, hence exhibits a structural loss-level bias. Nevertheless, \citet{gutmann2010nce} shows that NCE shares the MLE stationary point and the gradient-level discrepancy shrinks as $k$ increases; its sampling variance also scales as $O(k^{-1})$ via a $\chi^2$–type coefficient.
  \item[\textbf{HSM.}] Deterministic hierarchical factorization induces a bias quantified by $\operatorname{KL}(P_{\bm s}\!\Vert P_{\mathrm{HSM}})$; the structural gap is independent of $k$.
  \item[\textbf{RG.}] A deterministic quadratic surrogate with irreducible approximation bias $O(\|\bm s\|^3)$; while it can preserve consistency properties \citep{pu2025rg2}, within the bias–variance view the structural bias remains.
\end{description}

Among softmax-family approximations, only SSM eliminates loss-level bias relative to MLE; all others retain a structural gap that may impact performance depending on the proposal $Q$, model capacity, and class hardness. Besides, non-sampling surrogates incur neither curvature bias nor sampling variance, yielding fully deterministic training signals. When the structural discrepancy $\mathrm{Bias}^{\text{asym}}$ is small, these methods has the potential on exhibiting stable optimization and competitive accuracy.
\section{Experimental Results}
\label{appdix:exp}
\subsection{Settings}\label{appdix:implement}
\subsubsection{Dataset and Evaluation}

\textbf{Dataset.} We evaluate our method on three public datasets: \textbf{MovieLens-1M(ML-1M)}, \textbf{Gowalla}, and \textbf{Amazon-Electronics(Electronics)}, collected from different real-world online platforms. \textbf{ML-1M} contains explicit user ratings on movies with a 1–5 scale.  \textbf{Gowalla} is a location-based social networking service where users share their locations via check-ins. \textbf{Electronics} collects customers’ reviews and ratings (1–5) on electronics products on the Amazon platform. For \textbf{ML-1M} and \textbf{Amazon-Electronics}, we treat items rated below 3 as negatives and the remaining ones as positives. We employ the widely used $k$-core filtering strategy to remove users and items with fewer than 10 interactions. The detailed statistics after filtering are shown in Table~\ref{tab:dataset_stas}.
\begin{table}[thb]
    \centering
    \caption{Statistics of datasets.}
    \small
    \begin{tabular}{l|r|r|r|r}
        \toprule
         Dataset & \#User & \#Item & \#Interact & Sparsity  \\ \midrule
         \textbf{ML-1M} & 6,038 & 3,307 & 835,789 & 95.81\% \\ \midrule
         \textbf{Gowalla} & 29,858 & 40,981 & 1,027,370 & 99.16\%\\ \midrule  
         \textbf{Electronics} & 192,403 & 63,001 & 1,689,188 & 99.99\% \\
         \bottomrule
    \end{tabular}
    \label{tab:dataset_stas}
\end{table}

\textbf{Backbones}.

\textbf{MF}~\citep{rendle2012bpr}: Matrix Factorization is one of the most fundamental and widely adopted two-tower architectures. It learns linear embeddings for users and items to obtain their latent representations. Owing to its simplicity, rapid convergence, and strong scalability, \textbf{MF} is often employed as a baseline for benchmarking and for systematically comparing the performance of different loss functions.

\textbf{SASRec}~\citep{kang2018self} represents a state-of-the-art sequential recommendation approach that applies self-attention mechanisms to model user–item interaction sequences. By capturing both short- and long-term dependencies, this model is capable of effectively representing dynamic user preferences. Its flexibility in handling variable-length sequences and its strong predictive accuracy for next-item recommendation make \textbf{SASRec} particularly suitable for studying the influence of sequential patterns in user behavior.

\textbf{LightGCN}~\citep{he2020lightgcn} is a leading graph-based collaborative filtering model within the two-tower paradigm. It employs simplified graph convolutional networks to learn user and item representations by propagating collaborative signals over the user–item interaction graph. With its efficient message-passing mechanism, concise design, and strong empirical performance, \textbf{LightGCN} serves as a representative benchmark for evaluating the effectiveness of graph-based recommendation techniques.

\textbf{Data Split.} For each dataset, we divide the interactions of every user into training, validation, and test subsets with a ratio of $\{0.8,0.1,0.1\}$. The validation subset is employed to monitor and tune model performance during training, while the test subset provides the basis for the final comparative evaluation.

\subsubsection{Evaluation Metrics}
Following prior discussion, we adopt both classification-oriented and ranking-oriented metrics to comprehensively evaluate model performance. Specifically, we report \textbf{P@k}, \textbf{R@k}, and \textbf{N@k} with $k=20$ across all datasets. These metrics jointly capture both classification and ranking accuracy, which are critical for large-scale recommendation scenarios.

\subsubsection{Baselines}
We compare \textbf{Softmax-family} surrogate losses discussed in our prior sections:  

- \textbf{Exact losses:} \textbf{Softmax}, \textbf{Sparsemax}~\citep{martins2016sparsemax}, \textbf{$\alpha$-Entmax}~\citep{peters2019entmax}($\alpha=1.5$ as representative), and \textbf{Rankmax}~\citep{kong2020rankmax} (with simplest $\Delta_{n,1}$ form).  

- \textbf{Approximate losses:} \textbf{SSM}~\citep{jean2015sampledsoftmax}, \textbf{NCE}~\citep{gutmann2010nce}, \textbf{HSM}~\citep{morin2005hierarchical}, and \textbf{RG}~\citep{pu2025rg2}.  

It is worth highlighting that \textbf{HSM} and \textbf{RG} differ from other Softmax-family losses in terms of their structural and optimization requirements. Specifically, \textbf{HSM} leverages a hierarchical decomposition of the output space through a Huffman tree, which tightly couples the loss with the model architecture. This design makes it difficult to seamlessly integrate \textbf{HSM} into diverse backbones such as SASRec or LightGCN. To ensure a fair comparison, we therefore report \textbf{HSM} results only under the MF backbone. Similarly, \textbf{RG} employs an optimization strategy based on Alternating Least Squares (ALS), rather than the commonly used gradient descents. This reliance on ALS prevents its straightforward adaptation to sequential or graph-based recommenders. Consequently, we restrict the evaluation of \textbf{RG} to the MF backbone and present its results exclusively in the corresponding tables for direct comparison.  

\subsubsection{Implementation Details}
To ensure fair and consistent comparisons, all backbone models are implemented with an identical architectural configuration across different loss functions. The embedding dimension is fixed at 64 for all models. For \textbf{SASRec}, we adopt two self-attention blocks with a dropout rate of 0.5, while for \textbf{LightGCN}, a two-layer graph convolutional network is employed. All methods are trained on a single NVIDIA RTX-3090 GPU with 24GB of memory. 

All models and baselines(except for RG) are trained under aligned protocols to ensure fairness: 

- Optimizer: Adam~\citep{kingma2014adam} with batch size of 2048. 

- Learning rate: tuned over $\{1e^{-3}, 5e^{-4}, 1e^{-4}\}$ using validation N@20, results are reported in \ref{appdix:exp1}.

- For sampling-based approximations, we choose $k=100$ for comparision with exact methods and sweep the number of negative samples $k \in \{5,10,50,100\}$ in further discussions.

\subsection{Q1: Accuracy under aligned protocols}\label{appdix:exp1}

Tables~\ref{tab:mf_losses} and \ref{tab:sasrec_losses} report the results of different learning rate selections under MF 
and SASRec, respectively. Notice that RG utilizes ALS optimization, which need not fine-tuning on learning rate.

\begin{table*}[htbp]
  \centering
  \caption{MF performance under different learning rates. 
\textbf{Bold} denotes the best performance of a method under different lr. \textcolor{blue}{\textbf{Blue}} indicates the overall best performance under each data–metric pair.}

  \label{tab:mf_losses}
  \setlength{\tabcolsep}{4pt}
  \begin{tabular}{cc|ccc|ccc|ccc}
    \toprule
    \multirow{2}{*}{\textbf{Loss}} & \multirow{2}{*}{\textbf{Metric}} &
      \multicolumn{3}{c|}{\textbf{ML-1M}} &
      \multicolumn{3}{c|}{\textbf{Electronics}} &
      \multicolumn{3}{c}{\textbf{Gowalla}} \\
    \cmidrule(lr){3-5} \cmidrule(lr){6-8} \cmidrule(lr){9-11}
    & & lr=1e-3 & lr=5e-4 & lr=1e-4 & lr=1e-3 & lr=5e-4 & lr=1e-4 & lr=1e-3 & lr=5e-4 & lr=1e-4 \\
    \midrule
    \multirow{3}{*}{Softmax}
      & N@20 & \textbf{0.2661} & 0.2658 & 0.2482 & \textbf{0.0235} & 0.0231 & 0.0146 & 0.0947 & \textbf{0.0953} & 0.0571 \\
      & P@20 & \textbf{0.1465} & 0.1462 & 0.1391 & \textbf{0.0027} & 0.0027 & 0.0018 & 0.0188 & \textbf{0.0190} & 0.0122 \\
      & R@20 & \textbf{0.2937} & 0.2930 & 0.2596 & \textbf{0.0496} & 0.0494 & 0.0333 & \textbf{0.1716} & 0.1704 & 0.0970 \\
    \midrule
    \multirow{3}{*}{Sparsemax}
      & N@20 & 0.2353 & \textbf{0.2536} & 0.2463 & 0.0214 & \textbf{0.0240} & 0.0166 & 0.0782 & 0.0948 & \textbf{0.0992} \\
      & P@20 & 0.1228 & \textbf{0.1345} & 0.1344 & 0.0023 & \textbf{0.0027} & 0.0019 & 0.0150 & 0.0180 & \textbf{0.0203} \\
      & R@20 & 0.2620 & \textbf{0.2769} & 0.2651 & 0.0421 & \textbf{0.0491} & 0.0347 & 0.1375 & 0.1636 &\textbf{ 0.1739} \\
    \midrule
    \multirow{3}{*}{Entmax-1.5}
      & N@20 & \textbf{0.2643} & 0.2631 & 0.2557 & 0.0244 & {\textbf{0.0255}} & 0.0187 & 0.0704 & 0.0733 & \textbf{0.0759} \\
      & P@20 & 0.1410 & \textbf{0.1426} & 0.1383 & 0.0028 & {\textbf{0.0029}} & 0.0022 & 0.0148 & 0.0156 & \textbf{0.0161} \\
      & R@20 & \textbf{0.2897} & 0.2852 & 0.2778 & 0.0510 & {\textbf{0.0529}} & 0.0410 & 0.1232 & 0.1297 & \textbf{0.1353} \\
    \midrule
    \multirow{3}{*}{Rankmax}
      & N@20 & \textcolor{blue}{\textbf{0.2851}} & 0.2835 & 0.2621 & 0.0218 &   \textbf{0.0225} & 0.0145 & 0.1030 & \textcolor{blue}{\textbf{0.1031}} & 0.0654 \\
      & P@20 & \textcolor{blue}{\textbf{0.1543}} & 0.1535 & 0.1452 & 0.0025 &   \textbf{0.0026} & 0.0018 & 0.0206 & \textcolor{blue}{\textbf{0.0206}} & 0.0141 \\
      & R@20 & \textcolor{blue}{\textbf{0.3005}} & 0.2985 & 0.2710 & 0.0461 &  \textbf{0.0472} & 0.0331 & 0.1790 & \textcolor{blue}{\textbf{0.1794}} & 0.1132 \\
    \midrule
    \multirow{3}{*}{SSM}
      & N@20 & 0.2644 & \textbf{0.2645} & 0.2465 & 0.0236 &  \textbf{0.0246} & 0.0147 & \textbf{0.0848} & 0.0816 & 0.0539\\
      & P@20 & \textbf{0.1462} & 0.1461 & 0.1388 & 0.0028 &  \textbf{0.0028} & 0.0018 & \textbf{0.0173} & 0.0169 & 0.0116\\
      & R@20 & 0.2908 & \textbf{0.2910} & 0.2569 & 0.0509 &  \textbf{0.0525} & 0.0336 & \textbf{0.1522} & 0.1458 & 0.0955\\
    \midrule
    \multirow{3}{*}{NCE}
      & N@20 & \textbf{0.2423} & 0.2420 & 0.2227 & \textbf{0.0238} & 0.0232 & 0.0150 & \textbf{0.0518} & 0.0514 & 0.0451 \\
      & P@20 & \textbf{0.1374} & 0.1364 & 0.1280 & \textbf{0.0029} & 0.0028 & 0.0018 & \textbf{0.0110} & 0.0109 & 0.0094  \\
      & R@20 & 0.2661 & \textbf{0.2687} & 0.2283 &  \textbf{0.0528} & 0.0520 & 0.0339 & \textbf{0.0903} & 0.0899 & 0.0794  \\
      \midrule
        \multirow{3}{*}{HSM}
      & N@20 & 0.1120 & 0.1193 & \textbf{0.1202} & \textbf{0.0134} & 0.0129 & 0.0123 & \textbf{0.0512} & 0.0496 & 0.0464 \\
      & P@20 & 0.0739 & 0.0736 & \textbf{0.0742} & \textbf{0.0016} & 0.0015 & 0.0014 & \textbf{0.0114} & 0.0111 & 0.0104  \\
      & R@20 & \textbf{0.1497} & 0.1236 & 0.1249 & \textbf{0.0297} & 0.0283 & 0.0267 & \textbf{0.0938} & 0.0915 & 0.0845  \\
      \midrule
        \multirow{3}{*}{RG}
      & N@20 & \multicolumn{3}{c|}{\textbf{0.2785}} & \multicolumn{3}{c|}{\textcolor{blue}{\textbf{0.0274}}} & \multicolumn{3}{c}{\textbf{0.0894}} \\
      & P@20 & \multicolumn{3}{c|}{\textbf{0.1521}} & \multicolumn{3}{c|}{\textcolor{blue}{\textbf{0.0030}}} & \multicolumn{3}{c}{\textbf{0.0200}}  \\
      & R@20 & \multicolumn{3}{c|}{\textbf{0.2945}} & \multicolumn{3}{c|}{\textcolor{blue}{\textbf{0.0563}}} & \multicolumn{3}{c}{\textbf{0.1476}}  \\
    \bottomrule
  \end{tabular}
\end{table*}

\begin{table*}[htbp]
  \centering
  \caption{SASRec results under different learning rates. 
\textbf{Bold} denotes the best performance of a method under different lr. \textcolor{blue}{\textbf{Blue}} indicates the overall best performance under each data–metric pair.}
  \label{tab:sasrec_losses}
  \setlength{\tabcolsep}{4pt}
  \begin{tabular}{cc|ccc|ccc|ccc}
    \toprule
    \multirow{2}{*}{\textbf{Loss}} & \multirow{2}{*}{\textbf{Metric}} &
      \multicolumn{3}{c|}{\textbf{ML-1M}} &
      \multicolumn{3}{c|}{\textbf{Electronics}} &
      \multicolumn{3}{|c|}{\textbf{Gowalla}} \\
    \cmidrule(lr){3-5} \cmidrule(lr){6-8} \cmidrule(lr){9-11}
    & &lr=1e-3 & lr=5e-4 & lr=1e-4 & lr=1e-3 & lr=5e-4 & lr=1e-4 & lr=1e-3 & lr=5e-4 & lr=1e-4 \\
    \midrule
    \multirow{3}{*}{Softmax}
      & N@20 & \textbf{0.1647} & 0.1643 & 0.1630 & \textbf{0.0355} & 0.0351 & 0.0348 & 0.0459 & \textbf{0.0470} & 0.0462 \\
      & P@20 & \textbf{0.0180} & 0.0178 & 0.0177 &  \textcolor{blue}{\textbf{0.0036}} & 0.0036 & 0.0036 & 0.0048 & \textbf{0.0049} & 0.0049 \\
      & R@20 & \textbf{0.3599} & 0.3566 & 0.3546 & \textcolor{blue}{\textbf{0.0728}} & 0.0727 & 0.0723 & 0.0968 & \textbf{0.0988} & 0.0977\\
    \midrule
    \multirow{3}{*}{Sparsemax}
      & N@20 & 0.1480 & \textbf{0.1490} & 0.0229 & 0.0170 &   \textbf{0.0246} & 0.0117 & 0.0338 & 0.0347 & \textbf{0.0351}\\
      & P@20 & 0.0158 & \textbf{0.0160} & 0.0027 & 0.0020 &   \textbf{0.0027} & 0.0014 & 0.0037 & 0.0038 & \textbf{0.0038}\\
      & R@20 & 0.3165 & \textbf{0.3198} & 0.0542 & 0.0399 &   \textbf{0.0540} & 0.0276 & 0.0746 & 0.0754 & \textbf{0.0760}\\
    \midrule
    \multirow{3}{*}{Entmax-1.5}
      & N@20 & 0.0952 & 0.0932 & \textbf{0.0972} & 0.0171 & \textbf{0.0172} & 0.0167 & 0.0154 & \textbf{0.0154} & 0.0153\\
      & P@20 & 0.0111 & 0.0108 & \textbf{0.0113} & 0.0020 & \textbf{0.0020} & 0.0019 & 0.0017 & 0.0017 & \textbf{0.0017} \\
      & R@20 & 0.2229 & 0.2158 & \textbf{0.2264} & 0.0398 & \textbf{0.0407} & 0.0389 & 0.0337 & 0.0336 & \textbf{0.0338}\\
    \midrule
    \multirow{3}{*}{Rankmax}
      & N@20 & 0.1753  & \textcolor{blue}{\textbf{0.1789}} & 0.1729 & \textcolor{blue}{\textbf{0.0357}} & 0.0353 & 0.0344 & 0.0469 & 0.0467 & \textcolor{blue}{\textbf{0.0474}} \\
      & P@20 & 0.0185 & \textcolor{blue}{\textbf{0.0188}} & 0.0183 & \textbf{0.0036} & 0.0036 & 0.0035 & 0.0049 & 0.0049 & \textcolor{blue}{\textbf{0.0050}} \\
      & R@20 & 0.3695 & \textcolor{blue}{\textbf{0.3756}} & 0.3667 & \textbf{0.0721} & 0.0716 & 0.0694 & 0.0984 & 0.0983 & \textcolor{blue}{\textbf{0.0993}} \\
    \midrule
    \multirow{3}{*}{SSM}
      & N@20 & \textbf{0.1524} & 0.1501 & 0.1486 & 0.0251 & \textbf{0.0260} & 0.0134 &  0.0394 & \textbf{0.0398} & 0.0394 \\
      & P@20 & \textbf{0.0171 }& 0.0169 & 0.0166 & 0.0028 & \textbf{0.0029} & 0.0017 &  0.0044 & \textbf{0.0044} & 0.0044 \\
      & R@20 & \textbf{0.3428} & 0.3385 & 0.3319 & 0.0567 & \textbf{0.0576} & 0.0331 &  0.0872 & \textbf{0.0873} & 0.0873 \\
    \midrule
    \multirow{3}{*}{NCE}
      & N@20 & 0.1274 & \textbf{0.1325} & 0.1281 & 0.0173 & 0.0171 & \textbf{0.0175} & \textbf{0.0309} & 0.0304 & 0.0262 \\
      & P@20 & 0.0153 & \textbf{0.0158} & 0.0151 & 0.0020 & 0.0019 & \textbf{0.0020} & \textbf{0.0035} & 0.0034 & 0.0029 \\
      & R@20 & 0.3062 & \textbf{0.3158} & 0.3029 & 0.0394 & 0.0384 & \textbf{0.0399} & \textbf{0.0702} & 0.0684 & 0.0581 \\
    \bottomrule
  \end{tabular}
\end{table*}

The results reveal several consistent patterns regarding the behavior of different softmax-family losses under aligned training protocols. The key findings can be summarized as follows:

\begin{itemize}
  \item \emph{Findings 1.} Softmax supports larger learning rates compared to sparse alternatives. Across datasets, we observe that Softmax achieves its best performance at relatively large learning rates (e.g., $1\mathrm{e}{-3}$ for ML-1M and Electronics, $5\mathrm{e}{-4}$ for Gowalla), whereas sparse variants (Sparsemax, Entmax-1.5) require smaller step sizes to remain stable. This empirical evidence aligns closely with our theoretical analysis in the earlier sections: the smaller Jacobian spectral norm of Softmax leads to smoother optimization dynamics, thereby allowing more aggressive learning rates without sacrificing stability. Rankmax also shows a better tolerance than other WOP methods, which can be interpreted by its dynamic upper bound of spectral norm. Similar phenomena have also been reported in NLP, where sparse methods were found to require substantially smaller learning rates to achieve convergence~\citep{peters2019entmax, correia2019adaptively}.

  \item \emph{Findings 2.} Rankmax validates the importance of hard negatives.  Across datasets, Rankmax achieves top performance, especially on ML-1M and Gowalla, where the ranking metrics are clearly superior. This effectiveness can be attributed to its explicit emphasis on hard negative samples, which carry the most informative training signal. By amplifying the contribution of these challenging cases, Rankmax guides the model toward more discriminative representations. These observations are in line with our earlier discussion, where we argued that prioritizing hard negatives is crucial for overcoming the limitations of uniformly weighted objectives.

  \item \emph{Findings 3.} SSM outperforms NCE in stability.  
  Under same configurations, SSM consistently surpasses NCE. This advantage stems from the more stable bias introduced by SSM, whereas NCE suffers from an irreducible bias that destabilizes the optimization process. The superior reliability of SSM suggests that it provides a steadier and more effective learning signal across datasets.

  \item \emph{Findings 4.} Sparse methods face limitations in complex architectures.  
  Sparsemax and Entmax-1.5 can perform on par with or even better than Softmax under simple models like MF. However, when deployed in more expressive architectures like SASRec, their lossy compression of training signal becomes critical, leading to large performance drops. An additional experiment on LightGCN with ML-1M (Table~\ref{tab:lightgcn_ml1m}) further confirms this limitation: due to the LightGCN’s high sensitivity to negative signals, all sparse methods, even Rankmax, struggle to converge effectively and provide a huge performance gap. These findings support our conclusion that the WOP property induces an information compression effect that harms optimization in complex scenarios.

    \item \emph{Findings 5.} Non-sampling approximation methods diverge sharply in performance.  We observe that HSM performs poorly across all tasks. This deficiency is likely due to a combination of three factors: (1) the modeling mismatch between HSM and MF, which weakens its representation learning; (2) the non-consistency of its objective with the evaluation metrics, leading to an inherent optimization gap; and (3) the discrepancy between the predefined Huffman tree distribution and the target Softmax-MLE distribution, which introduces substantial bias. In contrast, RG avoids these pitfalls: its modeling design aligns closely with MF, its objective is consistent with the evaluation metrics, and its bias depends only on higher-order terms of the scoring vector, which has negligible impact in practical optimization. As a result, RG consistently demonstrates superior performance across all datasets, validating its effectiveness as a non-sampling approximation to Softmax.
\end{itemize}

\begin{table*}[htbp]
  \centering
  \caption{LightGCN results on ML-1M under different learning rates. \textbf{Bold} denotes the best performance of each method across learning rates.}
  \label{tab:lightgcn_ml1m}
  \setlength{\tabcolsep}{6pt}
  \begin{tabular}{c|ccc|ccc|ccc}
    \toprule
    \multirow{2}{*}{\textbf{Loss}} & 
    \multicolumn{3}{c|}{\textbf{N@20}} &
    \multicolumn{3}{c|}{\textbf{P@20}} &
    \multicolumn{3}{|c|}{\textbf{R@20}} \\
    \cmidrule(lr){2-4} \cmidrule(lr){5-7} \cmidrule(lr){8-10}
     & lr=1e-3 & lr=5e-4 & lr=1e-4 
     & lr=1e-3 & lr=5e-4 & lr=1e-4
     & lr=1e-3 & lr=5e-4 & lr=1e-4 \\
    \midrule
    Softmax   & \textbf{0.2279} & 0.2150 & 0.1268 & \textbf{0.1316} & 0.1239 & 0.0740 & \textbf{0.2488} & 0.2194 & 0.1281\\
    Sparsemax & \textbf{0.1695} & 0.1496 & 0.1226 & \textbf{0.0983} & 0.0881 & 0.0748 & \textbf{0.1755} & 0.1544 & 0.1278 \\
    Entmax-1.5& 0.1208 & 0.1207 & \textbf{0.1212} & 0.0731 & 0.0731 & \textbf{0.0734} & 0.1272 & 0.1272 & \textbf{0.1280} \\
    Rankmax   & 0.1208 & 0.1208 & \textbf{0.1212} & 0.0731 & 0.0732 & \textbf{0.0734} & 0.1273 & 0.1273 & \textbf{0.1277} \\
    SSM       & \textbf{0.2358} & 0.2155 & 0.1281 & \textbf{0.1339} & 0.1242 & 0.0738 & \textbf{0.2523} & 0.2234 & 0.1281 \\
    \bottomrule
  \end{tabular}
\end{table*}

Overall, these experiments demonstrate that Softmax tolerates larger learning rates, Rankmax leverages hard negatives to achieve strong performance, and SSM offers more stable optimization than NCE. In contrast, sparse alternatives such as Sparsemax and Entmax struggle in complex architectures due to lossy information compression. Furthermore, non-sampling approximation methods diverge sharply. Taken together, these five findings align closely with our theoretical analysis, highlighting clear trade-offs between smoothness, stability, and sparsity in normalization losses for recommendation.

\subsection{Q2: Bias-variance decomposition}\label{appdix:exp2}

Our second set of experiments directly examines the bias–variance trade-offs predicted by our theoretical analysis. The results are summarized in Tables~\ref{tab:mf_k_sweep}, \ref{tab:sasrec_k_sweep}, \ref{tab:mf_q_choice}, and \ref{tab:sasrec_q_choice}. We highlight two key findings:

\begin{table*}[htbp]
  \centering
  \caption{MF results under different negative sample sizes $k$. \textbf{Bold} denotes the best performance of each method across $k$ within each dataset.}
  \label{tab:mf_k_sweep}
  \resizebox{\textwidth}{!}{
  \begin{tabular}{c|cccc|cccc|cccc}
    \toprule
    \multirow{2}{*}{\textbf{Loss}} & 
    \multicolumn{4}{c|}{\textbf{N@20}} &
    \multicolumn{4}{c|}{\textbf{P@20}} &
    \multicolumn{4}{|c}{\textbf{R@20}} \\
    \cmidrule(lr){2-5} \cmidrule(lr){6-9} \cmidrule(lr){10-13}
     & $k{=}5$ & $k{=}10$ & $k{=}50$ & $k{=}100$
     & $k{=}5$ & $k{=}10$ & $k{=}50$ & $k{=}100$
     & $k{=}5$ & $k{=}10$ & $k{=}50$ & $k{=}100$ \\
    \midrule
    \multicolumn{13}{l}{\textbf{ML-1M}}\\
    \midrule
    SSM             & 0.2504 & 0.2554 & 0.2630 & \textbf{0.2644} & 0.1406 & 0.1430 & 0.1452 & \textbf{0.1462} & 0.2738 & 0.2821 & 0.2905 & \textbf{0.2908} \\
    NCE             & 0.2403 & 0.2410 & 0.2409 & \textbf{0.2423} & 0.1360 & 0.1360 & 0.1362 & \textbf{0.1374} & 0.2643 & 0.2650 & \textbf{0.2681} & 0.2661 \\
    \midrule
    \multicolumn{13}{l}{\textbf{Electronics}}\\
    \midrule
    SSM             & 0.0235 & 0.0237 & 0.0245 & \textbf{0.0246} & 0.0028 & 0.0028 & 0.0028 & \textbf{0.0028} & 0.0514 & 0.0516 & 0.0520 & \textbf{0.0525} \\
    NCE             & 0.0232 & 0.0235 & 0.0235 & \textbf{0.0238} & 0.0028 & 0.0028 & \textbf{0.0029} & 0.0029 & 0.0523 & 0.0525 & \textbf{0.0530} & 0.0528 \\
    \midrule
    \multicolumn{13}{l}{\textbf{Gowalla}}\\
    \midrule
    SSM             & 0.0611 & 0.0675 & 0.0806 & \textbf{0.0848} & 0.0129 & 0.0143 & 0.0167 & \textbf{0.0173} & 0.1074 & 0.1197 & 0.1450 & \textbf{0.1522} \\
    NCE             & 0.0514 & 0.0515 & 0.0516 & \textbf{0.0518} & 0.0109 & 0.0110 & 0.0109 & \textbf{0.0110} & 0.0893 & 0.0900 & 0.0899 & \textbf{0.0903} \\
    \bottomrule
  \end{tabular}}
\end{table*}

\begin{table*}[htbp]
  \centering
  \caption{SASRec results under different negative sample sizes $k$. \textbf{Bold} denotes the best performance of each method across $k$ within each dataset.}
  \label{tab:sasrec_k_sweep}
  \resizebox{\textwidth}{!}{
  \begin{tabular}{c|cccc|cccc|cccc}
    \toprule
    \multirow{2}{*}{\textbf{Loss}} & 
    \multicolumn{4}{c|}{\textbf{N@20}} &
    \multicolumn{4}{c|}{\textbf{P@20}} &
    \multicolumn{4}{|c}{\textbf{R@20}} \\
    \cmidrule(lr){2-5} \cmidrule(lr){6-9} \cmidrule(lr){10-13}
     & $k{=}5$ & $k{=}10$ & $k{=}50$ & $k{=}100$
     & $k{=}5$ & $k{=}10$ & $k{=}50$ & $k{=}100$
     & $k{=}5$ & $k{=}10$ & $k{=}50$ & $k{=}100$ \\
    \midrule
    \multicolumn{13}{l}{\textbf{ML-1M}}\\
    \midrule
    SSM             & 0.0179 & 0.1248 & 0.1400 & \textbf{0.1524} & 0.0024 & 0.0149 & 0.0160 &\textbf{ 0.0171} & 0.0480 & 0.2978 & 0.3203 & \textbf{0.3428} \\
    NCE             & 0.0174 & 0.1024 & 0.1291 & \textbf{0.1325} & 0.0024 & 0.0124 & 0.0153 & \textbf{0.0158} & 0.0474 & 0.2473 & 0.3064 & \textbf{0.3158} \\
    \midrule
    \multicolumn{13}{l}{\textbf{Electronics}}\\
    \midrule
    SSM             & 0.0148 & 0.0170 & 0.0231 & \textbf{0.0260} & 0.0016 & 0.0017 & 0.0024 & \textbf{0.0029} & 0.0320 & 0.0344 & 0.0485 & \textbf{0.0576} \\
    NCE             & 0.0157 & 0.0158 & \textbf{0.0214} & 0.0175 & 0.0017 & 0.0017 & \textbf{0.0023} & 0.0020 & 0.0340 & 0.0342 & \textbf{0.0464} & 0.0399 \\
    \midrule
    \multicolumn{13}{l}{\textbf{Gowalla}}\\
    \midrule
    SSM             & 0.0301 & 0.0308 & 0.0368 & \textbf{0.0398} & 0.0033 & 0.0034 & 0.0041 &\textbf{ 0.0044} & 0.0668 & 0.0689 & 0.0824 & \textbf{0.0873} \\
    NCE             & 0.0270 & 0.0261 & 0.0292 & \textbf{0.0309} & 0.0029 & 0.0028 & 0.0030 & \textbf{0.0035} & 0.0584 & 0.0561 & 0.0595 & \textbf{0.0702} \\
    \bottomrule
  \end{tabular}}
\end{table*}

\begin{itemize}
  \item \emph{Findings 6.} Increasing the negative sample size $k$ yields monotonic improvements for SSM but has negligible effect on NCE. As shown in Tables~\ref{tab:mf_k_sweep} and \ref{tab:sasrec_k_sweep}, SSM exhibits steady gains in N@20, P@20, and R@20 as $k$ grows, whereas NCE remains largely flat across different $k$. This observation aligns closely with our bias–variance decomposition: SSM’s bias decreases linearly with larger $k$, while NCE’s bias remains fixed and only its variance reduces, leading to limited overall improvement.
  
  \item \emph{Findings 7.} Sampling distributions that better approximate the Softmax-MLE distribution significantly improve both SSM and NCE. As reported in Tables~\ref{tab:mf_q_choice} and \ref{tab:sasrec_q_choice}, switching from uniform sampling to DNS consistently enhances all metrics across datasets and models. This finding is in line with our theoretical results showing that the bias of both methods depends on the divergence between the proposal distribution and the target Softmax-MLE, and that reducing this divergence directly translates into improved empirical performance.
\end{itemize}

\begin{table*}[htbp]
  \centering
  \caption{MF results under different proposal distributions $Q$ (DNS draws Top-$100$ from $k=500$ candidates). \textbf{Bold} denotes the best performance of each method across proposal distributions within each dataset.}
  \label{tab:mf_q_choice}
  \setlength{\tabcolsep}{6pt}
  \begin{tabular}{c|cc|cc|cc}
    \toprule
    \multirow{2}{*}{\textbf{Loss}} & 
    \multicolumn{2}{c|}{\textbf{N@20}} &
    \multicolumn{2}{c|}{\textbf{P@20}} &
    \multicolumn{2}{|c}{\textbf{R@20}} \\
    \cmidrule(lr){2-3} \cmidrule(lr){4-5} \cmidrule(lr){6-7}
     & Uniform & DNS 
     & Uniform & DNS
     & Uniform & DNS \\
    \midrule
    \multicolumn{7}{l}{\textbf{ML-1M}}\\
    \midrule
    SSM  & 0.2644 & \textbf{0.2739} & 0.1462 & \textbf{0.1478} & 0.2908 & \textbf{0.2991}\\
    NCE  & 0.2423 & \textbf{0.2769} & 0.1374 & \textbf{0.1498} & 0.2661 & \textbf{0.3002}\\
    \midrule
    \multicolumn{7}{l}{\textbf{Amazon-Electronics}}\\
    \midrule
    SSM & 0.0246 & \textbf{0.0255} & 0.0028 & \textbf{0.0029} & 0.0525 & \textbf{0.0536}\\
    NCE & 0.0238 & \textbf{0.0273} & 0.0029 & \textbf{0.0032} & 0.0528 & \textbf{0.0599} \\
    \midrule
    \multicolumn{7}{l}{\textbf{Gowalla}}\\
    \midrule
    SSM & 0.0848 & \textbf{0.0908} & 0.0173 & \textbf{0.0180} & 0.1522 & \textbf{0.1651} \\
    NCE & 0.0514 & \textbf{0.0599} & 0.0110 & \textbf{0.0129} & 0.0903 & \textbf{0.1048} \\
    \bottomrule
  \end{tabular}
\end{table*}

\begin{table*}[htbp]
  \centering
  \caption{SASRec results under different proposal distributions $Q$ (DNS draws Top-$100$ from $k=500$ candidates). \textbf{Bold} denotes the best performance of each method across proposal distributions within each dataset.}
  \label{tab:sasrec_q_choice}
  \setlength{\tabcolsep}{6pt}
  \begin{tabular}{c|cc|cc|cc}
    \toprule
    \multirow{2}{*}{\textbf{Loss}} & 
    \multicolumn{2}{c|}{\textbf{N@20}} &
    \multicolumn{2}{c|}{\textbf{P@20}} &
    \multicolumn{2}{|c}{\textbf{R@20}} \\
    \cmidrule(lr){2-3} \cmidrule(lr){4-5} \cmidrule(lr){6-7}
     & Uniform & DNS 
     & Uniform & DNS
     & Uniform & DNS \\
    \midrule
    \multicolumn{7}{l}{\textbf{ML-1M}}\\
    \midrule
    SSM  & 0.1524 & \textbf{0.1570} & 0.0171 & \textbf{0.0176} & 0.3428 & \textbf{0.3497}\\
    NCE  & 0.1325 & \textbf{0.1457} & 0.0158 & \textbf{0.0173} & 0.3158 & \textbf{0.3474}\\
    \midrule
    \multicolumn{7}{l}{\textbf{Amazon-Electronics}}\\
    \midrule
    SSM & 0.0260 & \textbf{0.0265} & 0.0029 & \textbf{0.0030} & 0.0576 & \textbf{0.0588}\\
    NCE & 0.0175 & \textbf{0.0201} & 0.0020 & \textbf{0.0024} & 0.0399 & \textbf{0.0459} \\
    \midrule
    \multicolumn{7}{l}{\textbf{Gowalla}}\\
    \midrule
    SSM & 0.0398 & \textbf{0.0418} & 0.0044 & \textbf{0.0045} & 0.0873 & \textbf{0.0895} \\
    NCE & 0.0309 & \textbf{0.0387} & 0.0035 & \textbf{0.0042} & 0.0702 & \textbf{0.0836} \\
    \bottomrule
  \end{tabular}
\end{table*}

Together, these findings further corroborate our theoretical framework: the empirical behaviors of SSM and NCE with respect to sample size $k$ and proposal distribution $Q$ precisely match the bias–variance decomposition established in Appendix.~\ref{appdix:delta}.

\subsection{Q3: Training dynamics and efficiency}\label{appdix:exp3}

\begin{figure*}[t]  
    \centering
    \includegraphics[width=1.0\linewidth]{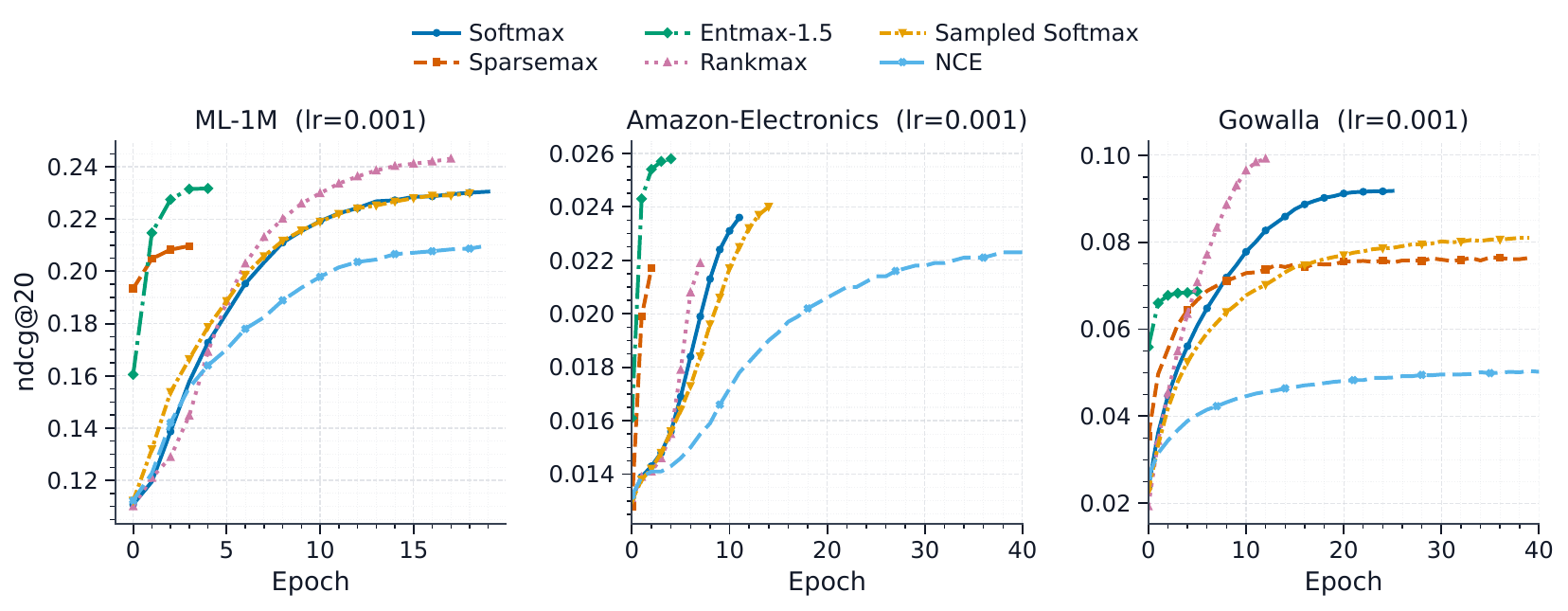}
    \includegraphics[width=1.0\linewidth]{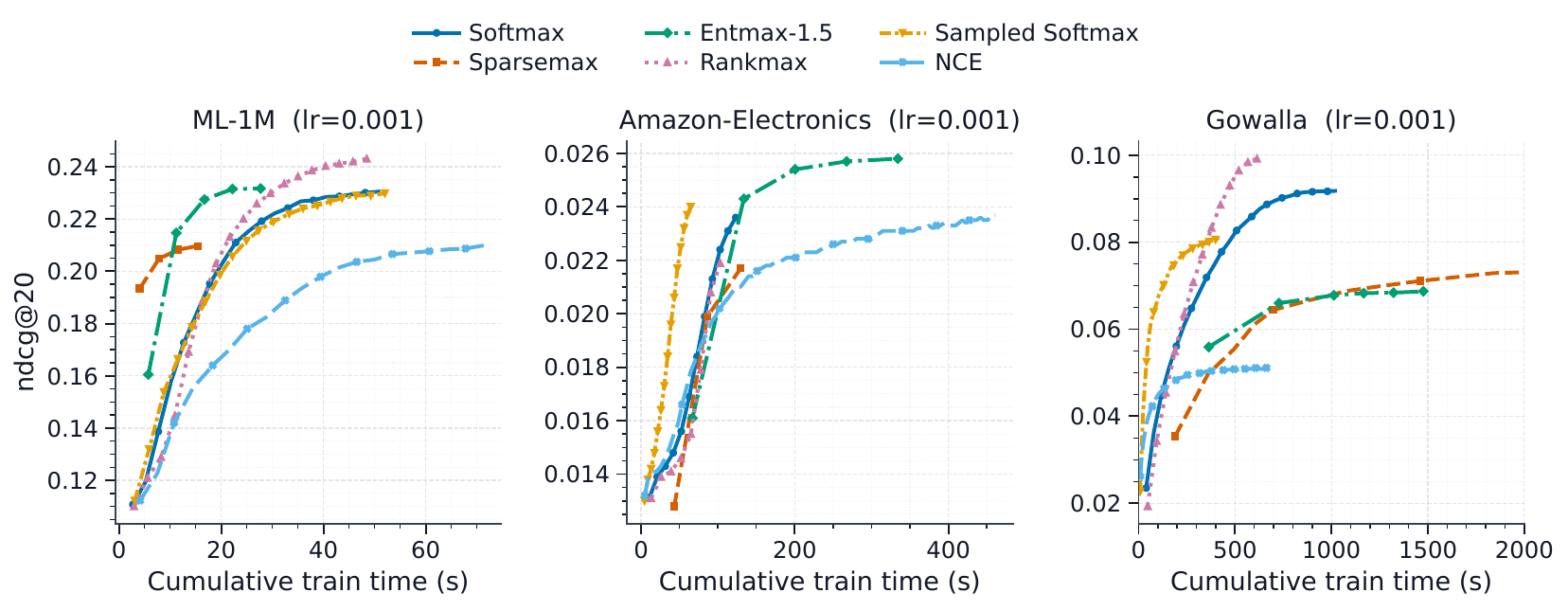}
    \caption{MF backbone: NDCG@20 curves with respect to (top) training epochs and (bottom) cumulative wall-clock time.}
    \label{fig:mf-epoch}
\end{figure*}

\begin{figure*}[t]  
    \centering
    \includegraphics[width=1.0\linewidth]{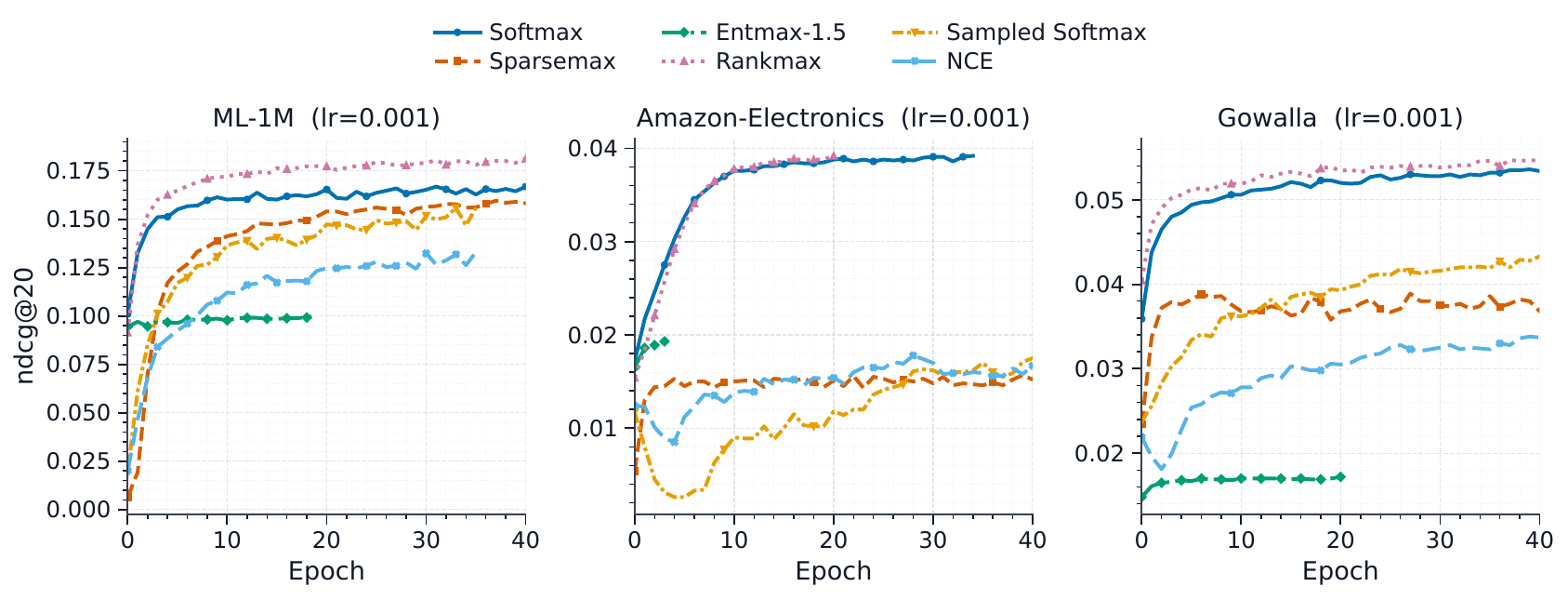}
    \includegraphics[width=1.0\linewidth]{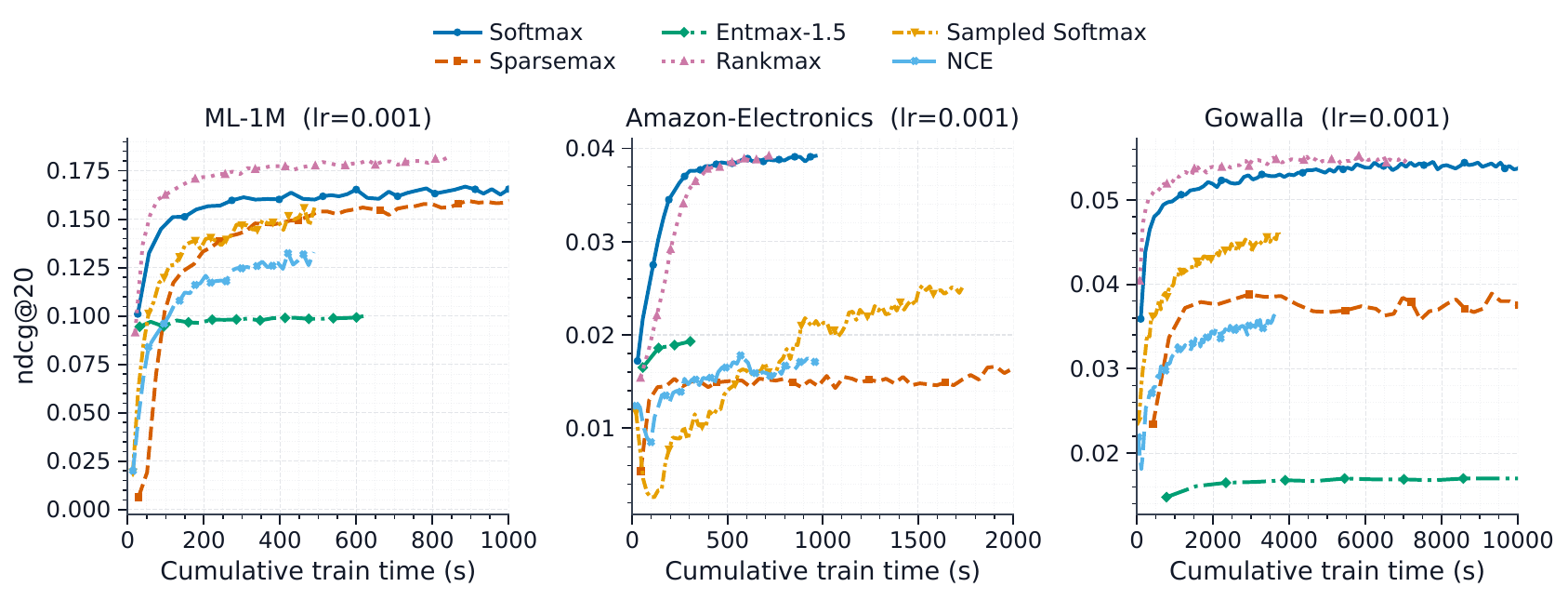}
    \caption{SASRec backbone: NDCG@20 curves with respect to (top) training epochs and (bottom) cumulative wall-clock time.}
    \label{fig:sasrec-epoch}
\end{figure*}

We further examine the optimization dynamics of different normalization losses by plotting NDCG@20 against both training epochs and cumulative wall-clock time. The results are shown in Figures~\ref{fig:mf-epoch} and \ref{fig:sasrec-epoch}. The findings are:

\begin{itemize}
  \item \emph{Findings 8.} Sparse methods converge in very few epochs under simple MF but lose this advantage in complex architectures of SASRec.  
  As illustrated by the NDCG@20-vs.-epoch curves in Figures~\ref{fig:mf-epoch} and \ref{fig:sasrec-epoch}, sparse variants such as Sparsemax and Entmax quickly concentrate their gradients on a small number of samples, leading to rapid convergence within a handful of epochs for MF. However, this property does not hold for more expressive models like SASRec, where the training dynamics become less favorable and convergence is significantly slower.
  
  \item \emph{Findings 9.} Sparse methods incur substantial computational overhead, while sampling-based methods sometimes achieve faster effective training.  
  The NDCG@20-vs.-time curves in Figures~\ref{fig:mf-epoch} and \ref{fig:sasrec-epoch} reveal that sparse methods demand heavy computation per iteration, severely hampering their efficiency in practice—this effect is particularly pronounced on large datasets such as Gowalla. By contrast, sampling-based approaches, although limited by a lower performance ceiling, may benefit from much faster iteration speed, and thus reach competitive results more quickly especially in MF backbone.
\end{itemize}

Together, these findings reinforce our theoretical analysis: sparse objectives compress training signals in ways that distort optimization dynamics, and their high per-iteration cost undermines their practical utility compared to lightweight sampling-based alternatives.

\subsection{Gradient-magnitude diagnostics}\label{appdix:exp4}

\begin{figure*}[t]  
    \centering
    \includegraphics[width=0.9\linewidth]{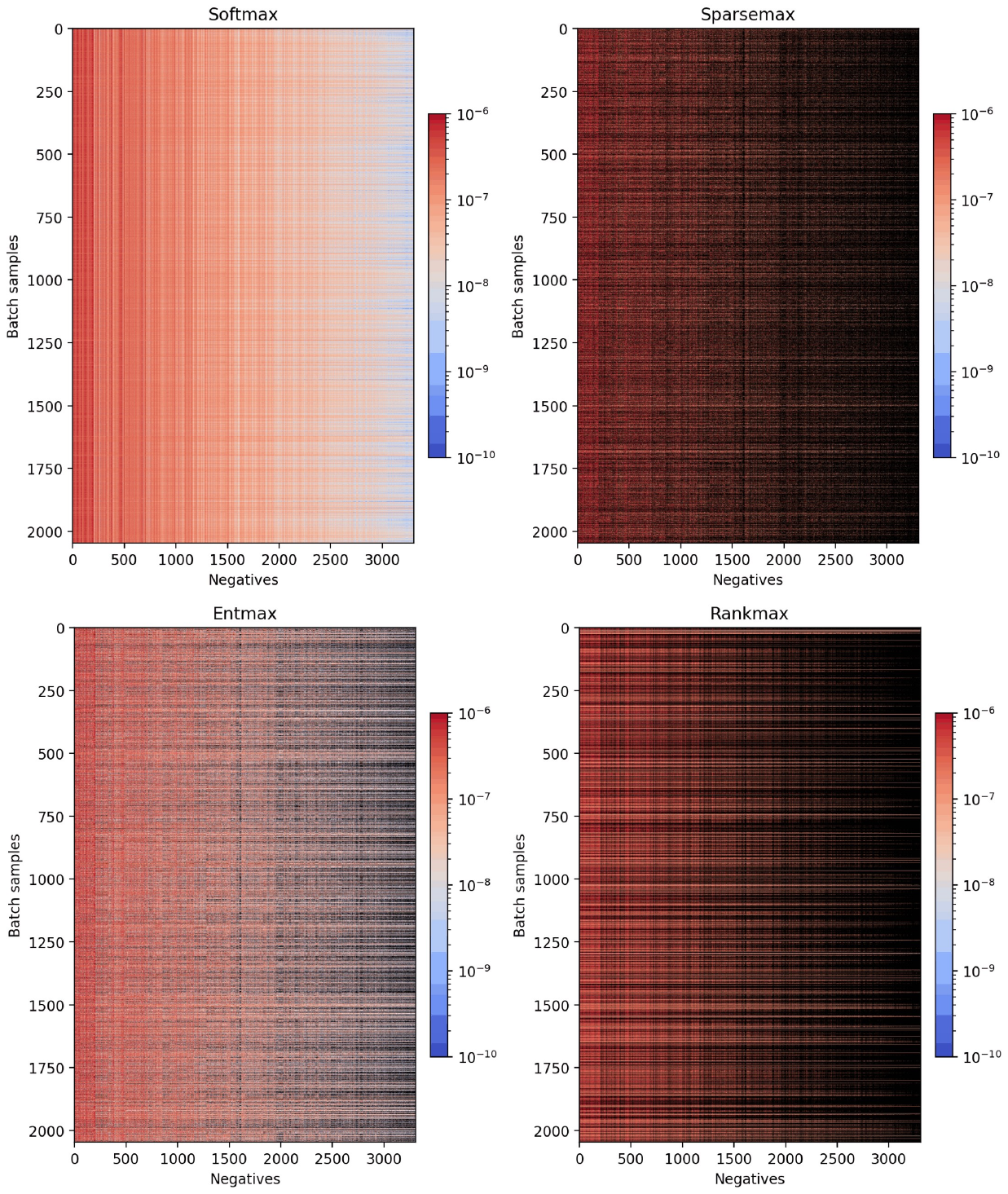}
    \caption{Heatmaps of the gradient (Jacobian) matrices for non-sampling normalization losses. 
    Black regions correspond to zero-valued entries.}
    \label{fig:grad-heatmap}
\end{figure*}

As shown in Figure~\ref{fig:grad-heatmap}, the gradients of WOP losses exhibit a striking sparsity pattern: 
large portions of the Jacobian are exactly zero (highlighted as black in the heatmaps). 
This empirical observation is consistent with our theoretical analysis in earlier sections, 
where we proved that the WOP property inevitably induces information compression by forcing many coordinates to vanish in the gradient. Such sparsity reduces the diversity of training signals available to the model, which in turn explains the limited optimization dynamics observed in our experiments.

\section{Discussions on Top-K Optimization}
\label{sec:discussion}

Based on the theoretical and empirical results, we provide a further discussion on two critical aspects regarding the practical optimization of Softmax-based losses: the challenges of bias correction in mini-batch sampling and the discrepancy between calibration and ranking metrics.

\noindent\textbf{Bias Correction and Compositional Optimization.} 
SSM may exhibit high curvature bias and variance when the negative sampling size is small. Recent work by \cite{qiu2022large} addresses a similar challenge in the mini-batch training process of NDCG-surrogates by applying compositional optimization to handle the Jensen-type bias. However, there is a structural distinction between the two scenarios: the method in \cite{qiu2022large} handles with an outer nonlinear function by maintaining a global statistic. In contrast, the bias of SSM sampling arises from the normalization term $Z(x) = \sum \exp(s_i)$, which varies dynamically with each query. Maintaining query-specific statistics is computationally infeasible at scale. Consequently, developing a small-sample asymptotically unbiased estimator for Softmax normalization remains a non-trivial open problem.

\noindent\textbf{Gap Between Top-$K$ Calibration and Ranking Metrics.}
Although Softmax loss is Top-$K$ calibrated, empirical results often show it under-performing advanced Top-$K$ ranking surrogates (e.g., \cite{jagerman2022optimizing, yang2025breaking}) on metrics like NDCG@$K$. This phenomenon may be explained by the misalignment between calibration targets and ranking objectives. 
Top-$K$ calibration ensures consistency with metrics such as Recall@$K$ and Precision@$K$ via multilabel reductions~\citep{menon2019multilabel}, while cutoffs on ranking metric like NDCG@$K$ shows a different behavior. Softmax loss satisfies consistency with a full-ranking metric NDCG(@$\infty$) at a global length~\citep{bruch2019analysis, yang2024psl}, yet specialized losses~\citep{jagerman2022optimizing, yang2025breaking} focus more on the cutoff of NDCG@$K$ metric, thereby achieving better empirical results on top-ranked items. Bridging the theoretical gap between these surrogate losses and position-sensitive metrics remains a promising direction for future research.

\end{document}